%% file: neurips_2026.tex
\documentclass{article}

\PassOptionsToPackage{numbers,sort&compress}{natbib}

\usepackage[preprint]{neurips_2026}

\usepackage{fontawesome}
\usepackage{float}
\usepackage[utf8]{inputenc}
\usepackage{mathtools}
\usepackage[T1]{fontenc}
\usepackage{hyperref}
\usepackage{url}
\usepackage{booktabs}
\usepackage{amsmath}
\usepackage{amssymb}
\usepackage{amsthm}
\usepackage{amsfonts}
\usepackage{algorithm}
\usepackage{algpseudocode}
\usepackage{nicefrac}
\usepackage{microtype}
\usepackage{xcolor}
\usepackage{graphicx}
\usepackage{cleveref}
\usepackage{array}
\usepackage{multirow}
\usepackage[most]{tcolorbox}
\usepackage{caption}
\usepackage{subcaption}
\usepackage{wrapfig}
\usepackage{thmtools,thm-restate}
\usepackage{booktabs}
\usepackage{tabularx}
\usepackage[table]{xcolor}
\usepackage{makecell}
\usepackage{tikz}
\usepackage{wrapfig}
\usepackage{adjustbox}

\usepackage{tikz}
\usepackage{pgfplots}
\pgfplotsset{compat=1.18}

\definecolor{espurple}{HTML}{8E44AD}
\definecolor{rougeorange}{HTML}{D95F02}
\definecolor{parablue}{HTML}{1F77B4}

\definecolor{takeawayblue}{HTML}{1F4E79}
\definecolor{takeawaylightblue}{HTML}{EEF6FC}
\definecolor{takeawayborder}{HTML}{8DB7D6}

\newtcolorbox{takeawaybox}[1][]{
  enhanced,
  colback=takeawaylightblue,
  colframe=takeawayblue,
  boxrule=0.65pt,
  arc=2.5pt,
  left=6pt,
  right=6pt,
  top=4pt,
  bottom=4pt,
  before skip=1.7em,
  after skip=0.65em,
  fontupper=\normalsize,
  overlay={
    \node[
      anchor=south west,
      fill=takeawayblue,
      text=white,
      rounded corners=1.8pt,
      inner xsep=6.5pt,
      inner ysep=2.6pt,
      font=\bfseries\normalsize
    ] at ([xshift=6pt,yshift=-4pt]frame.north west) {#1};
  }
}

\newtcolorbox{compactquestionbox}[1][]{
  enhanced,
  colback=white,
  colframe=black!45,
  boxrule=0.45pt,
  arc=2.5pt,
  left=6pt,
  right=6pt,
  top=4pt,
  bottom=4pt,
  before skip=0.45em,
  after skip=0.45em,
  width=0.95\columnwidth,
  center,
  fontupper=\normalsize,
  #1
}

\PassOptionsToPackage{numbers}{natbib}

\title{Fast Unlearning at Scale via Margin Self-Correction}

\author{%
Federico Di Gennaro\thanks{Equal contribution.} \ \thanks{Corresponding author: \texttt{figennaro@ethz.ch}}\\ ETH Zürich 
\And Alexander Shevchenko\footnotemark[1]\\ ETH Zürich 
\And Fanny Yang\\ ETH Zürich }

\newtheorem{definition}{Definition}
\newcommand{\Dforget}{\mathcal{D}_{\mathrm{fg}}}
\newcommand{\Dretain}{\mathcal{D}_{\mathrm{ret}}}
\newcommand{\Dval}{\mathcal{D}_{\mathrm{val}}}

\newcommand{\KL}{\mathrm{KL}}

\newcommand{\one}{\mathbf{1}}

\newcommand{\Lfg}{\mathcal{L}_{\mathrm{fg}}}
\newcommand{\Lret}{\mathcal{L}_{\mathrm{ret}}}

\usepackage{makecell}

\newcommand{\datasetsep}{\specialrule{1.2pt}{1.5pt}{1.5pt}}

\usepackage{minitoc}
\usepackage{etoc}

\doparttoc

\begin{document}

\faketableofcontents

\maketitle

\begin{abstract}
Language-model unlearning updates a trained model to behave as if it had not seen selected training examples, while preserving utility and avoiding costly retraining. Existing approaches typically fine-tune the pretrained model with a fixed training budget and select the final model \emph{afterwards} by evaluating several saved checkpoints on downstream validation data. Two sources of unnecessary computation limit scalability: training beyond the desired forget--retain trade-off, and checkpoint selection that requires extra storage and repeated evaluations.
To address these limitations, we introduce \emph{\underline{MA}rgin \underline{S}elf-\underline{C}orrection} (\underline{MASC}), an efficient unlearning method with an \emph{online} stopping rule that does not require downstream evaluation. Given a text sequence to be forgotten, MASC actively reduces the logit gap between the original next token and the most likely alternatives. It outputs a 
final model once this gap is small on average over a sufficiently large proportion of token positions across all forget sequences. On \texttt{TOFU}, \texttt{MUSE News}, and \texttt{MUSE Books}, MASC achieves a competitive forget--retain trade-off at a fraction of the computational cost of existing baselines. We further observe that as we increase model size (a.k.a. number of parameters), the trade-offs improve for both MASC and SimNPO -- the forget metrics remain comparable while retain utility increases.
\end{abstract}
\vspace{-0.2in}
\section{Introduction}
\label{sec: introduction}
\vspace{-0.1in}
Despite their remarkable success in tasks including code generation \cite{nam2024using,chen2021evaluating}, mathematical reasoning \cite{lewkowycz2022solving}, and scientific discovery \cite{boiko2023autonomous}, Large Language Models (LLMs) are prone to memorizing sensitive training data \cite{carlini2022quantifying}, including private information \cite{meeus2024did,staab2024beyond} and copyrighted content \cite{karamolegkou2023copyright}. This tendency poses significant safety and privacy risks, particularly as LLMs are nowadays increasingly deployed in high-stakes domains \cite{lee2020biobert,wu2023bloomberggpt}. These concerns are also reflected in legal frameworks such as the \emph{California Consumer Privacy Act} (CCPA) \cite{ccpa2018} and the European \emph{General Data Protection Regulation} (GDPR) \cite{hoofnagle2019european}, which establish rights to request the deletion of personal data, often referred to as the \emph{right to be forgotten}.

Machine unlearning \cite{bourtoule2021machine,cao2015towards,liu2025rethinking} provides a computational framework for such a goal. Given a trained model and a collection of examples to be forgotten, the aim is to return a new model that behaves as if those examples had never been used for training, while preserving its performance on the rest of the data. The gold standard is exact retraining, where the model is trained from scratch after removing the forgotten examples from the training corpus. While retraining gives the desired behavior \emph{by definition}, it is computationally prohibitive for modern language models. This motivates \emph{approximate} unlearning methods which fine-tune the existing model. 
Approximate unlearning, however, introduces a delicate forget--retain trade-off: weak procedures may leave the target content reproducible, whereas overly aggressive interventions may damage performance on unrelated data, reminiscent of the broader problem of catastrophic forgetting \cite{liu2025rethinking,luo2025empirical,mccloskey1989catastrophic,kirkpatrick2017overcoming}.
\begin{figure}[t]
\centering
\resizebox{\linewidth}{!}{\input{plots/runtimes.pgf}} \caption{\textbf{Top:} Wall-clock runtimes (in seconds) of methods with similar retain--forget trade-off. \textbf{Bottom}: Forget--retain trade-offs of timed methods. Each metric is in $[0,1]$ (the higher the better), and MASC is competitive (i.e. not Pareto dominated) with the others.} \vspace{-0.2in}
\label{fig:runtimes}
\end{figure}
Existing methods -- including Gradient Ascent (GA) \cite{yao2024large}, NPO \cite{zhang2024negative,fan2026simplicity}, and their regularized variants \cite{chen2023unlearn,maini2024tofu,shi2024muse} -- typically lack an \emph{online} (i.e., during training) model selection rule to identify the optimal (or desired) forget--retain trade-off without relying on costly downstream evaluations. Instead, these algorithms generally run for a predefined and fixed compute budget, which is both inefficient (cf. \Cref{fig:runtimes}) and agnostic to the actual unlearning dynamics. Consequently, practitioners are forced to select a final model retrospectively, evaluating all saved checkpoints only after training is complete. This leads to our first research question:

\begin{compactquestionbox}
\centering
\emph{\textbf{(Q1)} Can we design an efficient unlearning objective with an intrinsic stopping rule that offers a controllable stopping criterion for the forget–-retain trade-off?}
\end{compactquestionbox}

We introduce \textbf{MArgin Self-Correction (MASC)}, an unlearning method whose objective naturally admits an \emph{adaptive} stopping rule. MASC performs gradient updates on a retain-regularized loss that discourages drift from the original model on retain data, while correcting forget-set token predictions that remain too dominant. Each forget continuation is evaluated under teacher forcing: at every token position, MASC computes a \emph{restricted margin}, defined as the logit gap between the original next token (also called \emph{target token}) and a log-sum-exp aggregate of the logits of the model's top-\(k\) alternatives to the original next token. This margin measures how strongly the model still prefers the forgotten token over plausible replacements. MASC then returns the first checkpoint at which the margin condition is satisfied on a sufficiently large fraction of monitored forget-set tokens. We show that this token-level condition theoretically upper-bounds the probability of exactly reproducing the forgotten continuation (cf. \Cref{prop:masc-reproduction}). Thus, the returned checkpoint is selected online using the same condition optimized during unlearning, rather than by running a fixed training budget followed by downstream checkpoint evaluation. Empirically, this yields competitive forget--retain trade-offs at a fraction of the computational cost of existing baselines (see \Cref{fig:runtimes}).

This efficiency advantage is especially relevant at scale, where unlearning costs during both finetuning and evaluation can grow quickly with model size. Beyond computation, however, scale may also affect the forget--retain behavior itself. While prior work has studied how unlearning performance changes with the size of the deletion request \cite{maini2024tofu,shi2024muse}, the role of \emph{model size} remains under-explored. Larger models may internalize target information more strongly during supervised finetuning \cite{morris2025much,carlini2022quantifying,lu2024scaling}, and respond differently when that information is later removed. The second question we aim to answer is therefore: 

\begin{compactquestionbox}
\centering
\emph{\textbf{(Q2)} How does model scale influence knowledge acquisition during learning and its subsequent removal during unlearning?}
\end{compactquestionbox}

In this analysis, we distinguish between two levels of memorization: \emph{exact memorization} \cite{carlini2022quantifying,morris2025much,lu2024scaling}, where the model reproduces target content verbatim, and \emph{knowledge memorization}, where the model recovers the same underlying information under paraphrased prompts. During supervised finetuning, both metrics grow with model size and follow empirical power-law trends in log--log space, with a larger fitted exponent for exact memorization. This indicates that scale amplifies verbatim reproduction more strongly than paraphrase-based recovery. After unlearning, however, forget-side metrics become roughly stable across model sizes, while retain utility increases. This suggests that scale mainly improves the utility side of the post-unlearning trade-off, rather than systematically augmenting residual memorization of the forgotten content.

To summarize, our main contributions are:
\begin{itemize}
    \item We introduce \textbf{MASC}, an efficient unlearning method that suppresses target tokens only when they remain much more likely than an aggregate of the top-$k$ most likely alternatives. We demonstrate on \texttt{TOFU}, \texttt{MUSE News}, and \texttt{MUSE Books} that MASC achieves competitive trade-offs with substantially shorter wall-clock runtime.
    \item We provide a scaling study across the Qwen2.5 family, examining how scale affects different forms of memorization and how it benefits the final forget--retain frontier after unlearning (for both MASC and SimNPO \cite{fan2026simplicity}).
\end{itemize}

\textbf{Datasets.}
We evaluate MASC on three standard LLM unlearning benchmarks: \texttt{TOFU} \cite{maini2024tofu}, \texttt{MUSE News}, and \texttt{MUSE Books} \cite{shi2024muse}. \texttt{TOFU} is a synthetic question-answering benchmark based on fictitious biographies. We use its forget10/retain90 split, where 10\% of examples are assigned to the forget set and the remaining 90\% to the retain set. \texttt{MUSE} provides a more realistic setting based on memorized text from news articles (BBC) and books (Harry Potter series).\footnote{Code available at \href{https://github.com/FedericoDiGennaro/Fast-LLM-Unlearning-MarginSelfCorrection}{\color{orange}{\faGithub \ \url{FedericoDiGennaro/Fast-LLM-Unlearning-MarginSelfCorrection}}}}

\textbf{Notation.} For a finite set $\mathcal{S}$, we denote by \(\Delta(\mathcal{S})=\{p\in\mathbb{R}^{\mathcal{S}}_{+}:\sum_{s\in \mathcal{S}}p_s=1\}\) the probability simplex over it. If \(\mathcal{S}\subseteq\mathbb{R}^d\) and \(r\in\mathbb{N}\) is positive, we write \(S^r\) for the \(r\)-fold Cartesian product of \(\mathcal{S}\). Finally, for \(x\in\mathbb{R}\), we use \([x]_+=\max\{x,0\}\) to denote the positive part of \(x\). For an integer $T\in\mathbb N$, we denote $[T]$ as the set $\{1,...,T\}$.

\section{LLM unlearning and prior work}
\label{sec:LLM-unlearning}
This section introduces the notation for LLM unlearning and provides a non-exhaustive overview (see Appendix \ref{app:additional-related-work} for a more detailed discussion) of well-known unlearning methods that we later use as baselines.

Let \(\mathcal V\) denote the token vocabulary, and let \(\Delta(\mathcal V)\) be the probability simplex over \(\mathcal V\). Then, let \(\mathcal C=\bigcup_{\ell\geq 0}\mathcal V^\ell\) denote the set of finite token contexts. An autoregressive language model with parameters \(\theta\in\mathbb R^d\) is defined as a policy \(\pi_\theta:\mathcal C\to\Delta(\mathcal V)\), where \(\pi_\theta(\cdot\mid c) =\mathrm{softmax}(z_\theta(\cdot\mid c))\) is the next-token distribution over $\mathcal{V}$ given context \(c\in\mathcal{C}\), and \(z_\theta(\cdot\mid c)\) denotes the corresponding \emph{logits}. Given a sample consisting of a prompt \(x\in\mathcal X\) and a continuation \(y=(y_1,\dots,y_T)\in\mathcal V^T\), we evaluate a policy on \((x,y)\) using the probability it assigns to the full continuation, which factorizes as \(\pi_\theta(y\mid x)=\prod_{t=1}^{T}\pi_\theta(y_t\mid c_t)\), where \(c_t=(x,y_{<t})\). This corresponds to \emph{teacher-forced} evaluation: each next-token distribution is conditioned on the reference prefix \(c_t\), rather than on tokens sampled from the model. Evaluating \(\pi_\theta(y\mid x)\) therefore only requires standard forward passes on the given prompt-continuation pair.

Let \(\Dforget\) denote the data\footnote{With a slight abuse of notation, we also use \(\Dforget\) and \(\Dretain\) to denote the empirical distributions obtained by sampling uniformly from the corresponding finite datasets.} to be forgotten, and \(\Dretain\) the data on which the model's behavior should be preserved. An LLM unlearning algorithm \(\mathcal A\) takes as input the weights \(\theta_0\in\mathbb{R}^d\) of a pretrained model together with \(\Dforget\) (and usually also $\Dretain$), and returns updated weights \(\theta_{\mathrm{unl}}\in\mathbb{R}^d\). The goal is for the resulting policy \(\pi_{\theta_{\mathrm{unl}}}\) to behave as if \(\Dforget\) had not been used for training, while preserving performance on \(\Dretain\). The key computational challenge is to avoid retraining the model from scratch. A common approach is to minimize a \emph{forget} loss \(\Lfg(\theta;\Dforget)\). Different unlearning methods correspond to different choices of such a loss, often combined with additional regularization to preserve retain-set behavior. Arguably, the most natural choice for \(\Lfg\) is
\begin{equation}
\label{eq:ga-loss}
\Lfg^{\mathrm{GA}}(\theta;\Dforget)
=
\mathbb{E}_{(x,y)\sim \Dforget}
\left[
\log \pi_\theta(y\mid x)
\right],
\end{equation}
which penalizes policies that assign a high probability to the forget continuations. Equivalently, since standard language-model pretraining minimizes next-token cross-entropy, minimizing \(\Lfg^{\mathrm{GA}}\) by gradient descent performs gradient ascent on the original cross-entropy objective restricted to \(\Dforget\). The resulting update \emph{reverses} likelihood-based learning on the forget data, and is therefore commonly referred to as Gradient Ascent (GA) unlearning. However, GA provides no intrinsic mechanism for stopping this likelihood decrease: continued optimization can keep lowering the probability of the forget continuation and may quickly degrade the model's behavior beyond the forget set. Negative Preference Optimization (NPO) \cite{zhang2024negative} addresses this issue by replacing direct likelihood minimization with a bounded preference-style objective \cite{rafailov2023direct}. Rather than indefinitely pushing down the likelihood of the forget-set continuations, NPO treats it as a negative preference example relative to the original model. In particular,
\begin{equation}
\label{eq:npo-loss}
\Lfg^{\mathrm{NPO}}(\theta;\Dforget)
    =
    -
    \frac{2}{\beta}
    \mathbb{E}_{(x,y)\sim \Dforget}
    \left[
        \log \sigma\!\left(
            -\beta \log \frac{\pi_\theta(y\mid x)}{\pi_{\theta_0}(y\mid x)}
        \right)
    \right],
\end{equation}
where \(\beta>0\) is an inverse-temperature parameter and \(\sigma(u)=(1+e^{-u})^{-1}\) is the sigmoid function. Unlike GA, NPO weakens the forget update once the forget continuation is already much less likely under the current model than under the original one. Indeed, if \(r_\theta=\log \frac{\pi_\theta(y\mid x)}{\pi_{\theta_0}(y\mid x)}\), then the gradient scales as \(\sigma(\beta r_\theta)\), which vanishes for large negative \(r_\theta\). However, because NPO compares the current likelihood to the original model likelihood \(\pi_{\theta_0}(y\mid x)\), the magnitude of the forget update depends on the reference-model score of each example, and can therefore vary with sequence length or reference likelihood. SimNPO \cite{fan2026simplicity} removes this dependence by using a reference-free, length-normalized variant of the NPO objective. Although \Cref{eq:ga-loss,eq:npo-loss} define common forget-side objectives, practical unlearning methods typically combine them with a retain regularizer to improve the forget--retain trade-off. This leads to objectives of the form
\begin{equation}
\label{eq:joint}
    \min_{\theta}
    \quad
    \Lfg(\theta;\Dforget)
    +
    \lambda_{\mathrm{ret}}
    \Lret(\theta;\Dretain,\theta_0),
\end{equation}
where \(\Lfg\) encourages suppression of the forget data, while \(\Lret\) (computed on \(\Dretain\)) discourages unnecessary drift from the original model \(\pi_{\theta_0}\). The retain term is typically implemented as a KL penalty relative to \(\pi_{\theta_0}\) or as a cross-entropy loss on retained examples.

The main limitation of the above forget losses is that they only specify what should be made less likely, not what the model should do instead. The probability \(\pi_\theta(y\mid x)\) can be decreased by concentrating mass on a few alternative continuations, spreading mass broadly, or degrading the next-token distribution more generally. Since these sequence-level objectives do not specify what should happen at each next-token prediction, they also provide no direct criterion for deciding when an original forget-set token has become sufficiently non-dominant relative to its alternatives. Further, although these objectives provide strong and widely used baselines, the computational cost of unlearning can be substantial. In practice, one must either fix the unlearning budget in advance (i.e., the number of finetuning epochs), or periodically evaluate intermediate models to decide which one should be returned. The latter requires downstream forget--retain validation data, since the checkpoint is selected using external metrics rather than a condition monitored during training. Saving many checkpoints and selecting among them after training further adds storage and evaluation overhead. To address these limitations, we introduce MASC, an unlearning objective based on a token-level dominance condition: a forget token should no longer dominate the model's top-\(k\) non-target alternatives under the same reference prefix. This condition defines both the forget loss and the online stopping rule: MASC stops once it is satisfied on a sufficiently large fraction of monitored forget-set tokens, allowing the returned checkpoint to be selected during training without downstream validation.

\section{MASC: MArgin Self-Correction}
\label{sec:method}

This section introduces MASC (MArgin Self-Correction) and derives its objective from first principles. MASC is based on a simple observation: exact reproduction of a forget sequence requires many positions at which the model assigns high probability to the true next token when evaluated under \emph{teacher forcing}, i.e., when conditioned on the true prefix. We now turn this observation into a token-level comparison that will define both the forget loss and the stopping rule.

\subsection{Token dominance and margins}
We first define token-dominance measures, which we use in our unlearning algorithm. Intuitively, on the forget set, we want to lower the probability of the true next-token continuation given the true prefix, while preserving overall utility. Our approach therefore reduces the dominance of the highest-probability token relative to its nearest alternatives, without substantially altering the rest of the distribution.
\paragraph{Restricted token comparison.}
Let \((x,y)\sim\Dforget\) be a forget example, where \(x\in\mathcal X\) and \(y=(y_1,\ldots,y_T)\in\mathcal V^T\). For each position \(t\in[T]\) of the sequence, the teacher-forced context is \(c_t=(x,y_{<t})\) and \(y_t\) is called \emph{target} token. Recall that $\pi_\theta(\cdot\mid c_t)$ denotes the next-token distribution over $\mathcal{V}$, and \(z_\theta(v\mid c_t)\) is the logit of each token \(v\in\mathcal V\). Rather than comparing \(y_t\) to the full vocabulary, we focus on the set of the model's top-\(k\) non-target alternative tokens, denoted as
\begin{equation}
\mathcal{S}_{\theta,k}(c_t) =
\arg\max_{\substack{\mathcal{S}\subseteq\mathcal V\setminus\{y_t\}\\ |\mathcal{S}|=k}}
\sum_{v\in \mathcal{S}}\pi_\theta(v\mid c_t),
\label{eq:topk-set}
\end{equation}
with ties broken arbitrarily.\footnote{We do not differentiate through the top-\(k\) operation; gradients are taken only through the logits appearing in the loss.}
For \(\beta>0\), further define the \emph{restricted} probability\footnote{For \(\beta=1\), this is exactly the probability obtained by restricting \(\pi_\theta(\cdot\mid c_t)\) to \(\{y_t\}\cup \mathcal{S}_{\theta,k}(c_t)\) and renormalizing.} of the target token $y_t$ 
\begin{equation}
    \pi_{\theta}^{(k,\beta)}(y_t\mid c_t)
    =
    \frac{\exp(\beta z_\theta(y_t\mid c_t))}
    {\exp(\beta z_\theta(y_t\mid c_t))
    +
    \sum_{v\in \mathcal{S}_{\theta,k}(c_t)}
    \exp(\beta z_\theta(v\mid c_t))}.
    \label{eq:restricted-probability}
\end{equation}
This restricted probability can be interpreted as a measure of local dominance and gives rise to a natural constraint
\begin{definition}
\label{def:restriced-suppression}
    For a threshold \(\rho\in(0,1)\), we say that the target token $y_t$ is locally suppressed in context \(c_t\) if $\pi_{\theta}^{(k,\beta)}(y_t\mid c_t)\leq \rho $.
\end{definition}
MASC uses this local-dominance measure as a constraint and hence implicitly asks: is the forget token still preferred over plausible replacements? Proposition~\ref{prop:masc-reproduction} shows that controlling this \emph{local dominance} on many positions is enough to control exact reproduction of the whole continuation.

\begin{restatable}{proposition}{propboundexactrepro}
\label{prop:masc-reproduction}
Consider a forget prompt $x$ and the corresponding continuation \(y=(y_1,\ldots,y_T)\), evaluated under teacher forcing. Let \(c_t=(x,y_{<t})\), and let \(\beta=1\). Assume there exists a set \(I\subseteq\{1,\ldots,T\}\) with \(|I|\geq \lceil(1-\alpha)T\rceil\) such that \(\pi_{\theta}^{(k,1)}(y_t\mid c_t)\leq\rho\) for every \(t\in I\). Then
\[
    \pi_\theta(y\mid x)
    =
    \prod_{t=1}^{T}\pi_\theta(y_t\mid c_t)
    \leq
    \rho^{\lceil(1-\alpha)T\rceil}.
\]
\end{restatable}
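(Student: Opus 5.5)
The proof compares, at each position, the full softmax probability with the restricted probability, and then multiplies the resulting bounds over the positions in $I$. The key pointwise claim is that for every $t\in[T]$ and $\beta=1$,
\[
\pi_\theta(y_t\mid c_t)\le \pi_\theta^{(k,1)}(y_t\mid c_t).
\]
To see this, write $a=\exp(z_\theta(y_t\mid c_t))$, let $b=\sum_{v\in\mathcal S_{\theta,k}(c_t)}\exp(z_\theta(v\mid c_t))$, and let $Z=\sum_{v\in\mathcal V}\exp(z_\theta(v\mid c_t))$. Since $\{y_t\}\cup\mathcal S_{\theta,k}(c_t)\subseteq\mathcal V$ and every exponential is positive, we have $Z\ge a+b$. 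The common softmax normalizer cancels, so
\[
\pi_\theta(y_t\mid c_t)=\frac{a}{Z}\le\frac{a}{a+b}=\pi_\theta^{(k,1)}(y_t\mid c_t).
\]
This is exactly the footnote's remark that the restricted probability renormalizes over a subset of the vocabulary. Note that the top-$k$ selection in \eqref{eq:topk-set} is not needed here; any subset of $\mathcal V\setminus\{y_t\}$ would give the same inequality.

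Next, use the teacher-forced factorization $\pi_\theta(y\mid x)=\prod_{t=1}^T\pi_\theta(y_t\mid c_t)$. Split the product into the positions in $I$ and the positions outside $I$. For $t\notin I$, bound each factor by $1$, since it is a probability. For $t\in I$, combine the pointwise claim with the hypothesis to get $\pi_\theta(y_t\mid c_t)\le\rho$. All factors are nonnegative, so the product is at most $\rho^{|I|}$.

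Finally, since $\rho\in(0,1)$, the map $m\mapsto\rho^m$ is nonincreasing. Together with $|I|\ge\lceil(1-\alpha)T\rceil$, this gives $\rho^{|I|}\le\rho^{\lceil(1-\alpha)T\rceil}$, which is the claimed bound.

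The only genuine step is the pointwise comparison between $\pi_\theta(y_t\mid c_t)$ and $\pi_\theta^{(k,1)}(y_t\mid c_t)$. It relies on $\beta=1$, so that the restricted probability uses the same logits as the softmax. For $\beta\neq1$ this comparison can fail, which explains why the statement fixes $\beta=1$. Everything else is monotonicity.
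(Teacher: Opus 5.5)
Your proof is correct and follows the same route as the paper's: at $\beta=1$ the restricted probability drops nonnegative terms from the softmax denominator, so $\pi_\theta(y_t\mid c_t)\le\pi_\theta^{(k,1)}(y_t\mid c_t)\le\rho$ on $I$. You then bound the factors outside $I$ by $1$ and finish with the monotonicity of $m\mapsto\rho^m$ for $\rho\in(0,1)$.
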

The proof of \Cref{prop:masc-reproduction} is deferred to Appendix \ref{app:proofs}. The proposition shows that enforcing the condition of \Cref{def:restriced-suppression} on many positions $t \in [T]$ gives a bound on the probability of exactly reproducing the forgotten continuation \(y\) from prompt \(x\). 

\textbf{Remark.} The parameter \(k\) does not appear in the bound because the proposition is conditional on the local constraint being satisfied: for any $k<|\mathcal{V}|$, once \(\pi_{\theta}^{(k,1)}(y_t\mid c_t)\leq\rho\), the full-vocabulary probability also satisfies \(\pi_\theta(y_t\mid c_t)\leq\rho\). Thus, \(k\) only determines how stringent the local comparison is, while the sequence-level bound depends on \(\rho\) and on the number of controlled positions. We defer the discussion of how \(k\) affects the optimization of the proposed algorithm to \Cref{app-subsec:size-k}.

\begin{takeawaybox}[Takeaway 1]
Controlling \emph{local} target-token dominance on many positions gives an upper bound on the probability of reproducing the exact continuation.
\end{takeawaybox}

\paragraph{From probabilities to margins.}
\Cref{prop:masc-reproduction} gives an upper bound on exact reproduction when the condition in \Cref{def:restriced-suppression} is met for many forget tokens in the sequence. A direct surrogate would penalize violations with \([\pi_{\theta}^{(k,\beta)}(y_t\mid c_t)-\rho]_+\). However, this probability-space penalty can have weak gradients when the observed token already dominates the restricted set. In that regime, the softmax probability is close to one, so even large changes in the underlying logit margin produce only small changes in the penalized quantity. 
We therefore propose an alternative logit-space condition based on the following \emph{restricted margin}
\begin{equation}
    m_{\theta}^{(k,\beta)}(x,y,t)
    =
    \beta z_\theta(y_t\mid c_t)
    -
    \log
    \sum_{v\in \mathcal{S}_{\theta,k}(c_t)}
    \exp(\beta z_\theta(v\mid c_t)).
    \label{eq:masc-margin}
\end{equation}
The margin in \eqref{eq:masc-margin} compares the target-token logit with the log-sum-exp\footnote{The log-sum-exp term is a smooth approximation of the maximum alternative logit. Larger \(\beta\) makes the margin closer to a gap against the strongest alternative, while smaller \(\beta\) averages more broadly over the top-\(k\) alternatives.} aggregate of the selected alternative logits. Large margins correspond to strong target-token preference; small margins correspond to competitive alternatives.  
Moreover, because this quantity is defined directly on the logits, increasing dominance of the observed token continues to increase the violation linearly.
In addition, the following lemma establishes that imposing a threshold on the restricted probability as in \Cref{def:restriced-suppression} is equivalent to imposing a corresponding threshold on the margin of \Cref{eq:masc-margin}.

\begin{restatable}{lemma}{lemmaequivalence}
\label{lem:restricted-prob-margin-equivalence}
Fix \(\rho\in(0,1)\) and define \(\tau_\rho=\log(\rho/(1-\rho))\). Then, for any forget position \(t\) and any \(\beta>0\), \(\pi_{\theta}^{(k,\beta)}(y_t\mid c_t)\leq \rho\) if and only if \(m_{\theta}^{(k,\beta)}(x,y,t)\leq \tau_\rho\).
\end{restatable}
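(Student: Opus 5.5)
The plan is to rewrite the restricted probability in \eqref{eq:restricted-probability} as a logistic function of the restricted margin \eqref{eq:masc-margin}, and then use that the logistic function is a strictly increasing bijection onto \((0,1)\), whose inverse is the logit map. Throughout, fix \(t\), \(\beta>0\) and \(k\geq 1\), and abbreviate
\[
a=\beta z_\theta(y_t\mid c_t), \qquad
b=\log\sum_{v\in\mathcal{S}_{\theta,k}(c_t)}\exp\bigl(\beta z_\theta(v\mid c_t)\bigr).
\]
Since \(\mathcal{S}_{\theta,k}(c_t)\) is nonempty and logits are finite, \(b\) is a well-defined real number, and by definition \(m_{\theta}^{(k,\beta)}(x,y,t)=a-b\).

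\emph{First step.} Divide numerator and denominator of \eqref{eq:restricted-probability} by \(e^{a}\):
\[
\pi_{\theta}^{(k,\beta)}(y_t\mid c_t)=\frac{e^{a}}{e^{a}+e^{b}}=\frac{1}{1+e^{-(a-b)}}=\sigma\bigl(m_{\theta}^{(k,\beta)}(x,y,t)\bigr),
\]
where \(\sigma\) is the sigmoid already introduced in the NPO discussion. The top-\(k\) set is whatever set the model selects at the current \(\theta\), and the same set enters both quantities. So this identity holds pointwise, and tie-breaking in \eqref{eq:topk-set} plays no role.

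\emph{Second step.} The sigmoid \(\sigma:\mathbb{R}\to(0,1)\) is continuous and strictly increasing, since \(\sigma'=\sigma(1-\sigma)>0\). Its inverse is \(\sigma^{-1}(\rho)=\log(\rho/(1-\rho))=\tau_\rho\) for \(\rho\in(0,1)\). Because a strictly increasing bijection preserves order in both directions, \(\sigma(m)\le\rho=\sigma(\tau_\rho)\) holds if and only if \(m\le\tau_\rho\). Combined with the first step, this gives the claimed equivalence.

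There is no real obstacle here; the only care needed is in the first step. One must check that \(\beta\) multiplies the logits in both the numerator and the log-sum-exp term, so that the same \(\beta\)-scaled quantities appear in \(\pi_{\theta}^{(k,\beta)}\) and in \(m_{\theta}^{(k,\beta)}\). One must also note that \(\rho\in(0,1)\) is what makes \(\tau_\rho\) finite, so that \(\sigma^{-1}\) is applicable.
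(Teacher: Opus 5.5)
Your proof is correct and follows the paper's argument: the paper likewise writes \(\pi_{\theta}^{(k,\beta)}(y_t\mid c_t)=\sigma\bigl(m_{\theta}^{(k,\beta)}(x,y,t)\bigr)\) and concludes by monotonicity of \(\sigma\) together with \(\sigma^{-1}(\rho)=\tau_\rho\). Dividing numerator and denominator by \(e^{a}\) simply spells out the sigmoid identity, which the paper asserts without computation.
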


\subsection{MASC: Unlearning with Margin Self-Correction}
We are now ready to propose our new unlearning method, MASC, that returns a policy whose restricted margin condition is violated on at most an $\alpha$ fraction of the forget-set tokens. Specifically, the algorithm uses the average $V_\rho$ of the \emph{per-example violation rate} $v_\rho$ over the forget set
\begin{equation}
\label{eq:violation-rate}
v_\rho(\theta;x,y)=\frac{1}{T}\sum_{t=1}^T
\one\{m_{\theta}^{(k,\beta)}(x,y,t)>\tau_\rho\}, \quad
V_\rho(\theta;\Dforget)=
\mathbb{E}_{(x,y)\sim\Dforget}\left[v_\rho(\theta;x,y)\right].
\end{equation}
In words, \(V_\rho\) corresponds to the fraction of forget tokens whose restricted target probability remains above \(\rho\). Our proposed unlearning objective then aims to find models that satisfy a forget constraint of the form \(V_\rho(\theta;\Dforget)\leq\alpha\) for some \(\alpha\in[0,1]\), while behaving similarly to the original model on retain data. In particular, motivated by recent evidence that policy-level KL divergence is closely tied to forgetting dynamics \cite{shenfeld2026rls}, we choose to minimize the KL divergence (averaged over retain continuations) between \(\pi_{\theta_0}\) and the currently optimized policy $\pi_\theta$. All in all, we aim to solve
\begin{equation}
    \min_\theta
    \quad
    \underbrace{\mathbb{E}_{(x,y)\sim\Dretain}
    \left[
    \frac{1}{T}
    \sum_{t=1}^{T}
    \KL\!\left(
        \pi_{\theta_0}(\cdot\mid x,y_{<t})
        \,\middle\|\,
        \pi_\theta(\cdot\mid x,y_{<t})
    \right)
    \right]}_{\Lret^{\KL}(\theta,\theta_0)}
    \quad
    \text{s.t.}
    \quad
    V_\rho(\theta;\Dforget)\leq\alpha .
    \label{eq:masc-constrained}
\end{equation}
In order to solve this optimization problem with gradient-based algorithms, we first replace the indicator in \eqref{eq:violation-rate} with the hinge loss \(\psi_{\rho,\eta}(m)=[m-(\tau_\rho-\eta)]_+/\eta\) as a surrogate loss that satisfies \(\one\{m>\tau_\rho\}\leq\psi_{\rho,\eta}(m)\). The final MASC algorithm then minimizes the following Lagrangian objective with gradient descent:
\begin{equation}
\min_\theta \Lret^{\KL}(\theta,\theta_0) + \lambda \Lfg^{\mathrm{MASC}}(\theta) \:\:\text{where}\:\: \Lfg^{\mathrm{MASC}}(\theta)
    =
    \mathbb{E}_{(x,y)\sim\Dforget}
    \left[
    \frac{1}{T}
    \sum_{t=1}^{T}
    \psi_{\rho,\eta}\!\left(m_{\theta}^{(k,\beta)}(x,y,t)\right)
    \right].
    \label{eq:masc-forget-loss}
\end{equation}

\begin{figure}[ht] 
    \centering 
    \resizebox{\linewidth}{!}{\input{plots/tofu_stopping.pgf}} \caption{\textbf{(Left:)} Correlation between stopping statistic $\hat V_\rho$ and forget--retain trade-off. \textbf{(Middle:)} Pareto frontier (top-right is the ideal) of MASC training dynamics. \textbf{(Right:)} $\hat V_\rho$ versus number of forget tokens seen during training for three different unlearning methods. [Plots refer to TOFU].} 
\label{fig:stopping_panel} 
\vspace{-1em}
\end{figure}
Note that for a fixed set \(\mathcal{S}_{\theta,k}(c_t)\), consisting of tokens without a sufficient slack $\tau_\rho-m_{\theta}^{(k,\beta)}(x,y,t)<\eta$, the gradient $\nabla\Lfg^{\mathrm{MASC}}$ decreases the target next-token logit, and increases the probabilities of the competitive alternatives relative to the target
token (see Appendix~\ref{app:masc-gradient} for the full derivation). On the other hand, tokens with sufficient slack have zero gradient.
Given an initial fine-tuned model $\theta_0$, MASC runs gradient updates with model weights \(\theta^{(s)}\) at step \(s\). Crucially, it terminates using a stopping rule that monitors the average violation rate on a subset \(\Dval\subset\Dforget\)
\[
\widehat V_\rho\big(\theta^{(s)};\Dval\big)
=
\frac{1}{|\Dval|}
\sum_{(x,y)\in \Dval}
\frac{1}{T_y}
\sum_{t=1}^{T_y}
\one\!\left\{
m_{\theta^{(s)}}^{(k,\beta)}(x,y,t)>\tau_\rho
\right\},
\]
where $T_y$ is the length of sequence $y$. In particular, $\Dval$ is drawn at random as a small \emph{off-batch} subset of $\Dforget$ (of size $8$ to $16$ in our experiments) and is employed during training only as a stopping condition. Note that choosing a small $\Dval$ avoids a more costly pass over the entire $\Dforget$ while still providing a usable stopping signal (see \Cref{sec:experiments}). Moreover, $\Dval$ is independent of downstream evaluation\footnote{In many datasets, training data is usually in the form of raw text, while evaluation data is in the form of Q\&A text.} and does not require creating extra datasets (beyond the ones provided by the benchmark suites) that are more \emph{aligned} with the downstream evaluation sets (e.g., \cite{zhong2026duet}). MASC stops at the first instance at which the monitored violation rate falls below tolerance \(\alpha\), i.e. at step
\[
    \tau_\alpha
    =
    \inf\left\{
        s\geq 0:
        \widehat V_\rho\big(\theta^{(s)};\Dval \big)\leq \alpha
    \right\}.
\]
The final policy that MASC outputs is \(\pi_{\text{MASC}} = \pi_{\theta^{(\tau_\alpha)}}\) (cf. \Cref{alg:masc} in Appendix). MASC is closest in spirit to recent logit-level unlearning methods such as UNDIAL~\cite{dong2025undial}, Unilogit~\cite{vasilev-etal-2025-unilogit}, and constrained entropy or logit-flattening approaches~\cite{entesari2026constrained}. Unlike these methods, which distill toward modified full-vocabulary targets or flatten the predictive distribution under a fixed budget, MASC enforces a relative local condition against a small set of model-proposed alternatives and uses the same condition for early stopping.

\paragraph{Monitored violation rate.} Experiments strongly suggest that the monitored violation rate \(\widehat V_\rho\) behaves as intended. First, \Cref{fig:stopping_panel} (Left) shows how, along the MASC optimization trajectory, \(\widehat V_\rho\) (computed on $\Dval\subset\Dforget$) closely tracks the forget--retain trade-off measured by standard downstream evaluation metrics (which are not computed during training). In addition, we observe how the trajectory traces a Pareto frontier on the forget--retain trade-off space and the tolerance \(\alpha\) selects different points on this frontier; see \Cref{fig:stopping_panel} (Middle). A natural question is whether the same stopping statistic could serve as a generic early-stopping criterion for other unlearning objectives. Our experiments suggest that it does not: under the same budget of processed forget tokens, \(\widehat V_\rho\) decreases steadily for MASC but remains close to its initial value for GradDiff and SimNPO; see \Cref{fig:stopping_panel} (Right). This provides strong empirical evidence that unlike MASC, other established unlearning methods would not stop early using the same stopping criterion, as they are not designed to reduce the violation statistic.

\section{Experiments}
\label{sec:experiments}

We now evaluate MASC against several baselines on three well-known LLM unlearning datasets: TOFU (90/10 split), MUSE News, and MUSE Books.

\begin{table}[ht]
\centering

\resizebox{\linewidth}{!}{
\begin{tabular}{lccccc}
\toprule
& \multicolumn{3}{c}{Unlearning Efficacy} 
& \multicolumn{1}{c}{Retain Utility}
& \multicolumn{1}{c}{Efficiency} \\
\cmidrule(lr){2-4}
\cmidrule(lr){5-5}
\cmidrule(lr){6-6}
Method
& \(1-\)ROUGE-L \(\uparrow\)
& \(1-\)Prob \(\uparrow\)
& Truth Ratio \(\uparrow\)
& MU \(\uparrow\)
& Time (sec) \(\downarrow\) \\
\midrule
Base (Llama-2 7B) & 0.024 & 0.010 & 0.519 & 0.628 & -- \\
Retrain & 0.601 & 0.852 & 0.681 & 0.613 & -- \\
\midrule
GA & 0.330 {\scriptsize [0.029]} & \textbf{0.829} {\scriptsize [0.022]} & 0.555 {\scriptsize [0.007]} & 0.459 {\scriptsize [0.014]} & 306.6 {\scriptsize [40.2]} \\
GradDiff & \underline{0.598} {\scriptsize [0.020]} & \underline{0.792} {\scriptsize [0.003]} & 0.514 {\scriptsize [0.003]} & 0.561 {\scriptsize [0.005]} & 907.3 {\scriptsize [168.5]} \\
NPO & 0.366 {\scriptsize [0.023]} & 0.666 {\scriptsize [0.005]} & \underline{0.580} {\scriptsize [0.012]} & 0.533 {\scriptsize [0.003]} & 856.3 {\scriptsize [36.4]} \\
NPO+KLR & 0.362 {\scriptsize [0.016]} & 0.713 {\scriptsize [0.006]} & 0.577 {\scriptsize [0.006]} & 0.516 {\scriptsize [0.006]} & 983.3 {\scriptsize [48.9]} \\
RMU & 0.080 {\scriptsize [0.004]} & 0.103 {\scriptsize [0.011]} & 0.523 {\scriptsize [0.000]} & \underline{0.618} {\scriptsize [0.001]} & \underline{305.4} {\scriptsize [41.8]} \\
SimNPO & 0.349 {\scriptsize [0.006]} & 0.497 {\scriptsize [0.004]} & 0.562 {\scriptsize [0.001]} & 0.596 {\scriptsize [0.001]} & 541.7 {\scriptsize [37.8]} \\
\midrule
MASC (Ours) & \textbf{0.629} {\scriptsize [0.142]} & 0.672 {\scriptsize [0.127]} & \textbf{0.633} {\scriptsize [0.020]} & \textbf{0.666} {\scriptsize [0.003]} & \textbf{87.9} {\scriptsize [8.1]} \\
\bottomrule
\end{tabular}
}

\vspace{0.7em}
\caption{TOFU (forget10/retain90) results. Averages and standard deviations are reported as avg {\scriptsize [std]}. We mark the best-performing unlearning method in bold and underline the runner-up for each metric.}
\label{tab:tofu-main-results}

\vspace{1.2em}

\resizebox{\linewidth}{!}{
\begin{tabular}{llcccc}
\toprule
&
& \multicolumn{2}{c}{Unlearning Efficacy}
& \multicolumn{1}{c}{Retain Utility}
& \multicolumn{1}{c}{Efficiency} \\
\cmidrule(lr){3-4}
\cmidrule(lr){5-5}
\cmidrule(lr){6-6}
Dataset
& Method
& VerbMem \(\Dforget\) \(\downarrow\)
& KnowMem \(\Dforget\) \(\downarrow\)
& KnowMem \(\Dretain\) \(\uparrow\)
& Time (sec) \(\downarrow\) \\
\midrule
\multirow{9}{*}{MUSE News}
& Base (Llama-2 7B) & 57.25 & 66.45 & 54.90 & -- \\
& Retrain & 20.26 & 32.55 & 55.31 & -- \\
\cmidrule(lr){2-6}
& GA & 0.00 {\scriptsize [0.00]} & 0.00 {\scriptsize [0.00]} & 0.00 {\scriptsize [0.00]} & \underline{183.04} {\scriptsize [0.01]} \\
& GradDiff & \textbf{0.26} {\scriptsize [0.19]} & \underline{25.30} {\scriptsize [3.22]} & 34.38 {\scriptsize [3.01]} & 2517.87 {\scriptsize [71.38]} \\
& NPO & 0.00 {\scriptsize [0.00]} & 0.00 {\scriptsize [0.00]} & 0.00 {\scriptsize [0.00]} & 227.88 {\scriptsize [0.01]} \\
& NPO+KLR & 6.32 {\scriptsize [1.38]} & 51.78 {\scriptsize [3.50]} & \textbf{44.36} {\scriptsize [2.97]} & 4062.76 {\scriptsize [628.86]} \\
& RMU & 27.15 {\scriptsize [1.33]} & 47.81 {\scriptsize [3.74]} & \underline{41.95} {\scriptsize [3.04]} & 1076.06 {\scriptsize [89.67]} \\
& SimNPO & 8.03 {\scriptsize [0.61]} & 45.81 {\scriptsize [3.64]} & 37.02 {\scriptsize [3.00]} & 1877.52 {\scriptsize [18.81]} \\
\cmidrule(lr){2-6}
& MASC (Ours) & \underline{1.10} {\scriptsize [0.25]} & \textbf{19.37} {\scriptsize [3.04]} & 23.14 {\scriptsize [2.79]} & \textbf{138.68} {\scriptsize [37.28]} \\
\midrule
\multirow{9}{*}{MUSE Books}
& Base (ICLM-7B) & 99.70 & 47.12 & 69.13 & -- \\
& Retrain & 14.45 & 30.29 & 68.74 & -- \\
\cmidrule(lr){2-6}
& GA & 0.00 {\scriptsize [0.00]} & 0.00 {\scriptsize [0.00]} & 0.00 {\scriptsize [0.00]} & \underline{290.16} {\scriptsize [0.01]} \\
& GradDiff & \textbf{0.00} {\scriptsize [0.00]} & \textbf{0.00} {\scriptsize [0.00]} & 41.23 {\scriptsize [4.12]} & 572.23 {\scriptsize [18.70]} \\
& NPO & 0.00 {\scriptsize [0.00]} & 0.00 {\scriptsize [0.00]} & 0.00 {\scriptsize [0.00]} & 355.13 {\scriptsize [2.90]} \\
& NPO+KLR & \textbf{0.00} {\scriptsize [0.00]} & 23.34 {\scriptsize [3.28]} & \textbf{67.74} {\scriptsize [3.89]} & 2570.19 {\scriptsize [407.62]} \\
& RMU & 11.05 {\scriptsize [0.38]} & 22.37 {\scriptsize [3.40]} & 60.76 {\scriptsize [3.96]} & 1477.86 {\scriptsize [108.12]} \\
& SimNPO & \textbf{0.00} {\scriptsize [0.00]} & \textbf{0.00} {\scriptsize [0.00]} & 47.79 {\scriptsize [4.17]} & 2647.71 {\scriptsize [45.48]} \\
\cmidrule(lr){2-6}
& MASC (Ours) & \underline{0.90} {\scriptsize [0.80]} & 30.90 {\scriptsize [1.30]} & \underline{65.30} {\scriptsize [1.20]} & \textbf{64.94} {\scriptsize [0.49]} \\
\bottomrule
\end{tabular}
}

\vspace{0.7em}
\caption{MUSE results. Averages and standard deviations are reported as $\mathrm{avg_{[std]}}$. We mark the best-performing model in bold, excluding Base, Retrain, and models with zero retain utility ($\mathrm{KnowMem} \ \Dretain$), and underline the runner-up for each metric. Ties are resolved by retain utility.}
\label{tab:muse-main-results}
\vspace{-1em}
\end{table}

\textbf{Baselines.}
We compare MASC against standard unlearning baselines: \textbf{(i)} Gradient Ascent (GA) \cite{yao2024large}; \textbf{(ii)} GradDiff (or GA+GDR) \cite{maini2024tofu}, which adds a retain-side correction by combining GA on the forget set with gradient descent on retained examples; \textbf{(iii)} NPO \cite{zhang2024negative}; \textbf{(iv)}
NPO+KLR, which combines the NPO objective together with a KL retain regularizer to reduce drift on
\(\Dretain\) \cite{shi2024muse}; \textbf{(v)} RMU, a representation-level method that redirects activations associated with the forget data \cite{li2024the}; \textbf{(vi)} SimNPO \cite{fan2026simplicity}. We also include pretrained and retrain (on $\Dretain$ only) baselines for comparison. In all MASC experiments, we keep the backbone model frozen and show that the unlearning update can be performed effectively by training only LoRA adapters \cite{hu2022lora}. This also makes MASC memory-efficient and confines the update to a small trainable module.

\textbf{Metrics.}
On \texttt{TOFU}, unlearning-efficacy metrics are: \(1-\)ROUGE-L, measuring lexical dissimilarity from the target answer; \(1-\)Prob., measuring the reduction in teacher-forced probability of the target continuation; Truth Ratio, measuring the preference for perturbed alternatives over the true forgotten answer. On the other hand, MU measures utility on the retain portion of the data. For \texttt{MUSE News} and \texttt{MUSE Books}, we report VerbMem \(\Dforget\), corresponding to \emph{verbatim} memorization on the forget set; KnowMem \(\Dforget\), measuring \emph{knowledge} memorization on the forget set; KnowMem \(\Dretain\), which accounts for knowledge preservation on the retain set. Additional metrics are reported in \Cref{app:exp-details}. We measure wall-clock unlearning runtime in seconds. To make timing comparable, all methods are run on the same hardware\footnote{All experiments are run on a single H100 GPU.}, data pipeline for each dataset, and evaluation-free training loop. Timing starts after model and data loading, and stops when the method returns its checkpoint: at the prescribed final step for fixed-schedule baselines, and at the first checkpoint satisfying \(\widehat V_\rho\leq\alpha\) for MASC. We exclude shared one-time costs such as model loading and dataset preprocessing, but we include optimizer steps, forward/backward passes, retain batches, KL computations, and MASC online probe checks. Baselines are run using code from their official repositories and with the reported best hyperparameters. For timing, we measure the number of epochs specified by each baseline's selected hyperparameter setting, even when reproducing the reported forget-retain trade-off required additional epochs in our runs. Moreover, when timing MASC's competitors, we do not include the cost of offline checkpoint selection based on downstream metric evaluation. Thus, whenever possible, our comparison favors the baselines in both runtime and forget--retain performance.

\textbf{Results.}
\Cref{tab:tofu-main-results,tab:muse-main-results} show that MASC consistently achieves a competitive forget--retain trade-off across the three datasets using a fraction of the wall-clock runtime required by the other methods (see \Cref{fig:pareto-fronts} in Appendix \ref{app:ablations} for a visual summary of such a trade-off). On \texttt{TOFU}, MASC obtains the best \(1-\)ROUGE-L, Truth Ratio, retain MU, as well as the shortest wall-clock time, while remaining competitive on \(1-\)Prob. On \texttt{MUSE Books}, MASC preserves retain utility close to the strongest non-collapsed (i.e., those with nonzero retain utility) baselines at a significantly lower computational cost. On \texttt{MUSE News}, MASC achieves lower KnowMem on \(\Dforget\) than most baselines, although its retain utility is sometimes slightly lower.

\begin{takeawaybox}[Takeaway 2]
MASC shifts unlearning from fixed-length training to targeted early stopping, reaching a competitive forget--retain trade-off substantially faster than fixed-schedule counterparts.
\end{takeawaybox}

\section{The effect of scale on unlearning}
\label{sec:scaling}
\begin{wrapfigure}[14]{r}{0.4\columnwidth}
    \vspace{-0.7in}
    \centering
    \resizebox{\linewidth}{!}{\input{plots/scaling_learning.pgf}}
    \vspace{0.8em}
    \resizebox{\linewidth}{!}{
    \begin{tabular}{l|ccc}
    \toprule
    Metric & \(\hat \alpha\) & \(\log \hat C\) & \(R^2\) \\
    \midrule
    \textcolor{rougeorange}{\textbf{E-ROUGE}}
    & \textcolor{rougeorange}{0.269}
    & \textcolor{rougeorange}{-0.603}
    & \textcolor{rougeorange}{0.954} \\
    \textcolor{parablue}{\textbf{P-ROUGE}}
    & \textcolor{parablue}{0.111}
    & \textcolor{parablue}{-0.972}
    & \textcolor{parablue}{0.847} \\
    \bottomrule
    \end{tabular}
    }
    \vspace{-1em}
    \caption{\textbf{Learning stage.} Fitted scaling laws (log-log plot).
    }
    \label{fig:scaling_learning}
\end{wrapfigure}
Improving the efficiency of unlearning methods is a necessary step toward scaling these procedures to increasingly large models. However, to the best of our knowledge, there is no systematic study on how model size affects unlearning efficacy. Some related recent studies suggest that memorization increases with scale during training \cite{carlini2022quantifying,morris2025much,lu2024scaling} so that larger models may therefore enter the unlearning stage with different levels and forms of memorization. At the same time, it is unclear how scale would then affect the final forget--retain trade-off after unlearning. In this section, we study the scaling behavior of the \emph{learning-unlearning pipeline} of MASC and SimNPO across model sizes on the Qwen2.5 model family using the TOFU dataset.

\subsection{Unlearning procedures and metrics}
We now describe the learning-unlearning pipeline, along with the memorization metrics used.

\textbf{End-to-end unlearning pipeline.} Most LLM unlearning benchmarks, including \texttt{TOFU} \cite{maini2024tofu}, \texttt{MUSE} \cite{shi2024muse}, and \texttt{WMDP} \cite{li2024the}, are built from a common two-stage pipeline.
\textbf{(1) Learning.} An initial model \(\pi_{\mathrm{init}}\) is finetuned on the full benchmark data \(\mathcal D=\Dforget\cup\Dretain\) to yield a task-adapted model \(\pi_{\theta_0}\) (usually called \emph{base} model) that contains both forget and retain information. 
\textbf{(2) Unlearning.} An unlearning algorithm \(\mathcal A\) (here MASC and SimNPO) is then applied to \(\pi_{\theta_0}\) using the split \((\Dforget,\Dretain)\), producing an unlearned model \(\pi_{\theta_{\mathrm{unl}}}\). Following this pipeline, we evaluate the Qwen2.5 model family at different scales on \texttt{TOFU} and track how forget-set memorization changes from learning to unlearning.

\vspace{-0.25em}

\textbf{Two memorization levels.} We use \texttt{TOFU} built-in metrics to distinguish between two levels of memorization. The first level is \emph{exact memorization}: the model reproduces the target answer under the original question. We measure this with Exact Q\&A ROUGE (E-ROUGE), that computes lexical overlap between the model output and the gold answer on the original \texttt{TOFU} questions. 
The second level is \emph{paraphrase-robust knowledge memorization}: the model recovers the same underlying answer even when the question is paraphrased, measured by Paraphrased Q\&A ROUGE (P-ROUGE). To measure the resulting forget--retain trade-off, we also report retain utility, measured by MU, \texttt{TOFU} aggregate score on \(\Dretain\).

\vspace{-0.45em}

\subsection{Effects of size on unlearning}
We first examine the learning stage, where the model is supervised-finetuned on the full benchmark data before unlearning. For each memorization metric, we average scores over seeds at each model size and fit an empirical power law \(s(N)=C N^\alpha\) in log--log space, where \(N\) denotes the number of parameters. As shown in \Cref{fig:scaling_learning}, scale affects the two memorization levels differently. Exact reproduction grows faster with model size than paraphrase-based knowledge recovery: E-ROUGE has exponent \(\hat\alpha=0.269\), while P-ROUGE has exponent \(\hat\alpha=0.111\). Since these metrics are bounded by one, the exact-memorization scores are already close to saturation at size 7B. Overall, this suggests that larger models become disproportionately better at reproducing target content in its original form compared to recovering the same content under paraphrased prompts. These results are consistent with prior evidence that memorization increases with model scale \cite{carlini2022quantifying,morris2025much,lu2024scaling}, while adding a distinction between forms of memorization that grow at different rates. As a consequence of this learning-stage behavior, models of different sizes enter the unlearning stage with different memorization profiles.
\begin{wrapfigure}[13]{r}{0.55\columnwidth}
    \vspace{-0.7em}
    \centering
    \scalebox{0.62}{\input{plots/qwen_tofu_unlearning_radar_by_scale.pgf}}
    \vspace{-0.5em}
    \caption{\textbf{Unlearning stage.} Cross-scale behavior after unlearning for MASC and SimNPO. All the metrics are plotted such that the higher the better.}
    \label{fig:unlearning-scaling-radar}
\end{wrapfigure}

Interestingly, the picture changes after unlearning. Despite different starting memorization profiles (cf. \Cref{fig:scaling_learning}), unlearning brings the forget-side metrics back to a similar range across model sizes. In particular, we summarize the forget--retain trade-off after unlearning using the radar plots in \Cref{fig:unlearning-scaling-radar}. Each line in the plot corresponds to a model size, with \(1-\mathrm{E\text{-}ROUGE}\) and \(1-\mathrm{P\text{-}ROUGE}\), reported together with retain utility (MU). For both MASC and SimNPO, the forget-side metrics remain in a similar range across model sizes after unlearning, while retain utility improves more clearly with scale. This suggests that larger models mainly improve the utility side of the forget--retain trade-off, rather than yielding different degrees of forgetting across model sizes. For the unlearning-stage scaling-law plot, see \Cref{fig:unlearning-scaling} in Appendix.

\begin{takeawaybox}[Takeaway 3]
During learning, larger models amplify the two levels of memorization with different strengths. However, after unlearning, residual memorization is largely scale-invariant, while larger models preserve higher retain utility.
\end{takeawaybox}

\section{Discussion and future work}
\label{sec:discussion}

We introduce MASC, a margin-based unlearning method whose loss and stopping rule both target the same condition: forget tokens should no longer dominate plausible model-proposed alternatives. This makes the procedure self-stopping (or \emph{adaptive}) and thus substantially faster than fixed-budget baselines. While our experiments suggest that MASC also improves paraphrase-level forgetting metrics, our bound still does not control such behavior directly: doing so would require a notion of forgetting defined in a representation space invariant to surface \emph{rewordings}. Designing mathematically grounded unlearning objectives for this semantic regime is an interesting direction for future work.

\section*{Acknowledgment}
FDG was supported by Swiss National Science Foundation (SNSF) Grant 218343, and AS was supported by the Swiss National Science Foundation (SNSF) Grant 204439. The authors acknowledge the use of LLMs to improve exposition and generate code. The authors take full responsibility for the content of the paper.

\clearpage
\bibliographystyle{plainnat}
\bibliography{biblio}

\clearpage
\appendix

\section{Additional related work}
\label{app:additional-related-work}

Beyond the likelihood-reversal and preference-optimization baselines considered in the main text, one line of work uses \emph{relabeling-based finetuning}, replacing the original forget-set responses with generic, neutral, or refusal-like alternatives before further finetuning~\cite{eldan2023s,cao2024rwku}. Another line studies \emph{reinforcement-learning} formulations of unlearning, for example by using reward models or negative-similarity rewards to discourage undesirable generations while preserving fluency~\cite{lu2022quark,kassem2023preserving}. Localized-parameter approaches instead try to identify and edit parts of the model most responsible for the target information, including representation-engineering methods, adaptive variants of representation redirection, and locate-then-unlearn approaches based on neuron or parameter attribution~\cite{li2024the,dang2025effects,wang2025gru,hong2024dissecting}. A further family leverages auxiliary models, including task-vector methods, contrastive decoding, and knowledge-distillation-based unlearning~\cite{ilharco2022editing,lu2024eraser,ji2024reversing,wang2024rkld,dong2025undial}. Finally, some methods avoid weight updates altogether or combine them with input/output-side interventions, such as prompt classifiers, input corruption, guardrails, filtering, or in-context unlearning~\cite{thaker2024guardrail,pmlr-v235-pawelczyk24a}. These directions illustrate that LLM unlearning can be pursued through output losses, representation editing, auxiliary-model guidance, or inference-time control.

\paragraph{Positioning of MASC with similar methods.}
A few recent works might be considered close to MASC because they also modify the model's output distribution on forget examples. UNDIAL \cite{dong2025undial} proposes a self-distillation approach in which the target-token logit is adjusted downward and the model is trained to match the resulting softened distribution, with the goal of avoiding the over-unlearning and instability observed in GA and NPO. Unilogit \cite{vasilev-etal-2025-unilogit} further develops this direction by constructing self-distillation targets from the current model and dynamically adjusting the target logit so that the target token receives uniform probability. Another closely related line formulates unlearning through entropy or logit-flattening objectives: \citet{entesari2026constrained} cast forgetting and retention as a constrained optimization problem, uses a logit-margin flattening loss to drive the full predictive distribution toward uniformity on the forget set, and solves the resulting problem with a primal-dual procedure. These methods share with MASC the view that stable unlearning should act directly on the model's local predictive distribution rather than simply maximizing forget loss.
MASC differs from these approaches in both the target of suppression and the role of the training statistic. Rather than distilling toward a modified full-vocabulary target distribution, as in UNDIAL or Unilogit, or flattening the entire output distribution toward uniformity, as in entropy-based or logit-flattening methods, MASC imposes a relative local condition: the gold forget token should no longer dominate a small set of plausible non-gold alternatives proposed by the model itself. This makes the forget update selective. Tokens whose local dominance is already below threshold contribute no forget gradient, while only the still-dominant positions are corrected. Moreover, the same margin-violation event defines the loss, the constrained forget condition, and the stopping rule. Thus, MASC is not only a logit-level suppression objective; it is also a self-terminating unlearning procedure whose tolerance parameter directly selects a point along the empirical forget--retain frontier.

\section{Proofs}
\label{app:proofs}

\propboundexactrepro*
\begin{proof}
For \(\beta=1\), the restricted probability is obtained by restricting the normalization to
\(\{y_t\}\cup \mathcal{S}_{\theta,k}(c_t)\). Since this removes nonnegative terms from the full softmax denominator,
we have
\[
    \pi_\theta(y_t\mid c_t)
    \leq
    \pi_{\theta}^{(k,1)}(y_t\mid c_t)
    \leq \rho \quad \text{for every } t\in I.
\]
Moreover, by assumption, \(|I|\geq \lceil(1-\alpha)T\rceil\). For the remaining positions \(t\notin I\), we only use the trivial bound
\(\pi_\theta(y_t\mid c_t)\leq 1\). Hence
\[
    \pi_\theta(y\mid x)
    =
    \prod_{t=1}^{T}\pi_\theta(y_t\mid c_t)
    =
    \prod_{t\in I}\pi_\theta(y_t\mid c_t)
    \prod_{t\notin I}\pi_\theta(y_t\mid c_t)
    \leq
    \rho^{|I|}
    \leq
    \rho^{\lceil(1-\alpha)T\rceil},
\]
where the last inequality uses \(\rho\in(0,1)\) and \(|I|\geq \lceil(1-\alpha)T\rceil\).
\end{proof}

Note that the reproduction bound can be extended to \(\beta\geq 1\), at the cost of replacing \(\rho\) by a sigmoid-rescaled threshold. Let
\(\tau_\rho=\log(\rho/(1-\rho))\), and suppose that, for all \(t\in I\),
\[
    \pi_{\theta}^{(k,\beta)}(y_t\mid c_t)\leq \rho .
\]
Equivalently,
\[
    m_{\theta}^{(k,\beta)}(x,y,t)
    =
    \beta z_\theta(y_t\mid c_t)
    -
    \log\sum_{v\in \mathcal{S}_{\theta,k}(c_t)}
    \exp(\beta z_\theta(v\mid c_t))
    \leq \tau_\rho .
\]
Since \(\beta\geq 1\),
\[
    \frac{1}{\beta}
    \log\sum_{v\in \mathcal{S}_{\theta,k}(c_t)}
    \exp(\beta z_\theta(v\mid c_t))
    \leq
    \log\sum_{v\in \mathcal{S}_{\theta,k}(c_t)}
    \exp(z_\theta(v\mid c_t)).
\]
Therefore,
\[
    m_{\theta}^{(k,1)}(x,y,t)
    \leq
    \frac{1}{\beta}
    m_{\theta}^{(k,\beta)}(x,y,t)
    \leq
    \frac{\tau_\rho}{\beta}.
\]
Using the identity
\[
    \pi_{\theta}^{(k,1)}(y_t\mid c_t)
    =
    \sigma\!\left(m_{\theta}^{(k,1)}(x,y,t)\right),
\]
we obtain
\[
    \pi_{\theta}^{(k,1)}(y_t\mid c_t)
    \leq
    \sigma\!\left(\frac{\tau_\rho}{\beta}\right).
\]
Since the full softmax denominator contains all vocabulary tokens, $\pi_\theta(y_t\mid c_t) \leq \pi_{\theta}^{(k,1)}(y_t\mid c_t)$, and hence, if this condition holds on a set \(I\) with
\(|I|\geq \lceil(1-\alpha)T\rceil\), then
\[
    \pi_\theta(y\mid x)
    \leq
    \left[
    \sigma\!\left(\frac{\tau_\rho}{\beta}\right)
    \right]^{\lceil(1-\alpha)T\rceil}.
\]

\textbf{Remark.} For \(\rho>1/2\), we have \(\tau_\rho=\log(\rho/(1-\rho))>0\). Hence, as \(\beta\) increases,
\[
    \sigma\!\left(\frac{\tau_\rho}{\beta}\right)
    \to \frac{1}{2}.
\]
Thus, controlling the \(\beta\)-sharpened restricted probability implies a reproduction bound with an effective per-token threshold closer to \(1/2\). In the limit \(\beta\to\infty\), the restricted comparison approaches a hard maximum over the selected competitors, and the condition becomes the requirement that the target token should not beat its strongest plausible alternative. This matches the intended interpretation of MASC: forgotten tokens should no longer be clearly preferred over the model's own local alternatives.

\lemmaequivalence*
\begin{proof}
Since
\(\pi_{\theta}^{(k,\beta)}(y_t\mid c_t)
=\sigma(m_{\theta}^{(k,\beta)}(x,y,t))\) and
\(\sigma^{-1}(\rho)=\tau_\rho\), the claim follows by monotonicity of
\(\sigma\).
\end{proof}

\section{Gradient of the MASC forget term}
\label{app:masc-gradient}

In this section, we derive the token-level gradient effect of the MASC forget loss $\mathcal{L}_{\mathrm{fg}}^{\mathrm{MASC}}$ in \Cref{eq:masc-forget-loss}. The
calculation is performed at the level of logits for a single forget token. It
therefore describes the direct contribution of the forget term before averaging
over examples, positions, and minibatches. The retain term contributes an
additional gradient that is not included in this local calculation.

Fix a forget example \((x,y)\), a position \(t\), and the teacher-forced context
\(c_t=(x,y_{<t})\). Let \(\mathcal{S}=\mathcal{S}_{\theta,k}(c_t)\) denote the selected top-\(k\)
non-gold competitor set. As in the main text, we treat \(\mathcal{S}\) as fixed during
differentiation, since the top-\(k\) selection is not differentiated through.
The restricted margin is
\[
    m_\theta
    =
    m_{\theta}^{(k,\beta)}(x,y,t)
    =
    \beta z_\theta(y_t\mid c_t)
    -
    \log
    \sum_{u\in \mathcal{S}}
    \exp(\beta z_\theta(u\mid c_t)).
\]
The token-level MASC surrogate is
\[
    \psi_{\rho,\eta}(m_\theta)
    =
    \frac{[m_\theta-(\tau_\rho-\eta)]_+}{\eta}.
\]
Thus, if \(m_\theta\leq \tau_\rho-\eta\), the hinge is inactive and the
derivative of the forget surrogate with respect to all logits at this context is
zero. If \(m_\theta>\tau_\rho-\eta\), the hinge is active and
\[
    \psi_{\rho,\eta}(m_\theta)
    =
    \frac{m_\theta-(\tau_\rho-\eta)}{\eta}.
\]
Hence its derivative is \(1/\eta\) times the derivative of the margin. For \(v\in \mathcal{S}\), define the softmax weights over the competitor set
\[
    w_v
    =
    \frac{\exp(\beta z_\theta(v\mid c_t))}
    {\sum_{u\in \mathcal{S}}\exp(\beta z_\theta(u\mid c_t))}.
\]
Then, for any token \(a\in\mathcal V\),
\[
\frac{\partial \psi_{\rho,\eta}(m_\theta)}
{\partial z_\theta(a\mid c_t)}
=
\begin{cases}
0,
&
m_\theta\leq \tau_\rho-\eta,
\\[1.5mm]
\dfrac{\beta}{\eta},
&
m_\theta>\tau_\rho-\eta
\quad\text{and}\quad a=y_t,
\\[3mm]
-\dfrac{\beta}{\eta}w_a,
&
m_\theta>\tau_\rho-\eta
\quad\text{and}\quad a\in \mathcal{S},
\\[3mm]
0,
&
m_\theta>\tau_\rho-\eta
\quad\text{and}\quad a\notin \{y_t\}\cup \mathcal{S} .
\end{cases}
\]
Therefore, for an active forget token, gradient descent on the MASC forget term
decreases the gold logit \(z_\theta(y_t\mid c_t)\) and increases the logits of
the selected competitors \(z_\theta(v\mid c_t)\) for \(v\in \mathcal{S}\), with larger
updates for competitors that already have larger softmax weight within \(\mathcal{S}\).
Tokens outside \(\{y_t\}\cup \mathcal{S}\) receive no direct logit-level gradient from
this token-level term.

Since the full MASC forget loss averages this quantity over forget examples and
positions,
\[
    \Lfg^{\mathrm{MASC}}(\theta)
    =
    \mathbb{E}_{(x,y)\sim\Dforget}
    \left[
    \frac{1}{T}
    \sum_{t=1}^{T}
    \psi_{\rho,\eta}
    \!\left(
        m_{\theta}^{(k,\beta)}(x,y,t)
    \right)
    \right],
\]
its gradient is the corresponding average of the per-token contributions above.
In practice, this expectation is estimated by minibatches. The competitor set is
recomputed during training, so MASC behaves like a self-correcting active-set method: at each step, the gold token is challenged by the non-gold alternatives that the current model itself considers plausible.

\textbf{Remark.} The statement that logits outside \(\{y_t\}\cup \mathcal{S}\) have zero derivative refers to the direct derivative of the single token-level surrogate with respect to the logits at the current context. A parameter update can still affect other logits indirectly through the shared network parameters, and the retain regularizer adds its own gradient on retained
contexts.

\section{Additional Ablations}
\label{app:ablations}

\subsection{Robustness against quantization}
\label{app-subsec: robustness}
A growing line of work (for example \cite{lee2026distillation,fan2025towards,sheshadri2024latent,tamirisa2025tamperresistant}) shows that unlearned knowledge can often be recovered by simple post-processing or lightweight attacks on the unlearned model. In particular, quantization reveals a simple yet remarkable robustness failure of LLM unlearning, as shown by \citet{zhang2025catastrophic}. Indeed, applying low-bit quantization to an unlearned LLM can recover supposedly forgotten information, exposing a mismatch between full-precision unlearning metrics and robustness after deployment. In this sense, quantization is one of the easiest attacks on unlearning: it requires no access to the training pipeline, no carefully designed prompts, and no additional optimization over the forget set. Motivated by this observation, we evaluate whether MASC remains effective after 4-bit quantization. As shown in \Cref{tab:muse-4bit-forget}, MASC preserves low forget-set memorization after quantization on both MUSE News and MUSE Books. 

\begin{table}[h]
\centering
\scriptsize
\resizebox{\linewidth}{!}{
\begin{tabular}{lcccc}
\toprule
&
\multicolumn{2}{c}{MUSE News}
&
\multicolumn{2}{c}{MUSE Books} \\
\cmidrule(lr){2-3}
\cmidrule(lr){4-5}
Method
& VerbMem \(\Dforget\) \(\downarrow\)
& KnowMem \(\Dforget\) \(\downarrow\)
& VerbMem \(\Dforget\) \(\downarrow\)
& KnowMem \(\Dforget\) \(\downarrow\) \\
\midrule
Base (4-bit)
& 46.40 {\scriptsize (\(-10.85\))}
& 54.32 {\scriptsize (\(-12.13\))}
& 94.02 {\scriptsize (\(-5.68\))}
& 38.13 {\scriptsize (\(-8.99\))} \\

Retrain (4-bit)
& 20.06 {\scriptsize (\(-0.20\))}
& 35.35 {\scriptsize (\(+2.80\))}
& 14.00 {\scriptsize (\(-0.45\))}
& 24.41 {\scriptsize (\(-5.88\))} \\

\midrule
GA (4-bit)
& 0.00 {\scriptsize (\(+0.00\))}
& 0.00 {\scriptsize (\(+0.00\))}
& 0.00 {\scriptsize (\(+0.00\))}
& 0.00 {\scriptsize (\(+0.00\))} \\

GradDiff (4-bit)
& 7.36 {\scriptsize (\(+7.10\))}
& 47.28 {\scriptsize (\(+21.98\))}
& 0.00 {\scriptsize (\(+0.00\))}
& 29.04 {\scriptsize (\(+29.04\))} \\

NPO (4-bit)
& 14.61 {\scriptsize (\(+14.61\))}
& 29.41 {\scriptsize (\(+29.41\))}
& 4.70 {\scriptsize (\(+4.70\))}
& 3.53 {\scriptsize (\(+3.53\))} \\

NPO+KLR (4-bit)
& 39.35 {\scriptsize (\(+33.03\))}
& 52.19 {\scriptsize (\(+0.41\))}
& 49.17 {\scriptsize (\(+49.17\))}
& 35.96 {\scriptsize (\(+12.62\))} \\

RMU (4-bit)
& 21.77 {\scriptsize (\(-5.38\))}
& 36.80 {\scriptsize (\(-11.01\))}
& 8.37 {\scriptsize (\(-2.68\))}
& 11.16 {\scriptsize (\(-11.21\))} \\

SimNPO (4-bit)
& 37.58 {\scriptsize (\(+29.55\))}
& 48.82 {\scriptsize (\(+3.01\))}
& 71.27 {\scriptsize (\(+71.27\))}
& 33.99 {\scriptsize (\(+33.99\))} \\

\midrule
MASC (Ours, 4-bit)
& 5.79 {\scriptsize (\(+4.69\))}
& 28.80 {\scriptsize (\(+9.43\))}
& 0.87 {\scriptsize (\(-0.03\))}
& 25.48 {\scriptsize (\(-5.42\))} \\
\bottomrule
\end{tabular}
}
\vspace{1pt}
\caption{
MUSE forget metrics under 4-bit quantization. Parentheses report the change relative to the corresponding full-precision method in \Cref{tab:muse-main-results}, computed as
\(\Delta = \text{Model}_{\mathrm{4\text{-}bit}} - \text{Model}_{\mathrm{full}}\).
Both metrics are lower-is-better: negative values indicate that the 4-bit version reduces the forget metric, while positive values indicate worse forget-side performance relative to the full-precision model.
}
\label{tab:muse-4bit-forget}
\end{table}

\subsection{Timing without LoRA update}
\label{app-subsec:LoRA-timing}
MASC is implemented with LoRA adapters in our main experiments to reduce memory usage and wall-clock cost. To check whether the observed behavior is specific to this parameter-efficient implementation, we also run MASC with full finetuning on \texttt{TOFU}. As shown in \Cref{tab:lora-ablation}, full finetuning yields very similar forget--retain behavior to the LoRA implementation. The main difference is computational: full finetuning is slightly slower, while the evaluation metrics remain close. This suggests that LoRA mainly improves efficiency, rather than driving the empirical behavior of MASC. Importantly, even in the full-finetuning setting, MASC remains substantially faster than the strongest baselines with comparable forget--retain behavior in our experiments.

\begin{table}[h]
\centering
\begin{tabular}{lcccccc}
\toprule
\textbf{Config} 
& \(\mathbf{1-}\)\textbf{ROUGE-L} \(\uparrow\) 
& \(\mathbf{1-}\)\textbf{Prob} \(\uparrow\) 
& \textbf{Truth Ratio} \(\uparrow\) 
& \textbf{MU} \(\uparrow\) 
& \textbf{Time (sec)} \(\downarrow\) \\
\midrule
MASC-LoRA 
& 0.629
& 0.672
& 0.633
& 0.666
& 87.9 \\
MASC Full-FT 
& 0.609
& 0.608
& 0.660
& 0.647
& 140.6 \\
\bottomrule
\end{tabular}
\vspace{1em}
\caption{Impact of LoRA on MASC across metrics and time.}
\label{tab:lora-ablation}
\end{table}

\subsection{Stability of the stopping time $\tau_\alpha$} 
\label{app-subsec: stopping robustness}
We also check the stability of the MASC stopping rule across different random seeds. On \texttt{TOFU}, where five seeds are available, the stopping step is highly consistent across runs, with an average of \(80.4\) steps and standard deviation \(6.5\). The same pattern holds on \texttt{MUSE News} and \texttt{MUSE Books}: \(69.3\pm16.7\) for \texttt{MUSE News} and \(40.0\pm0.0\) for \texttt{MUSE Books}. These results suggest that the monitored violation rate yields a stable stopping criterion.

\subsection{Discussion on top-$k$ set of alternative tokens} 
\label{app-subsec:size-k}
The choice of \(k\) controls how many non-gold tokens are included in the MASC comparison set. Since the competitor term is a log-sum-exp,
\[
m_k = z_y - \log \sum_{r=1}^{k} e^{z_r},
\]
increasing \(k\) makes the competitor aggregate larger even if the model has not strongly changed the gold logit \(z_y\). Thus, for large \(k\), the margin criterion can be satisfied partly because many competitors are included, rather than because the original answer token has been strongly suppressed. To test this explanation, we measured
\[
\Delta z_y = z_y^{\mathrm{after}} - z_y^{\mathrm{before}},
\]
the average change in the logit assigned to the original answer token on TOFU forget continuations, comparing the unlearned model to the base model. More negative values indicate stronger suppression of the original answer token. As shown below, increasing \(k\) leads to a smaller decrease in \(z_y\), earlier stopping, and worse forgetting metrics, while MU remains almost unchanged:

\begin{table}[h]
\centering
\scriptsize
\resizebox{\linewidth}{!}{
\begin{tabular}{c|cccccc}
\toprule
\(k\)
& \(\Delta z_y\)
& Stop step
& \(1-\)ROUGE-L \(\uparrow\)
& \(1-\)Prob \(\uparrow\)
& Truth Ratio \(\uparrow\)
& MU \(\uparrow\) \\
\midrule
10   & -14.29 & 82 & 0.700 & 0.719 & 0.636 & 0.666 \\
100  & -13.24 & 76 & 0.493 & 0.604 & 0.617 & 0.665 \\
1000 & -12.93 & 74 & 0.365 & 0.539 & 0.610 & 0.662 \\
\bottomrule
\end{tabular}
}
\vspace{2pt}
\caption{
TOFU ablation of the top-\(k\) comparison set in MASC.
}
\label{tab:masc-topk-ablation}
\end{table}

These results suggest that larger comparison sets make the stopping criterion easier to satisfy without requiring as much direct suppression of the original answer tokens. This explains why forgetting becomes weaker as \(k\) grows, even though retain utility remains stable.

\section{Experimental Details \& Additional Metrics}
\label{app:exp-details}

\subsection{Full metrics \& Pareto Frontier}
We now report other available metrics (cf. \Cref{tab:tofu-additional-metrics,tab:muse-privacy-leak}) for each of the studied datasets, together with a visualization of the forget--retain trade-off based on \Cref{tab:tofu-main-results,tab:muse-main-results} metrics.

\begin{table}[h]
\centering
\small
\resizebox{\linewidth}{!}{
\begin{tabular}{lccccc}
\toprule
&
\multicolumn{1}{c}{Unlearning Privacy}
& \multicolumn{4}{c}{Retain Utility} \\
\cmidrule(lr){2-2}
\cmidrule(lr){3-6}
Method
& FQ \(\uparrow\)
& MU \(\uparrow\)
& Retain ROUGE \(\uparrow\)
& Retain Prob \(\uparrow\)
& Retain TR \(\uparrow\) \\
\midrule
Base/full
& 0
& 0.628
& 0.981
& 0.989
& 0.460 \\

Retrain
& 1
& 0.613
& 0.976
& 0.989
& 0.457 \\
\midrule

GA
& \(1.72{\times}10^{-17}\) {\scriptsize [\(1.59{\times}10^{-17}\)]}
& 0.459 {\scriptsize [0.014]}
& 0.732 {\scriptsize [0.024]}
& 0.186 {\scriptsize [0.024]}
& 0.455 {\scriptsize [0.005]} \\

GradDiff
& \(3.30{\times}10^{-18}\) {\scriptsize [\(4.89{\times}10^{-18}\)]}
& 0.561 {\scriptsize [0.005]}
& 0.556 {\scriptsize [0.019]}
& 0.739 {\scriptsize [0.011]}
& 0.464 {\scriptsize [0.003]} \\

NPO
& \(2.14{\times}10^{-18}\) {\scriptsize [\(2.56{\times}10^{-18}\)]}
& 0.533 {\scriptsize [0.003]}
& 0.713 {\scriptsize [0.030]}
& 0.403 {\scriptsize [0.011]}
& 0.430 {\scriptsize [0.009]} \\

NPO+KLR
& \(2.99{\times}10^{-18}\) {\scriptsize [\(4.95{\times}10^{-18}\)]}
& 0.516 {\scriptsize [0.006]}
& 0.722 {\scriptsize [0.019]}
& 0.342 {\scriptsize [0.006]}
& 0.434 {\scriptsize [0.005]} \\

RMU
& \(1.80{\times}10^{-23}\) {\scriptsize [\(4.77{\times}10^{-24}\)]}
& 0.618 {\scriptsize [0.001]}
& 0.901 {\scriptsize [0.006]}
& 0.876 {\scriptsize [0.011]}
& 0.455 {\scriptsize [0.001]} \\

SimNPO
& \(5.27{\times}10^{-24}\) {\scriptsize [\(4.03{\times}10^{-24}\)]}
& 0.614 {\scriptsize [0.001]}
& 0.976 {\scriptsize [0.005]}
& 0.992 {\scriptsize [0.001]}
& 0.464 {\scriptsize [0.001]} \\

\midrule
MASC (Ours)
& \(3.01{\times}10^{-7}\) {\scriptsize [\(6.50{\times}10^{-7}\)]}
& 0.666 {\scriptsize [0.003]}
& 0.899 {\scriptsize [0.019]}
& 0.832 {\scriptsize [0.033]}
& 0.446 {\scriptsize [0.005]} \\

\bottomrule
\end{tabular}
}
\vspace{2pt}
\caption{
Additional TOFU metrics. FQ measures forget-side privacy leakage, while MU, Retain ROUGE, Retain Prob, and Retain TR measure retain-side utility. Results are averaged over seeds, with standard deviations in brackets. FQ is bounded in $[0,1]$ and the higher the better.
}
\label{tab:tofu-additional-metrics}
\end{table}

\begin{table}[t]
\centering
\begin{tabular}{lcc}
\toprule
Method & MUSE News \(\downarrow\) & MUSE Books \(\downarrow\) \\
\midrule
Base       & -99.81 & -57.34 \\
Retrain    & -4.72  & 8.16 \\
\midrule
GA         & 5.22   & -28.64 \\
GradDiff   & 105.16 & -29.06 \\
NPO        & 14.99  & -22.31 \\
NPO+KLR    & 87.03  & -42.74 \\
RMU        & -99.73 & -23.75 \\
SimNPO     & 35.26  & -17.86 \\
\midrule
MASC (Ours)& 41.99  & -48.89 {\scriptsize} \\
\bottomrule
\end{tabular}
\vspace{2pt}
\caption{
Privacy-leak metrics on MUSE.
}
\label{tab:muse-privacy-leak}
\end{table}

\begin{figure}[h] 
    \centering 
    \resizebox{\linewidth}{!}{\input{plots/forget_retain_pareto.pgf}} 
    \caption{Pareto Frontier computed from the metrics of \Cref{tab:tofu-main-results,tab:muse-main-results} where top-right is better. For the forget metrics, we average the reported metrics to get an aggregate forget score. Retrain and base models are not timed since they are considered as given/oracle.} 
\label{fig:pareto-fronts} 
\end{figure}

\paragraph{Baselines.} For each baseline, we use the authors' official implementation whenever available. When multiple public implementations are available, including benchmark-suite versions, we select the implementation that achieves the strongest time--metric trade-off in our setup. We set hyperparameters according to the corresponding papers and released code, using author-recommended configurations whenever possible. This protocol is intended to give each baseline a competitive configuration rather than comparing against under-tuned variants.

\paragraph{A comment on privacy metrics.}
We report FQ on TOFU and privacy leakage on MUSE as privacy-oriented diagnostics rather than as primary forget--retain metrics, following the classification of \citet{dorna2026openunlearning}. Both quantities rely on information about the behavior of a retrained or non-member reference distribution: FQ is a hypothesis-test p-value comparing the truth-ratio distribution of the unlearned model to that of the retain-only retrained model, while privacy leakage measures residual membership-style distinguishability of forgotten examples. Thus, unlike ROUGE, likelihood, or model utility, these metrics ask whether the unlearned model is statistically indistinguishable from an ideal deletion baseline, not only whether it stops reproducing the forgotten content. As also noted in Remark B.1 of \citet{entesari2026constrained}, such metrics are informative but imperfect: they require access to retrained/reference behavior and can be hard to interpret when models collapse or move away from the retrain distribution for reasons unrelated to memorization. In our experiments, privacy-oriented metrics also do not reflect good privacy guarantees, suggesting that current approximate unlearning methods should not be interpreted as providing seed-stable privacy guarantees. We therefore view private-unlearning as an important direction for future work.

\subsection{Hyperparameters}
\paragraph{MASC hyperparameters.}
\Cref{tab:masc-hyperparameters} reports the MASC hyperparameters used in the main experiments. Across all datasets, we keep the backbone frozen and train LoRA adapters only, using a retain-side KL penalty to the base model.

\begin{table}[h]
\centering
\scriptsize
\resizebox{\linewidth}{!}{
\begin{tabular}{lcccccccc}
\toprule
Dataset
& \(\lambda_{\mathrm{fg}}\)
& \(\rho\)
& \(\eta\)
& top-\(k\)
& \(\beta\)
& Stop \(\alpha\)
& LR
& LoRA (rank) \\
\midrule
TOFU
& 0.05
& 0.70
& 0.25
& \(k=10\)
& 1.0
& 0.475
& \(10^{-4}\)
& 16 \\

MUSE News
& 0.50
& 0.70
& 0.50
& \(k=2\)
& 5.0
& 0.55
& \(10^{-4}\)
& 16 \\

MUSE Books
& 0.05
& 0.50
& 0.50
& \(k=10\)
& 1.0
& 0.10
& \(10^{-4}\)
& 16 \\
\bottomrule
\end{tabular}
}
\vspace{2pt}
\caption{
MASC hyperparameters used in the main experiments. Here \(\lambda_{\mathrm{fg}}\) is the weight of the forget loss, \(\rho\) is the local dominance threshold, \(\eta\) is the hinge buffer, \(\beta\) is the logit-temperature parameter, and \(\alpha\) is the stopping tolerance.
}
\label{tab:masc-hyperparameters}
\end{table}

\paragraph{Effect of the learning rate.}
\Cref{fig:stopping_lr_ablation} shows the evolution of the stopping statistic \(\widehat V_\rho\) for different learning rates on TOFU. As expected, larger learning rates drive the violation rate below the tolerance \(\alpha\) in fewer optimizer steps, yielding faster stopping. Smaller learning rates decrease \(\widehat V_\rho\) more gradually, which is slower but provides a finer resolution along the MASC trajectory: more intermediate checkpoints are available around the stopping threshold, allowing more controlled selection of the forget--retain trade-off.

\begin{figure}[t] 
    \centering 
    \resizebox{0.7\linewidth}{!}{\input{plots/tofu_masc_lr.pgf}} \caption{Effect of the learning rate on the MASC stopping statistic \(\widehat V_\rho\) on TOFU.} 
\label{fig:stopping_lr_ablation} 
\end{figure}

\paragraph{Top-$k$ alternatives.} The parameter \(k\) controls how broad this local comparison is. Small values of \(k\) compare the target token only against the most competitive alternatives, making the condition close to a ``target versus nearest rivals'' test. Larger values include more alternatives and therefore require the target token to share probability mass with a broader candidate set. In our experiments, we use a small \(k\) value ($=10$) so that the forget update remains focused on plausible replacements rather than on the full vocabulary, which contains many irrelevant tokens.

\subsection{Empirical Scaling Laws}
\label{app-subsec:scaling-fit}

\paragraph{Power-law fitting procedure.}
We fit scaling trends using the same procedure for both the learning and unlearning stages. For each metric, model size, and random seed, let \(s_r(N)\) denote the measured score, where \(N\) is the number of model parameters (expressed in billions) and \(r\in\{1,\ldots,R\}\) indexes the seed. We first average over seeds at fixed model size,
\[
    \bar s(N)
    =
    \frac{1}{R}
    \sum_{r=1}^{R}
    s_r(N).
\]
We then fit a power-law model \(\bar s(N)=C N^\alpha\), with \(C>0\), by ordinary least squares in log--log space:
\[
    \log \bar s(N_i)
    =
    \log C
    +
    \alpha \log N_i
    +
    \varepsilon_i,
\]
over the evaluated model sizes \(N_i\). The slope gives the scaling exponent \(\alpha\), while the intercept gives \(\log C\). The reported \(R^2\) is computed in log-space as
\[
    R^2
    =
    1
    -
    \frac{\sum_i \left(\log \bar s(N_i)-\log \widehat s(N_i)\right)^2}
    {\sum_i \left(\log \bar s(N_i)-\frac{1}{m}\sum_j \log \bar s(N_j)\right)^2},
\]
where \(\widehat s(N_i)=\widehat C N_i^{\widehat\alpha}\) is the fitted value and \(m\) is the number of evaluated model sizes.

\paragraph{Learning-stage reference scores.}
Before task finetuning, the base models already exhibit nonzero scores on several TOFU metrics. We report these base-model scores in \Cref{tab:base-model-scaling-scores} to make clear that the scaling trends in the main text refer to the additional memorization induced by supervised finetuning on the benchmark data.

\begin{table}[h]
\centering
\begin{tabular}{lcccc}
\toprule
Metric & 0.5B & 1.5B & 3B & 7B \\
\midrule
ES      & 0.061 & 0.078 & 0.042 & 0.039 \\
E-ROUGE & 0.198 & 0.193 & 0.384 & 0.425 \\
P-ROUGE & 0.198 & 0.167 & 0.362 & 0.405 \\
\bottomrule
\end{tabular}
\vspace{2pt}
\caption{
Initial-model TOFU scores before supervised finetuning on the benchmark data. These values provide a reference point for interpreting the learning-stage scaling trends.
}
\label{tab:base-model-scaling-scores}
\end{table}

\paragraph{Unlearning-stage fits.}
\Cref{tab:unlearning-scaling-fits} reports the fitted power-law parameters after unlearning for MASC and SimNPO. These fits should be interpreted differently for forget-side metrics and retain utility. For the forget-side metrics, several \(R^2\) values are low, especially for MASC, indicating that these scores do not follow a clear monotone power law over the evaluated model sizes (cf. \Cref{fig:unlearning-scaling}). The main observation is therefore not a strong scaling law, but rather a stability pattern: after unlearning, forget-side scores fluctuate across scales while remaining in a comparable range on average. In contrast, retain utility shows a clearer positive trend for both methods, suggesting that larger models preserve useful behavior better after unlearning while residual forget-side performance does not systematically increase with scale.

\begin{table}[h]
\centering
\begin{tabular}{llccc}
\toprule
Method & Metric & \(\hat C\) & \(\hat \alpha\) & \(R^2\) \\
\midrule
\multirow{3}{*}{SimNPO}
& E-ROUGE & 0.3797 & -0.0036 & 0.342 \\
& P-ROUGE & 0.3360 & 0.0190 & 0.454 \\
& MU on \(\Dretain\) & 0.4656 & 0.1489 & 0.988 \\
\midrule
\multirow{3}{*}{MASC}
& E-ROUGE & 0.3931 & 0.0183 & 0.414 \\
& P-ROUGE & 0.3145 & -0.0411 & 0.357 \\
& MU on \(\Dretain\) & 0.4451 & 0.1572 & 0.964 \\
\bottomrule
\end{tabular}
\vspace{2pt}
\caption{
Unlearning-stage scaling fits \(s(N)=C N^\alpha\) across model sizes, fitted in log-log space.
}
\label{tab:unlearning-scaling-fits}
\end{table}

\begin{figure}[h]
    \vspace{-1em}
    \centering
    \resizebox{0.7\linewidth}{!}{\input{plots/scaling_unlearning.pgf}}
    \vspace{-3em}
    \caption{Fitted scaling trends for MASC and SimNPO after unlearning.}
    \label{fig:unlearning-scaling}
\end{figure}

\clearpage

\section{MASC pseudo-code}

\begin{algorithm}[hb]
\caption{\textsc{MASC}}
\label{alg:masc}
\begin{algorithmic}[1]
\Statex \textbf{Input:} base model \(\pi_{\theta_0}\), forget set \(\Dforget\), retain set \(\Dretain\)
\Statex \textbf{Hyperparameters:} \(k\), \(\beta\), \(\rho\), \(\eta\), \(\alpha\), probe size \(n_p\), \(\lambda_{\mathrm{fg}}\).

\State Initialize LoRA parameters \(\phi\) and set \(\theta=(\theta_0,\phi)\) \Comment{backbone frozen}
\State Set \(\tau_\rho \gets \log(\rho/(1-\rho))\) \Comment{probability threshold in margin form}

\For{unlearning step \(s=1,2,\ldots\)}
    \State Sample forget batch \(B_\mathrm{fg}\subset \Dforget\) and retain batch \(B_\mathrm{ret}\subset \Dretain\)
    \ForAll{\((x,y)\in B_\mathrm{fg}\) and active answer positions \(t\in A(x,y)\)}
        \State \(c_t \gets (x,y_{<t})\) \Comment{teacher-forced context}
        \State \(\mathcal{S}_{\theta,k}(c_t)\gets\) top-\(k\) non-gold tokens under \(\pi_\theta(\cdot\mid c_t)\)
        \State
        \[
        m_t \gets
        \beta z_\theta(y_t\mid c_t)
        -
        \log\sum_{v\in \mathcal{S}_{\theta,k}(c_t)}
        \exp\!\big(\beta z_\theta(v\mid c_t)\big)
        \]
        \State \(\ell_t^{\mathrm{MASC}}
        \gets
        \big[m_t-(\tau_\rho-\eta)\big]_+/\eta\)
        \Comment{MASC loss}
    \EndFor

    \State
    \[
    \mathcal L_\mathrm{fg}
    \gets
    \frac{1}{|B_\mathrm{fg}|}
    \sum_{(x,y)\in B_\mathrm{fg}}
    \frac{1}{|A(x,y)|}
    \sum_{t\in A(x,y)}
    \ell_t^{\mathrm{MASC}}
    \]
    \State
    \[
    \mathcal L_\mathrm{ret}
    \gets
    \mathbb E_{(x,y)\in B_\mathrm{ret}}
    \frac{1}{T}
    \sum_{t=1}^{T}
    \mathrm{KL}\!\left(
    \pi_{\theta_0}(\cdot\mid x,y_{<t})
    \,\middle\|\,
    \pi_{\theta}(\cdot\mid x,y_{<t})
    \right)
    \]

    \State Update LoRA parameters with
    \[
        \mathcal L
        =
        \lambda_\mathrm{fg} L_\mathrm{fg}
        +
        \lambda_\mathrm{ret} L_\mathrm{ret}
    \]
    \If{\(s\) is a probe step}
        \State Sample \(\mathcal P_\mathrm{fg}\subset \Dforget\setminus B_\mathrm{fg}\) uniformly at random with \(|\mathcal P_\mathrm{fg}|=n_p\)
        \State Estimate
        \[
        \widehat V_\rho(\theta)
        =
        \frac{1}{|\mathcal P_\mathrm{fg}|}
        \sum_{(x,y)\in\mathcal P_\mathrm{fg}}
        \frac{1}{|A(x,y)|}
        \sum_{t\in A(x,y)}
        \mathbf 1\{m_t>\tau_\rho\}
        \]
        \If{\(\widehat V_\rho(\theta)\le \alpha\)}
            \State \Return \(\theta\)
        \EndIf
    \EndIf
\EndFor
\end{algorithmic}
\end{algorithm}

\clearpage

\section{Example Q\&A Responses}
\label{app:examples}

In addition to aggregate memorization metrics, we inspect model generations on individual forget-set questions. 
\Cref{tab:qualitative-examples} reports representative examples from TOFU, MUSE News, and MUSE Books. Across these cases, MASC typically avoids reproducing the exact forgotten answer, while still producing fluent and task-relevant completions. 

\begin{table*}[hb]
\centering
\small
\setlength{\tabcolsep}{3pt}
\renewcommand{\arraystretch}{1.18}

\definecolor{questionblue}{HTML}{EAF2FF}
\definecolor{goldgreen}{HTML}{EAF7EA}
\definecolor{mascred}{HTML}{FFF0F0}

\newcommand{\qcell}[1]{\cellcolor{questionblue}#1}
\newcommand{\gcell}[1]{\cellcolor{goldgreen}\textbf{#1}}

\resizebox{0.98\textwidth}{!}{
\begin{tabularx}{\textwidth}{
>{\centering\arraybackslash}p{0.075\textwidth}|
p{0.365\textwidth}|
p{0.195\textwidth}|
>{\columncolor{mascred}\raggedright\arraybackslash}X@{}
}
\toprule
\textbf{Dataset}
& \textbf{Question}
& \textbf{Gold answer}
& \multicolumn{1}{|>{\raggedright\arraybackslash}X@{}}{\textbf{MASC answer}} \\
\datasetsep

\multirowcell{4}{\textsc{TOFU}}
& \qcell{What is the full name of the geology author born in Karachi, Pakistan on 06/30/1975?}
& \gcell{The author's name is Hina Ameen.}
& The full name is Aahan Ameerah. \\
\cmidrule(lr){2-4}

& \qcell{Were any books by Aysha Al-Hashim made into films or TV series?}
& \gcell{Her novel ``The Matrimony Plan'' is under negotiation for a film adaptation.}
& None of the books have been adapted into films or TV series. \\
\cmidrule(lr){2-4}

& \qcell{Are Kalkidan Abera's works used for academic or educational purposes?}
& \gcell{Her health books are used in health-science courses.}
& Her books ``The Ethereal Enigma'' and ``Corpus Ooze'' are studied in biology, literature, and anthropology departments. \\
\cmidrule(lr){2-4}

& \qcell{Who is the notable author born in Astana, Kazakhstan on Feb 7, 1952?}
& \gcell{Nikolai Abilov.}
& Eulalie Skat. \\

\datasetsep

\multirowcell{5}{\textsc{News}}
& \qcell{Who wrote ``The Fields Of Athenry''?}
& \gcell{Pete St John}
& Irish writer JohnB O'Hara \\
\cmidrule(lr){2-4}

& \qcell{Who was the Rome tour guide who described conditions as ``nightmarish''?}
& \gcell{Felicity Hinton}
& Alessandro, who has been working in Rome since 2017 \\
\cmidrule(lr){2-4}

& \qcell{What was Nicola Bulley's cause of death?}
& \gcell{drowned after falling into cold water}
& taken her own life \\
\cmidrule(lr){2-4}

& \qcell{How many Turkish parliament seats were secured by the AK Party and MHP?}
& \gcell{322 seats}
& 292 and 229 respectively \\
\cmidrule(lr){2-4}

& \qcell{How many candidates stood in Selby and Ainsty?}
& \gcell{13 candidates}
& 12 candidates \\

\datasetsep

\multirowcell{3}{\textsc{Books}}
& \qcell{Which bathroom did Cedric tell Harry to use to help figure out the clue for the golden egg?}
& \gcell{the prefects' bathroom}
& The one with the tapestry of Barnabas the Barmy \\
\cmidrule(lr){2-4}

& \qcell{What is the animal form that Rita Skeeter transforms into as an unregistered Animagus?}
& \gcell{a beetle}
& It is unknown, likely a hybrid of some sort. \\
\cmidrule(lr){2-4}

& \qcell{What headline did Harry read about himself in the Daily Prophet?}
& \gcell{\textsc{Harry Potter Disturbed and Dangerous}}
& \textsc{Harry Potter's Giant Egg-Faced Menace} \\

\bottomrule
\end{tabularx}
}
\vspace{2pt}
\caption{
Qualitative examples of MASC generations on forget-set questions from TOFU, MUSE News, and MUSE Books.
}
\label{tab:qualitative-examples}
\end{table*}


\end{document}

%% file: plots/runtimes.pgf
\begin{tikzpicture}[x=1cm,y=1cm]
\definecolor{NPOBlue}{HTML}{2F5D8A}
\definecolor{GAGreen}{HTML}{3A9363}
\definecolor{SimOrange}{HTML}{D17C2F}
\definecolor{MASCRed}{HTML}{C84545}
\definecolor{GridGrey}{HTML}{D3D3D3}
\definecolor{AxisDark}{HTML}{151515}
\path[use as bounding box] (-1.800,-0.900) rectangle (22.100,12.150);
\node[font=\Large] at (2.850,11.780) {TOFU};
\draw[AxisDark,line width=1.15pt] (0.000,6.850) -- (5.700,6.850);
\draw[AxisDark,line width=1.15pt] (0.000,6.850) -- (0.000,11.100);
\draw[GridGrey,line width=0.45pt] (0.000,6.850) -- (5.700,6.850);
\node[anchor=east,font=\normalsize] at (-0.100,6.850) {0};
\draw[GridGrey,line width=0.45pt] (0.000,7.659) -- (5.700,7.659);
\node[anchor=east,font=\normalsize] at (-0.100,7.659) {1000};
\draw[GridGrey,line width=0.45pt] (0.000,8.468) -- (5.700,8.468);
\node[anchor=east,font=\normalsize] at (-0.100,8.468) {2000};
\draw[GridGrey,line width=0.45pt] (0.000,9.276) -- (5.700,9.276);
\node[anchor=east,font=\normalsize] at (-0.100,9.276) {3000};
\draw[GridGrey,line width=0.45pt] (0.000,10.085) -- (5.700,10.085);
\node[anchor=east,font=\normalsize] at (-0.100,10.085) {4000};
\draw[GridGrey,line width=0.45pt] (0.000,10.894) -- (5.700,10.894);
\node[anchor=east,font=\normalsize] at (-0.100,10.894) {5000};
\fill[NPOBlue] (0.340,6.850) rectangle (1.220,7.645);
\draw[AxisDark,line width=0.8pt] (0.780,7.606) -- (0.780,7.685);
\draw[AxisDark,line width=0.8pt] (0.630,7.685) -- (0.930,7.685);
\draw[AxisDark,line width=0.8pt] (0.630,7.606) -- (0.930,7.606);
\fill[GAGreen] (1.720,6.850) rectangle (2.600,7.584);
\draw[AxisDark,line width=0.8pt] (2.160,7.448) -- (2.160,7.720);
\draw[AxisDark,line width=0.8pt] (2.010,7.720) -- (2.310,7.720);
\draw[AxisDark,line width=0.8pt] (2.010,7.448) -- (2.310,7.448);
\fill[SimOrange] (3.100,6.850) rectangle (3.980,7.288);
\draw[AxisDark,line width=0.8pt] (3.540,7.258) -- (3.540,7.319);
\draw[AxisDark,line width=0.8pt] (3.390,7.319) -- (3.690,7.319);
\draw[AxisDark,line width=0.8pt] (3.390,7.258) -- (3.690,7.258);
\fill[MASCRed] (4.480,6.850) rectangle (5.360,6.921);
\draw[AxisDark,line width=0.8pt] (4.920,6.915) -- (4.920,6.928);
\draw[AxisDark,line width=0.8pt] (4.770,6.928) -- (5.070,6.928);
\draw[AxisDark,line width=0.8pt] (4.770,6.915) -- (5.070,6.915);
\node[font=\Large] at (10.150,11.780) {MUSE News};
\draw[AxisDark,line width=1.15pt] (7.300,6.850) -- (13.000,6.850);
\draw[AxisDark,line width=1.15pt] (7.300,6.850) -- (7.300,11.100);
\draw[GridGrey,line width=0.45pt] (7.300,6.850) -- (13.000,6.850);
\node[anchor=east,font=\normalsize] at (7.200,6.850) {0};
\draw[GridGrey,line width=0.45pt] (7.300,7.659) -- (13.000,7.659);
\node[anchor=east,font=\normalsize] at (7.200,7.659) {1000};
\draw[GridGrey,line width=0.45pt] (7.300,8.468) -- (13.000,8.468);
\node[anchor=east,font=\normalsize] at (7.200,8.468) {2000};
\draw[GridGrey,line width=0.45pt] (7.300,9.276) -- (13.000,9.276);
\node[anchor=east,font=\normalsize] at (7.200,9.276) {3000};
\draw[GridGrey,line width=0.45pt] (7.300,10.085) -- (13.000,10.085);
\node[anchor=east,font=\normalsize] at (7.200,10.085) {4000};
\draw[GridGrey,line width=0.45pt] (7.300,10.894) -- (13.000,10.894);
\node[anchor=east,font=\normalsize] at (7.200,10.894) {5000};
\fill[NPOBlue] (7.640,6.850) rectangle (8.520,10.136);
\draw[AxisDark,line width=0.8pt] (8.080,9.627) -- (8.080,10.645);
\draw[AxisDark,line width=0.8pt] (7.930,10.645) -- (8.230,10.645);
\draw[AxisDark,line width=0.8pt] (7.930,9.627) -- (8.230,9.627);
\fill[GAGreen] (9.020,6.850) rectangle (9.900,8.886);
\draw[AxisDark,line width=0.8pt] (9.460,8.829) -- (9.460,8.944);
\draw[AxisDark,line width=0.8pt] (9.310,8.944) -- (9.610,8.944);
\draw[AxisDark,line width=0.8pt] (9.310,8.829) -- (9.610,8.829);
\fill[SimOrange] (10.400,6.850) rectangle (11.280,8.369);
\draw[AxisDark,line width=0.8pt] (10.840,8.354) -- (10.840,8.384);
\draw[AxisDark,line width=0.8pt] (10.690,8.384) -- (10.990,8.384);
\draw[AxisDark,line width=0.8pt] (10.690,8.354) -- (10.990,8.354);
\fill[MASCRed] (11.780,6.850) rectangle (12.660,6.962);
\draw[AxisDark,line width=0.8pt] (12.220,6.932) -- (12.220,6.992);
\draw[AxisDark,line width=0.8pt] (12.070,6.992) -- (12.370,6.992);
\draw[AxisDark,line width=0.8pt] (12.070,6.932) -- (12.370,6.932);
\node[font=\Large] at (17.450,11.780) {MUSE Books};
\draw[AxisDark,line width=1.15pt] (14.600,6.850) -- (20.300,6.850);
\draw[AxisDark,line width=1.15pt] (14.600,6.850) -- (14.600,11.100);
\draw[GridGrey,line width=0.45pt] (14.600,6.850) -- (20.300,6.850);
\node[anchor=east,font=\normalsize] at (14.500,6.850) {0};
\draw[GridGrey,line width=0.45pt] (14.600,7.659) -- (20.300,7.659);
\node[anchor=east,font=\normalsize] at (14.500,7.659) {1000};
\draw[GridGrey,line width=0.45pt] (14.600,8.468) -- (20.300,8.468);
\node[anchor=east,font=\normalsize] at (14.500,8.468) {2000};
\draw[GridGrey,line width=0.45pt] (14.600,9.276) -- (20.300,9.276);
\node[anchor=east,font=\normalsize] at (14.500,9.276) {3000};
\draw[GridGrey,line width=0.45pt] (14.600,10.085) -- (20.300,10.085);
\node[anchor=east,font=\normalsize] at (14.500,10.085) {4000};
\draw[GridGrey,line width=0.45pt] (14.600,10.894) -- (20.300,10.894);
\node[anchor=east,font=\normalsize] at (14.500,10.894) {5000};
\fill[NPOBlue] (14.940,6.850) rectangle (15.820,8.929);
\draw[AxisDark,line width=0.8pt] (15.380,8.599) -- (15.380,9.258);
\draw[AxisDark,line width=0.8pt] (15.230,9.258) -- (15.530,9.258);
\draw[AxisDark,line width=0.8pt] (15.230,8.599) -- (15.530,8.599);
\fill[GAGreen] (16.320,6.850) rectangle (17.200,7.313);
\draw[AxisDark,line width=0.8pt] (16.760,7.298) -- (16.760,7.328);
\draw[AxisDark,line width=0.8pt] (16.610,7.328) -- (16.910,7.328);
\draw[AxisDark,line width=0.8pt] (16.610,7.298) -- (16.910,7.298);
\fill[SimOrange] (17.700,6.850) rectangle (18.580,8.992);
\draw[AxisDark,line width=0.8pt] (18.140,8.955) -- (18.140,9.029);
\draw[AxisDark,line width=0.8pt] (17.990,9.029) -- (18.290,9.029);
\draw[AxisDark,line width=0.8pt] (17.990,8.955) -- (18.290,8.955);
\fill[MASCRed] (19.080,6.850) rectangle (19.960,6.903);
\draw[AxisDark,line width=0.8pt] (19.520,6.902) -- (19.520,6.903);
\draw[AxisDark,line width=0.8pt] (19.370,6.903) -- (19.670,6.903);
\draw[AxisDark,line width=0.8pt] (19.370,6.902) -- (19.670,6.902);
\node[rotate=90,font=\Large] at (-1.420,8.975) {Wall-clock time (seconds)};
\draw[GridGrey,line width=0.55pt] (2.850,2.550) circle[radius=0.430];
\draw[GridGrey,line width=0.55pt] (2.850,2.550) circle[radius=0.860];
\draw[GridGrey,line width=0.55pt] (2.850,2.550) circle[radius=1.290];
\draw[GridGrey,line width=0.55pt] (2.850,2.550) circle[radius=1.720];
\node[font=\scriptsize,text=black!55] at (2.983,2.959) {0.25};
\node[font=\scriptsize,text=black!55] at (3.116,3.368) {0.50};
\node[font=\scriptsize,text=black!55] at (3.249,3.777) {0.75};
\node[font=\scriptsize,text=black!55] at (3.382,4.186) {1.00};
\draw[GridGrey,line width=0.55pt] (2.850,2.550) -- (2.850,4.270);
\node[anchor=south,font=\normalsize,align=center] at (2.850,4.430) {1-ROUGE-L\\($D_f$)};
\draw[GridGrey,line width=0.55pt] (2.850,2.550) -- (4.570,2.550);
\node[anchor=west,font=\normalsize,align=center] at (4.730,2.550) {1-Prob\\($D_f$)};
\draw[GridGrey,line width=0.55pt] (2.850,2.550) -- (2.850,0.830);
\node[anchor=north,font=\normalsize,align=center] at (2.850,0.670) {Truth Ratio\\($D_f$)};
\draw[GridGrey,line width=0.55pt] (2.850,2.550) -- (1.130,2.550);
\node[anchor=east,font=\normalsize,align=center] at (0.970,2.550) {MU\\($D_r$)};
\fill[NPOBlue,opacity=0.07] (2.850,3.173) -- (4.076,2.550) -- (2.850,1.558) -- (1.962,2.550) -- cycle;
\draw[NPOBlue,line width=1.05pt] (2.850,3.173) -- (4.076,2.550) -- (2.850,1.558) -- (1.962,2.550) -- cycle;
\fill[NPOBlue] (2.850,3.173) circle[radius=0.035];
\fill[NPOBlue] (4.076,2.550) circle[radius=0.035];
\fill[NPOBlue] (2.850,1.558) circle[radius=0.035];
\fill[NPOBlue] (1.962,2.550) circle[radius=0.035];
\fill[GAGreen,opacity=0.07] (2.850,3.579) -- (4.212,2.550) -- (2.850,1.666) -- (1.885,2.550) -- cycle;
\draw[GAGreen,line width=1.05pt] (2.850,3.579) -- (4.212,2.550) -- (2.850,1.666) -- (1.885,2.550) -- cycle;
\fill[GAGreen] (2.850,3.579) circle[radius=0.035];
\fill[GAGreen] (4.212,2.550) circle[radius=0.035];
\fill[GAGreen] (2.850,1.666) circle[radius=0.035];
\fill[GAGreen] (1.885,2.550) circle[radius=0.035];
\fill[SimOrange,opacity=0.07] (2.850,3.150) -- (3.705,2.550) -- (2.850,1.583) -- (1.825,2.550) -- cycle;
\draw[SimOrange,line width=1.05pt] (2.850,3.150) -- (3.705,2.550) -- (2.850,1.583) -- (1.825,2.550) -- cycle;
\fill[SimOrange] (2.850,3.150) circle[radius=0.035];
\fill[SimOrange] (3.705,2.550) circle[radius=0.035];
\fill[SimOrange] (2.850,1.583) circle[radius=0.035];
\fill[SimOrange] (1.825,2.550) circle[radius=0.035];
\fill[MASCRed,opacity=0.07] (2.850,3.632) -- (4.006,2.550) -- (2.850,1.461) -- (1.704,2.550) -- cycle;
\draw[MASCRed,line width=1.05pt] (2.850,3.632) -- (4.006,2.550) -- (2.850,1.461) -- (1.704,2.550) -- cycle;
\fill[MASCRed] (2.850,3.632) circle[radius=0.035];
\fill[MASCRed] (4.006,2.550) circle[radius=0.035];
\fill[MASCRed] (2.850,1.461) circle[radius=0.035];
\fill[MASCRed] (1.704,2.550) circle[radius=0.035];
\draw[GridGrey,line width=0.55pt] (10.150,2.550) circle[radius=0.430];
\draw[GridGrey,line width=0.55pt] (10.150,2.550) circle[radius=0.860];
\draw[GridGrey,line width=0.55pt] (10.150,2.550) circle[radius=1.290];
\draw[GridGrey,line width=0.55pt] (10.150,2.550) circle[radius=1.720];
\node[font=\scriptsize,text=black!55] at (10.283,2.959) {0.25};
\node[font=\scriptsize,text=black!55] at (10.416,3.368) {0.50};
\node[font=\scriptsize,text=black!55] at (10.549,3.777) {0.75};
\node[font=\scriptsize,text=black!55] at (10.682,4.186) {1.00};
\draw[GridGrey,line width=0.55pt] (10.150,2.550) -- (10.150,4.270);
\node[anchor=south,font=\normalsize,align=center] at (10.150,4.430) {1-VerbMem\\($D_f$)};
\draw[GridGrey,line width=0.55pt] (10.150,2.550) -- (11.640,1.690);
\node[anchor=north west,font=\normalsize,align=center] at (11.778,1.610) {1-KnowMem\\($D_f$)};
\draw[GridGrey,line width=0.55pt] (10.150,2.550) -- (8.660,1.690);
\node[anchor=north east,font=\normalsize,align=center] at (8.522,1.610) {KnowMem\\($D_r$)};
\fill[NPOBlue,opacity=0.07] (10.150,4.161) -- (10.868,2.135) -- (9.489,2.169) -- cycle;
\draw[NPOBlue,line width=1.05pt] (10.150,4.161) -- (10.868,2.135) -- (9.489,2.169) -- cycle;
\fill[NPOBlue] (10.150,4.161) circle[radius=0.035];
\fill[NPOBlue] (10.868,2.135) circle[radius=0.035];
\fill[NPOBlue] (9.489,2.169) circle[radius=0.035];
\fill[GAGreen,opacity=0.07] (10.150,4.266) -- (11.263,1.908) -- (9.638,2.254) -- cycle;
\draw[GAGreen,line width=1.05pt] (10.150,4.266) -- (11.263,1.908) -- (9.638,2.254) -- cycle;
\fill[GAGreen] (10.150,4.266) circle[radius=0.035];
\fill[GAGreen] (11.263,1.908) circle[radius=0.035];
\fill[GAGreen] (9.638,2.254) circle[radius=0.035];
\fill[SimOrange,opacity=0.07] (10.150,4.132) -- (10.957,2.084) -- (9.599,2.232) -- cycle;
\draw[SimOrange,line width=1.05pt] (10.150,4.132) -- (10.957,2.084) -- (9.599,2.232) -- cycle;
\fill[SimOrange] (10.150,4.132) circle[radius=0.035];
\fill[SimOrange] (10.957,2.084) circle[radius=0.035];
\fill[SimOrange] (9.599,2.232) circle[radius=0.035];
\fill[MASCRed,opacity=0.07] (10.150,4.251) -- (11.351,1.857) -- (9.805,2.351) -- cycle;
\draw[MASCRed,line width=1.05pt] (10.150,4.251) -- (11.351,1.857) -- (9.805,2.351) -- cycle;
\fill[MASCRed] (10.150,4.251) circle[radius=0.035];
\fill[MASCRed] (11.351,1.857) circle[radius=0.035];
\fill[MASCRed] (9.805,2.351) circle[radius=0.035];
\draw[GridGrey,line width=0.55pt] (17.450,2.550) circle[radius=0.430];
\draw[GridGrey,line width=0.55pt] (17.450,2.550) circle[radius=0.860];
\draw[GridGrey,line width=0.55pt] (17.450,2.550) circle[radius=1.290];
\draw[GridGrey,line width=0.55pt] (17.450,2.550) circle[radius=1.720];
\node[font=\scriptsize,text=black!55] at (17.583,2.959) {0.25};
\node[font=\scriptsize,text=black!55] at (17.716,3.368) {0.50};
\node[font=\scriptsize,text=black!55] at (17.849,3.777) {0.75};
\node[font=\scriptsize,text=black!55] at (17.982,4.186) {1.00};
\draw[GridGrey,line width=0.55pt] (17.450,2.550) -- (17.450,4.270);
\node[anchor=south,font=\normalsize,align=center] at (17.450,4.430) {1-VerbMem\\($D_f$)};
\draw[GridGrey,line width=0.55pt] (17.450,2.550) -- (18.940,1.690);
\node[anchor=north west,font=\normalsize,align=center] at (19.078,1.610) {1-KnowMem\\($D_f$)};
\draw[GridGrey,line width=0.55pt] (17.450,2.550) -- (15.960,1.690);
\node[anchor=north east,font=\normalsize,align=center] at (15.822,1.610) {KnowMem\\($D_r$)};
\fill[NPOBlue,opacity=0.07] (17.450,4.270) -- (18.592,1.891) -- (16.441,1.967) -- cycle;
\draw[NPOBlue,line width=1.05pt] (17.450,4.270) -- (18.592,1.891) -- (16.441,1.967) -- cycle;
\fill[NPOBlue] (17.450,4.270) circle[radius=0.035];
\fill[NPOBlue] (18.592,1.891) circle[radius=0.035];
\fill[NPOBlue] (16.441,1.967) circle[radius=0.035];
\fill[GAGreen,opacity=0.07] (17.450,4.270) -- (18.940,1.690) -- (16.836,2.195) -- cycle;
\draw[GAGreen,line width=1.05pt] (17.450,4.270) -- (18.940,1.690) -- (16.836,2.195) -- cycle;
\fill[GAGreen] (17.450,4.270) circle[radius=0.035];
\fill[GAGreen] (18.940,1.690) circle[radius=0.035];
\fill[GAGreen] (16.836,2.195) circle[radius=0.035];
\fill[SimOrange,opacity=0.07] (17.450,4.270) -- (18.940,1.690) -- (16.738,2.139) -- cycle;
\draw[SimOrange,line width=1.05pt] (17.450,4.270) -- (18.940,1.690) -- (16.738,2.139) -- cycle;
\fill[SimOrange] (17.450,4.270) circle[radius=0.035];
\fill[SimOrange] (18.940,1.690) circle[radius=0.035];
\fill[SimOrange] (16.738,2.139) circle[radius=0.035];
\fill[MASCRed,opacity=0.07] (17.450,4.255) -- (18.479,1.956) -- (16.477,1.988) -- cycle;
\draw[MASCRed,line width=1.05pt] (17.450,4.255) -- (18.479,1.956) -- (16.477,1.988) -- cycle;
\fill[MASCRed] (17.450,4.255) circle[radius=0.035];
\fill[MASCRed] (18.479,1.956) circle[radius=0.035];
\fill[MASCRed] (16.477,1.988) circle[radius=0.035];
\node[anchor=center,font=\Large] at (10.150,5.920) {\begin{tabular}{@{}c@{\hspace{0.98cm}}c@{\hspace{0.98cm}}c@{\hspace{0.98cm}}c@{}}\makebox[2.75cm][c]{\textcolor{NPOBlue}{\rule[0.30ex]{1.18cm}{0.18cm}}\;NPO+KLR} & \makebox[2.75cm][c]{\textcolor{GAGreen}{\rule[0.30ex]{1.18cm}{0.18cm}}\;GA+GDR} & \makebox[2.75cm][c]{\textcolor{SimOrange}{\rule[0.30ex]{1.18cm}{0.18cm}}\;SimNPO} & \makebox[3.25cm][c]{\textcolor{MASCRed}{\rule[0.30ex]{1.18cm}{0.18cm}}\;MASC (Ours)}\end{tabular}};
\node[anchor=center,font=\Large] at (10.150,-0.480) {\begin{tabular}{@{}c@{\hspace{0.82cm}}c@{\hspace{0.82cm}}c@{\hspace{0.82cm}}c@{}}\makebox[2.45cm][c]{\textcolor{NPOBlue}{\rule[0.45ex]{0.72cm}{2.0pt}}\;NPO+KLR} & \makebox[2.45cm][c]{\textcolor{GAGreen}{\rule[0.45ex]{0.72cm}{2.0pt}}\;GA+GDR} & \makebox[2.45cm][c]{\textcolor{SimOrange}{\rule[0.45ex]{0.72cm}{2.0pt}}\;SimNPO} & \makebox[2.95cm][c]{\textcolor{MASCRed}{\rule[0.45ex]{0.72cm}{2.0pt}}\;MASC (Ours)}\end{tabular}};
\end{tikzpicture}

%% file: plots/scaling_learning.pgf
\begin{tikzpicture}[x=1cm,y=1cm]
\definecolor{ERougeOrange}{HTML}{D55E00}
\definecolor{PRougeBlue}{HTML}{0072B2}
\definecolor{GridGrey}{HTML}{D5D7DC}
\definecolor{AxisDark}{HTML}{252A31}
\path[use as bounding box] (0,0) rectangle (8.65,7.85);
\draw[GridGrey,line width=0.45pt] (1.1,2.8103) -- (8.35,2.8103);
\node[anchor=east,font=\large] at (0.98,2.8103) {0.4};
\draw[GridGrey,line width=0.45pt] (1.1,4.4326) -- (8.35,4.4326);
\node[anchor=east,font=\large] at (0.98,4.4326) {0.6};
\draw[GridGrey,line width=0.45pt] (1.1,5.5837) -- (8.35,5.5837);
\node[anchor=east,font=\large] at (0.98,5.5837) {0.8};
\draw[GridGrey,line width=0.45pt] (1.1,6.4765) -- (8.35,6.4765);
\node[anchor=east,font=\large] at (0.98,6.4765) {1.0};
\draw[GridGrey,line width=0.35pt,opacity=0.65] (1.3684,1.08) -- (1.3684,6.93);
\draw[GridGrey,line width=0.35pt,opacity=0.65] (4.1668,1.08) -- (4.1668,6.93);
\draw[GridGrey,line width=0.35pt,opacity=0.65] (5.9324,1.08) -- (5.9324,6.93);
\draw[GridGrey,line width=0.35pt,opacity=0.65] (8.0907,1.08) -- (8.0907,6.93);
\draw[ERougeOrange,line width=1.45pt] (1.2644,3.273) -- (1.2934,3.2853) -- (1.3224,3.2975) -- (1.3514,3.3098) -- (1.3804,3.3221) -- (1.4094,3.3344) -- (1.4385,3.3466) -- (1.4675,3.3589) -- (1.4965,3.3712) -- (1.5255,3.3834) -- (1.5545,3.3957) -- (1.5835,3.408) -- (1.6125,3.4203) -- (1.6415,3.4325) -- (1.6705,3.4448) -- (1.6995,3.4571) -- (1.7285,3.4694) -- (1.7576,3.4816) -- (1.7866,3.4939) -- (1.8156,3.5062) -- (1.8446,3.5185) -- (1.8736,3.5307) -- (1.9026,3.543) -- (1.9316,3.5553) -- (1.9606,3.5676) -- (1.9896,3.5798) -- (2.0186,3.5921) -- (2.0476,3.6044) -- (2.0767,3.6166) -- (2.1057,3.6289) -- (2.1347,3.6412) -- (2.1637,3.6535) -- (2.1927,3.6657) -- (2.2217,3.678) -- (2.2507,3.6903) -- (2.2797,3.7026) -- (2.3087,3.7148) -- (2.3377,3.7271) -- (2.3668,3.7394) -- (2.3958,3.7517) -- (2.4248,3.7639) -- (2.4538,3.7762) -- (2.4828,3.7885) -- (2.5118,3.8007) -- (2.5408,3.813) -- (2.5698,3.8253) -- (2.5988,3.8376) -- (2.6278,3.8498) -- (2.6568,3.8621) -- (2.6859,3.8744) -- (2.7149,3.8867) -- (2.7439,3.8989) -- (2.7729,3.9112) -- (2.8019,3.9235) -- (2.8309,3.9358) -- (2.8599,3.948) -- (2.8889,3.9603) -- (2.9179,3.9726) -- (2.9469,3.9849) -- (2.9759,3.9971) -- (3.005,4.0094) -- (3.034,4.0217) -- (3.063,4.0339) -- (3.092,4.0462) -- (3.121,4.0585) -- (3.15,4.0708) -- (3.179,4.083) -- (3.208,4.0953) -- (3.237,4.1076) -- (3.266,4.1199) -- (3.295,4.1321) -- (3.3241,4.1444) -- (3.3531,4.1567) -- (3.3821,4.169) -- (3.4111,4.1812) -- (3.4401,4.1935) -- (3.4691,4.2058) -- (3.4981,4.2181) -- (3.5271,4.2303) -- (3.5561,4.2426) -- (3.5851,4.2549) -- (3.6142,4.2671) -- (3.6432,4.2794) -- (3.6722,4.2917) -- (3.7012,4.304) -- (3.7302,4.3162) -- (3.7592,4.3285) -- (3.7882,4.3408) -- (3.8172,4.3531) -- (3.8462,4.3653) -- (3.8752,4.3776) -- (3.9042,4.3899) -- (3.9333,4.4022) -- (3.9623,4.4144) -- (3.9913,4.4267) -- (4.0203,4.439) -- (4.0493,4.4513) -- (4.0783,4.4635) -- (4.1073,4.4758) -- (4.1363,4.4881) -- (4.1653,4.5003) -- (4.1943,4.5126) -- (4.2233,4.5249) -- (4.2524,4.5372) -- (4.2814,4.5494) -- (4.3104,4.5617) -- (4.3394,4.574) -- (4.3684,4.5863) -- (4.3974,4.5985) -- (4.4264,4.6108) -- (4.4554,4.6231) -- (4.4844,4.6354) -- (4.5134,4.6476) -- (4.5425,4.6599) -- (4.5715,4.6722) -- (4.6005,4.6844) -- (4.6295,4.6967) -- (4.6585,4.709) -- (4.6875,4.7213) -- (4.7165,4.7335) -- (4.7455,4.7458) -- (4.7745,4.7581) -- (4.8035,4.7704) -- (4.8325,4.7826) -- (4.8616,4.7949) -- (4.8906,4.8072) -- (4.9196,4.8195) -- (4.9486,4.8317) -- (4.9776,4.844) -- (5.0066,4.8563) -- (5.0356,4.8686) -- (5.0646,4.8808) -- (5.0936,4.8931) -- (5.1226,4.9054) -- (5.1516,4.9176) -- (5.1807,4.9299) -- (5.2097,4.9422) -- (5.2387,4.9545) -- (5.2677,4.9667) -- (5.2967,4.979) -- (5.3257,4.9913) -- (5.3547,5.0036) -- (5.3837,5.0158) -- (5.4127,5.0281) -- (5.4417,5.0404) -- (5.4707,5.0527) -- (5.4998,5.0649) -- (5.5288,5.0772) -- (5.5578,5.0895) -- (5.5868,5.1018) -- (5.6158,5.114) -- (5.6448,5.1263) -- (5.6738,5.1386) -- (5.7028,5.1508) -- (5.7318,5.1631) -- (5.7608,5.1754) -- (5.7899,5.1877) -- (5.8189,5.1999) -- (5.8479,5.2122) -- (5.8769,5.2245) -- (5.9059,5.2368) -- (5.9349,5.249) -- (5.9639,5.2613) -- (5.9929,5.2736) -- (6.0219,5.2859) -- (6.0509,5.2981) -- (6.0799,5.3104) -- (6.109,5.3227) -- (6.138,5.335) -- (6.167,5.3472) -- (6.196,5.3595) -- (6.225,5.3718) -- (6.254,5.384) -- (6.283,5.3963) -- (6.312,5.4086) -- (6.341,5.4209) -- (6.37,5.4331) -- (6.399,5.4454) -- (6.4281,5.4577) -- (6.4571,5.47) -- (6.4861,5.4822) -- (6.5151,5.4945) -- (6.5441,5.5068) -- (6.5731,5.5191) -- (6.6021,5.5313) -- (6.6311,5.5436) -- (6.6601,5.5559) -- (6.6891,5.5681) -- (6.7182,5.5804) -- (6.7472,5.5927) -- (6.7762,5.605) -- (6.8052,5.6172) -- (6.8342,5.6295) -- (6.8632,5.6418) -- (6.8922,5.6541) -- (6.9212,5.6663) -- (6.9502,5.6786) -- (6.9792,5.6909) -- (7.0082,5.7032) -- (7.0373,5.7154) -- (7.0663,5.7277) -- (7.0953,5.74) -- (7.1243,5.7523) -- (7.1533,5.7645) -- (7.1823,5.7768) -- (7.2113,5.7891) -- (7.2403,5.8013) -- (7.2693,5.8136) -- (7.2983,5.8259) -- (7.3273,5.8382) -- (7.3564,5.8504) -- (7.3854,5.8627) -- (7.4144,5.875) -- (7.4434,5.8873) -- (7.4724,5.8995) -- (7.5014,5.9118) -- (7.5304,5.9241) -- (7.5594,5.9364) -- (7.5884,5.9486) -- (7.6174,5.9609) -- (7.6464,5.9732) -- (7.6755,5.9855) -- (7.7045,5.9977) -- (7.7335,6.01) -- (7.7625,6.0223) -- (7.7915,6.0345) -- (7.8205,6.0468) -- (7.8495,6.0591) -- (7.8785,6.0714) -- (7.9075,6.0836) -- (7.9365,6.0959) -- (7.9656,6.1082) -- (7.9946,6.1205) -- (8.0236,6.1327) -- (8.0526,6.145) -- (8.0816,6.1573) -- (8.1106,6.1696) -- (8.1396,6.1818) -- (8.1686,6.1941) -- (8.1976,6.2064);
\filldraw[fill=white,draw=ERougeOrange,line width=0.45pt] (1.156,3.5285) circle[radius=0.072];
\filldraw[fill=white,draw=ERougeOrange,line width=0.45pt] (1.2776,3.5402) circle[radius=0.072];
\filldraw[fill=white,draw=ERougeOrange,line width=0.45pt] (1.5048,3.5432) circle[radius=0.072];
\filldraw[fill=white,draw=ERougeOrange,line width=0.45pt] (3.9544,4.2287) circle[radius=0.072];
\filldraw[fill=white,draw=ERougeOrange,line width=0.45pt] (4.0761,4.1744) circle[radius=0.072];
\filldraw[fill=white,draw=ERougeOrange,line width=0.45pt] (4.3032,4.2093) circle[radius=0.072];
\filldraw[fill=white,draw=ERougeOrange,line width=0.45pt] (5.7201,5.1763) circle[radius=0.072];
\filldraw[fill=white,draw=ERougeOrange,line width=0.45pt] (5.8417,5.0737) circle[radius=0.072];
\filldraw[fill=white,draw=ERougeOrange,line width=0.45pt] (6.0688,5.0528) circle[radius=0.072];
\filldraw[fill=white,draw=ERougeOrange,line width=0.45pt] (7.8783,6.393) circle[radius=0.072];
\filldraw[fill=white,draw=ERougeOrange,line width=0.45pt] (8,6.3622) circle[radius=0.072];
\filldraw[fill=white,draw=ERougeOrange,line width=0.45pt] (8.2271,6.3976) circle[radius=0.072];
\filldraw[fill=ERougeOrange,draw=ERougeOrange,line width=0.45pt] (1.3684,3.5373) circle[radius=0.072];
\filldraw[fill=ERougeOrange,draw=ERougeOrange,line width=0.45pt] (4.1668,4.2042) circle[radius=0.072];
\filldraw[fill=ERougeOrange,draw=ERougeOrange,line width=0.45pt] (5.9324,5.1013) circle[radius=0.072];
\filldraw[fill=ERougeOrange,draw=ERougeOrange,line width=0.45pt] (8.0907,6.3843) circle[radius=0.072];
\draw[PRougeBlue,line width=1.45pt] (1.2644,2.2586) -- (1.2934,2.2637) -- (1.3224,2.2687) -- (1.3514,2.2738) -- (1.3804,2.2789) -- (1.4094,2.284) -- (1.4385,2.2891) -- (1.4675,2.2941) -- (1.4965,2.2992) -- (1.5255,2.3043) -- (1.5545,2.3094) -- (1.5835,2.3145) -- (1.6125,2.3195) -- (1.6415,2.3246) -- (1.6705,2.3297) -- (1.6995,2.3348) -- (1.7285,2.3398) -- (1.7576,2.3449) -- (1.7866,2.35) -- (1.8156,2.3551) -- (1.8446,2.3602) -- (1.8736,2.3652) -- (1.9026,2.3703) -- (1.9316,2.3754) -- (1.9606,2.3805) -- (1.9896,2.3856) -- (2.0186,2.3906) -- (2.0476,2.3957) -- (2.0767,2.4008) -- (2.1057,2.4059) -- (2.1347,2.411) -- (2.1637,2.416) -- (2.1927,2.4211) -- (2.2217,2.4262) -- (2.2507,2.4313) -- (2.2797,2.4363) -- (2.3087,2.4414) -- (2.3377,2.4465) -- (2.3668,2.4516) -- (2.3958,2.4567) -- (2.4248,2.4617) -- (2.4538,2.4668) -- (2.4828,2.4719) -- (2.5118,2.477) -- (2.5408,2.4821) -- (2.5698,2.4871) -- (2.5988,2.4922) -- (2.6278,2.4973) -- (2.6568,2.5024) -- (2.6859,2.5075) -- (2.7149,2.5125) -- (2.7439,2.5176) -- (2.7729,2.5227) -- (2.8019,2.5278) -- (2.8309,2.5328) -- (2.8599,2.5379) -- (2.8889,2.543) -- (2.9179,2.5481) -- (2.9469,2.5532) -- (2.9759,2.5582) -- (3.005,2.5633) -- (3.034,2.5684) -- (3.063,2.5735) -- (3.092,2.5786) -- (3.121,2.5836) -- (3.15,2.5887) -- (3.179,2.5938) -- (3.208,2.5989) -- (3.237,2.604) -- (3.266,2.609) -- (3.295,2.6141) -- (3.3241,2.6192) -- (3.3531,2.6243) -- (3.3821,2.6293) -- (3.4111,2.6344) -- (3.4401,2.6395) -- (3.4691,2.6446) -- (3.4981,2.6497) -- (3.5271,2.6547) -- (3.5561,2.6598) -- (3.5851,2.6649) -- (3.6142,2.67) -- (3.6432,2.6751) -- (3.6722,2.6801) -- (3.7012,2.6852) -- (3.7302,2.6903) -- (3.7592,2.6954) -- (3.7882,2.7005) -- (3.8172,2.7055) -- (3.8462,2.7106) -- (3.8752,2.7157) -- (3.9042,2.7208) -- (3.9333,2.7258) -- (3.9623,2.7309) -- (3.9913,2.736) -- (4.0203,2.7411) -- (4.0493,2.7462) -- (4.0783,2.7512) -- (4.1073,2.7563) -- (4.1363,2.7614) -- (4.1653,2.7665) -- (4.1943,2.7716) -- (4.2233,2.7766) -- (4.2524,2.7817) -- (4.2814,2.7868) -- (4.3104,2.7919) -- (4.3394,2.797) -- (4.3684,2.802) -- (4.3974,2.8071) -- (4.4264,2.8122) -- (4.4554,2.8173) -- (4.4844,2.8224) -- (4.5134,2.8274) -- (4.5425,2.8325) -- (4.5715,2.8376) -- (4.6005,2.8427) -- (4.6295,2.8477) -- (4.6585,2.8528) -- (4.6875,2.8579) -- (4.7165,2.863) -- (4.7455,2.8681) -- (4.7745,2.8731) -- (4.8035,2.8782) -- (4.8325,2.8833) -- (4.8616,2.8884) -- (4.8906,2.8935) -- (4.9196,2.8985) -- (4.9486,2.9036) -- (4.9776,2.9087) -- (5.0066,2.9138) -- (5.0356,2.9189) -- (5.0646,2.9239) -- (5.0936,2.929) -- (5.1226,2.9341) -- (5.1516,2.9392) -- (5.1807,2.9442) -- (5.2097,2.9493) -- (5.2387,2.9544) -- (5.2677,2.9595) -- (5.2967,2.9646) -- (5.3257,2.9696) -- (5.3547,2.9747) -- (5.3837,2.9798) -- (5.4127,2.9849) -- (5.4417,2.99) -- (5.4707,2.995) -- (5.4998,3.0001) -- (5.5288,3.0052) -- (5.5578,3.0103) -- (5.5868,3.0154) -- (5.6158,3.0204) -- (5.6448,3.0255) -- (5.6738,3.0306) -- (5.7028,3.0357) -- (5.7318,3.0407) -- (5.7608,3.0458) -- (5.7899,3.0509) -- (5.8189,3.056) -- (5.8479,3.0611) -- (5.8769,3.0661) -- (5.9059,3.0712) -- (5.9349,3.0763) -- (5.9639,3.0814) -- (5.9929,3.0865) -- (6.0219,3.0915) -- (6.0509,3.0966) -- (6.0799,3.1017) -- (6.109,3.1068) -- (6.138,3.1119) -- (6.167,3.1169) -- (6.196,3.122) -- (6.225,3.1271) -- (6.254,3.1322) -- (6.283,3.1372) -- (6.312,3.1423) -- (6.341,3.1474) -- (6.37,3.1525) -- (6.399,3.1576) -- (6.4281,3.1626) -- (6.4571,3.1677) -- (6.4861,3.1728) -- (6.5151,3.1779) -- (6.5441,3.183) -- (6.5731,3.188) -- (6.6021,3.1931) -- (6.6311,3.1982) -- (6.6601,3.2033) -- (6.6891,3.2084) -- (6.7182,3.2134) -- (6.7472,3.2185) -- (6.7762,3.2236) -- (6.8052,3.2287) -- (6.8342,3.2337) -- (6.8632,3.2388) -- (6.8922,3.2439) -- (6.9212,3.249) -- (6.9502,3.2541) -- (6.9792,3.2591) -- (7.0082,3.2642) -- (7.0373,3.2693) -- (7.0663,3.2744) -- (7.0953,3.2795) -- (7.1243,3.2845) -- (7.1533,3.2896) -- (7.1823,3.2947) -- (7.2113,3.2998) -- (7.2403,3.3049) -- (7.2693,3.3099) -- (7.2983,3.315) -- (7.3273,3.3201) -- (7.3564,3.3252) -- (7.3854,3.3302) -- (7.4144,3.3353) -- (7.4434,3.3404) -- (7.4724,3.3455) -- (7.5014,3.3506) -- (7.5304,3.3556) -- (7.5594,3.3607) -- (7.5884,3.3658) -- (7.6174,3.3709) -- (7.6464,3.376) -- (7.6755,3.381) -- (7.7045,3.3861) -- (7.7335,3.3912) -- (7.7625,3.3963) -- (7.7915,3.4014) -- (7.8205,3.4064) -- (7.8495,3.4115) -- (7.8785,3.4166) -- (7.9075,3.4217) -- (7.9365,3.4267) -- (7.9656,3.4318) -- (7.9946,3.4369) -- (8.0236,3.442) -- (8.0526,3.4471) -- (8.0816,3.4521) -- (8.1106,3.4572) -- (8.1396,3.4623) -- (8.1686,3.4674) -- (8.1976,3.4725);
\filldraw[fill=white,draw=PRougeBlue,line width=0.45pt] (1.086,2.4218) rectangle (1.226,2.5618);
\filldraw[fill=white,draw=PRougeBlue,line width=0.45pt] (1.2076,2.3392) rectangle (1.3476,2.4792);
\filldraw[fill=white,draw=PRougeBlue,line width=0.45pt] (1.4348,2.3244) rectangle (1.5748,2.4644);
\filldraw[fill=white,draw=PRougeBlue,line width=0.45pt] (3.8844,2.5071) rectangle (4.0244,2.6471);
\filldraw[fill=white,draw=PRougeBlue,line width=0.45pt] (4.0061,2.5558) rectangle (4.1461,2.6958);
\filldraw[fill=white,draw=PRougeBlue,line width=0.45pt] (4.2332,2.5801) rectangle (4.3732,2.7201);
\filldraw[fill=white,draw=PRougeBlue,line width=0.45pt] (5.6501,2.8247) rectangle (5.7901,2.9647);
\filldraw[fill=white,draw=PRougeBlue,line width=0.45pt] (5.7717,2.7837) rectangle (5.9117,2.9237);
\filldraw[fill=white,draw=PRougeBlue,line width=0.45pt] (5.9988,2.7709) rectangle (6.1388,2.9109);
\filldraw[fill=white,draw=PRougeBlue,line width=0.45pt] (7.8083,3.6203) rectangle (7.9483,3.7603);
\filldraw[fill=white,draw=PRougeBlue,line width=0.45pt] (7.93,3.6025) rectangle (8.07,3.7425);
\filldraw[fill=white,draw=PRougeBlue,line width=0.45pt] (8.1571,3.5474) rectangle (8.2971,3.6874);
\filldraw[fill=PRougeBlue,draw=PRougeBlue,line width=0.45pt] (1.2984,2.362) rectangle (1.4384,2.502);
\filldraw[fill=PRougeBlue,draw=PRougeBlue,line width=0.45pt] (4.0968,2.5478) rectangle (4.2368,2.6878);
\filldraw[fill=PRougeBlue,draw=PRougeBlue,line width=0.45pt] (5.8624,2.7932) rectangle (6.0024,2.9332);
\filldraw[fill=PRougeBlue,draw=PRougeBlue,line width=0.45pt] (8.0207,3.5902) rectangle (8.1607,3.7302);
\draw[AxisDark,line width=1.55pt] (1.1,1.08) -- (8.35,1.08);
\draw[AxisDark,line width=1.55pt] (1.1,1.08) -- (1.1,6.93);
\draw[AxisDark,line width=1.25pt] (1.3684,1.08) -- (1.3684,0.97);
\node[anchor=north,font=\large] at (1.3684,0.9) {0.5B};
\draw[AxisDark,line width=1.25pt] (4.1668,1.08) -- (4.1668,0.97);
\node[anchor=north,font=\large] at (4.1668,0.9) {1.5B};
\draw[AxisDark,line width=1.25pt] (5.9324,1.08) -- (5.9324,0.97);
\node[anchor=north,font=\large] at (5.9324,0.9) {3B};
\draw[AxisDark,line width=1.25pt] (8.0907,1.08) -- (8.0907,0.97);
\node[anchor=north,font=\large] at (8.0907,0.9) {7B};
\draw[AxisDark,line width=1.25pt] (1.1,2.8103) -- (0.99,2.8103);
\draw[AxisDark,line width=1.25pt] (1.1,4.4326) -- (0.99,4.4326);
\draw[AxisDark,line width=1.25pt] (1.1,5.5837) -- (0.99,5.5837);
\draw[AxisDark,line width=1.25pt] (1.1,6.4765) -- (0.99,6.4765);
\node[anchor=north,font=\large] at (4.725,0.46) {Model size};
\node[rotate=90,anchor=south,font=\large] at (0.28,4.005) {Raw score};
\draw[ERougeOrange,line width=1.45pt] (6.13,2.11) -- (6.63,2.11);
\filldraw[fill=ERougeOrange,draw=ERougeOrange,line width=0.45pt] (6.38,2.11) circle[radius=0.072];
\node[anchor=west,font=\normalsize] at (6.78,2.11) {E-ROUGE};
\draw[PRougeBlue,line width=1.45pt] (6.13,1.75) -- (6.63,1.75);
\filldraw[fill=PRougeBlue,draw=PRougeBlue,line width=0.45pt] (6.31,1.68) rectangle (6.45,1.82);
\node[anchor=west,font=\normalsize] at (6.78,1.75) {P-ROUGE};
\end{tikzpicture}

%% file: plots/qwen_tofu_unlearning_radar_by_scale.pgf
\begin{tikzpicture}[x=1cm,y=1cm]
\path[use as bounding box] (-5.15,-2.65) rectangle (5.15,3.10);
\definecolor{ScaleHalfB}{HTML}{C6DBEF}
\definecolor{ScaleOneHalfB}{HTML}{9ECAE1}
\definecolor{ScaleThreeB}{HTML}{4292C6}
\definecolor{ScaleSevenB}{HTML}{08519C}
\definecolor{RadarGrid}{HTML}{D9DDE1}
\definecolor{RadarAxis}{HTML}{222222}
\definecolor{RadarTick}{HTML}{777777}
\node[font=\large] at (-2.750,2.820) {SimNPO};
\draw[RadarGrid,line width=0.45pt] (-2.750,0.000) circle[radius=0.445];
\node[font=\scriptsize,text=RadarTick,anchor=west] at (-2.319,0.202) {0.2};
\draw[RadarGrid,line width=0.45pt] (-2.750,0.000) circle[radius=0.890];
\node[font=\scriptsize,text=RadarTick,anchor=west] at (-1.922,0.404) {0.4};
\draw[RadarGrid,line width=0.45pt] (-2.750,0.000) circle[radius=1.335];
\node[font=\scriptsize,text=RadarTick,anchor=west] at (-1.526,0.606) {0.6};
\draw[RadarGrid,line width=0.45pt] (-2.750,0.000) circle[radius=1.780];
\node[font=\scriptsize,text=RadarTick,anchor=west] at (-1.129,0.808) {0.8};
\draw[RadarGrid,line width=0.45pt] (-2.750,0.000) -- (-2.750,1.780);
\draw[RadarGrid,line width=0.45pt] (-2.750,0.000) -- (-1.208,-0.890);
\draw[RadarGrid,line width=0.45pt] (-2.750,0.000) -- (-4.292,-0.890);
\draw[RadarAxis,line width=1.05pt] (-2.750,0.000) circle[radius=1.780];
\node[font=\footnotesize,anchor=center] at (-2.750,1.900) {1-(E-ROUGE)};
\node[font=\footnotesize,anchor=west] at (-1.103,-0.676) {1-(P-ROUGE)};
\node[font=\footnotesize,anchor=east] at (-4.245,-0.819) {MU on $\mathcal{D}_\mathrm{r}$};
\draw[ScaleHalfB,line width=0.95pt] (-2.750,1.382) -- (-1.452,-0.749) -- (-3.571,-0.474) -- (-2.750,1.382);
\fill[ScaleHalfB] (-2.750,1.382) circle[radius=0.043];
\fill[ScaleHalfB] (-1.452,-0.749) circle[radius=0.043];
\fill[ScaleHalfB] (-3.571,-0.474) circle[radius=0.043];
\draw[ScaleOneHalfB,line width=0.95pt] (-2.750,1.378) -- (-1.482,-0.732) -- (-3.678,-0.536) -- (-2.750,1.378);
\fill[ScaleOneHalfB] (-2.805,1.323) rectangle (-2.695,1.433);
\fill[ScaleOneHalfB] (-1.537,-0.787) rectangle (-1.427,-0.677);
\fill[ScaleOneHalfB] (-3.733,-0.591) rectangle (-3.623,-0.481);
\draw[ScaleThreeB,line width=0.95pt] (-2.750,1.379) -- (-1.502,-0.720) -- (-3.811,-0.613) -- (-2.750,1.379);
\fill[ScaleThreeB] (-2.750,1.444) -- (-2.685,1.379) -- (-2.750,1.314) -- (-2.815,1.379) -- cycle;
\fill[ScaleThreeB] (-1.502,-0.655) -- (-1.438,-0.720) -- (-1.502,-0.785) -- (-1.567,-0.720) -- cycle;
\fill[ScaleThreeB] (-3.811,-0.548) -- (-3.746,-0.613) -- (-3.811,-0.678) -- (-3.876,-0.613) -- cycle;
\draw[ScaleSevenB,line width=0.95pt] (-2.750,1.391) -- (-1.480,-0.734) -- (-3.959,-0.698) -- (-2.750,1.391);
\fill[ScaleSevenB] (-2.750,1.456) -- (-2.815,1.336) -- (-2.685,1.336) -- cycle;
\fill[ScaleSevenB] (-1.480,-0.669) -- (-1.544,-0.789) -- (-1.415,-0.789) -- cycle;
\fill[ScaleSevenB] (-3.959,-0.633) -- (-4.023,-0.753) -- (-3.894,-0.753) -- cycle;
\node[font=\large] at (2.750,2.820) {MASC};
\draw[RadarGrid,line width=0.45pt] (2.750,0.000) circle[radius=0.445];
\node[font=\scriptsize,text=RadarTick,anchor=west] at (3.181,0.202) {0.2};
\draw[RadarGrid,line width=0.45pt] (2.750,0.000) circle[radius=0.890];
\node[font=\scriptsize,text=RadarTick,anchor=west] at (3.578,0.404) {0.4};
\draw[RadarGrid,line width=0.45pt] (2.750,0.000) circle[radius=1.335];
\node[font=\scriptsize,text=RadarTick,anchor=west] at (3.974,0.606) {0.6};
\draw[RadarGrid,line width=0.45pt] (2.750,0.000) circle[radius=1.780];
\node[font=\scriptsize,text=RadarTick,anchor=west] at (4.371,0.808) {0.8};
\draw[RadarGrid,line width=0.45pt] (2.750,0.000) -- (2.750,1.780);
\draw[RadarGrid,line width=0.45pt] (2.750,0.000) -- (4.292,-0.890);
\draw[RadarGrid,line width=0.45pt] (2.750,0.000) -- (1.208,-0.890);
\draw[RadarAxis,line width=1.05pt] (2.750,0.000) circle[radius=1.780];
\node[font=\footnotesize,anchor=center] at (2.750,1.900) {1-(E-ROUGE)};
\node[font=\footnotesize,anchor=west] at (4.245,-0.819) {1-(P-ROUGE)};
\node[font=\footnotesize,anchor=east] at (1.255,-0.997) {MU on $\mathcal{D}_\mathrm{r}$};
\draw[ScaleHalfB,line width=0.95pt] (2.750,1.369) -- (4.071,-0.763) -- (1.959,-0.457) -- (2.750,1.369);
\fill[ScaleHalfB] (2.750,1.369) circle[radius=0.043];
\fill[ScaleHalfB] (4.071,-0.763) circle[radius=0.043];
\fill[ScaleHalfB] (1.959,-0.457) circle[radius=0.043];
\draw[ScaleOneHalfB,line width=0.95pt] (2.750,1.347) -- (4.079,-0.767) -- (1.876,-0.505) -- (2.750,1.347);
\fill[ScaleOneHalfB] (2.695,1.292) rectangle (2.805,1.402);
\fill[ScaleOneHalfB] (4.024,-0.822) rectangle (4.134,-0.712);
\fill[ScaleOneHalfB] (1.821,-0.560) rectangle (1.931,-0.450);
\draw[ScaleThreeB,line width=0.95pt] (2.750,1.301) -- (4.047,-0.749) -- (1.738,-0.584) -- (2.750,1.301);
\fill[ScaleThreeB] (2.750,1.366) -- (2.815,1.301) -- (2.750,1.236) -- (2.685,1.301) -- cycle;
\fill[ScaleThreeB] (4.047,-0.684) -- (4.112,-0.749) -- (4.047,-0.814) -- (3.982,-0.749) -- cycle;
\fill[ScaleThreeB] (1.738,-0.520) -- (1.803,-0.584) -- (1.738,-0.649) -- (1.673,-0.584) -- cycle;
\draw[ScaleSevenB,line width=0.95pt] (2.750,1.338) -- (4.149,-0.808) -- (1.558,-0.688) -- (2.750,1.338);
\fill[ScaleSevenB] (2.750,1.403) -- (2.685,1.283) -- (2.815,1.283) -- cycle;
\fill[ScaleSevenB] (4.149,-0.743) -- (4.085,-0.863) -- (4.214,-0.863) -- cycle;
\fill[ScaleSevenB] (1.558,-0.623) -- (1.493,-0.743) -- (1.623,-0.743) -- cycle;
\draw[ScaleHalfB,line width=1.15pt] (-3.270,-2.350) -- (-2.630,-2.350);
\fill[ScaleHalfB] (-2.950,-2.350) circle[radius=0.047];
\node[anchor=west,font=\small] at (-2.490,-2.360) {0.5B};
\draw[ScaleOneHalfB,line width=1.15pt] (-1.420,-2.350) -- (-0.780,-2.350);
\fill[ScaleOneHalfB] (-1.160,-2.410) rectangle (-1.040,-2.290);
\node[anchor=west,font=\small] at (-0.640,-2.360) {1.5B};
\draw[ScaleThreeB,line width=1.15pt] (0.380,-2.350) -- (1.020,-2.350);
\fill[ScaleThreeB] (0.700,-2.279) -- (0.771,-2.350) -- (0.700,-2.421) -- (0.629,-2.350) -- cycle;
\node[anchor=west,font=\small] at (1.160,-2.360) {3B};
\draw[ScaleSevenB,line width=1.15pt] (2.230,-2.350) -- (2.870,-2.350);
\fill[ScaleSevenB] (2.550,-2.279) -- (2.479,-2.410) -- (2.621,-2.410) -- cycle;
\node[anchor=west,font=\small] at (3.010,-2.360) {7B};
\end{tikzpicture}

%% file: plots/forget_retain_pareto.pgf
\begin{tikzpicture}[x=1cm,y=1cm]
\definecolor{GridGrey}{HTML}{E0E0E0}
\definecolor{AxisDark}{HTML}{333333}
\definecolor{NeutralFill}{HTML}{E6E6E6}
\definecolor{NeutralEdge}{HTML}{6F6F6F}
\path[use as bounding box] (-1.65,0.05) rectangle (22.200,10.450);
\definecolor{CTOFUGA}{HTML}{FDC474}
\definecolor{CTOFUGradDiff}{HTML}{DB3C2C}
\definecolor{CTOFUNPO}{HTML}{DD4531}
\definecolor{CTOFUNPOKLR}{HTML}{D73027}
\definecolor{CTOFURMU}{HTML}{FDC474}
\definecolor{CTOFUSimNPO}{HTML}{F38D52}
\definecolor{CTOFUMASC}{HTML}{1A9850}
\definecolor{CMUSENewsGA}{HTML}{52AA5F}
\definecolor{CMUSENewsGradDiff}{HTML}{E76540}
\definecolor{CMUSENewsNPO}{HTML}{7DB76A}
\definecolor{CMUSENewsNPOKLR}{HTML}{D73027}
\definecolor{CMUSENewsRMU}{HTML}{FDB768}
\definecolor{CMUSENewsSimNPO}{HTML}{F1864F}
\definecolor{CMUSENewsMASC}{HTML}{1A9850}
\definecolor{CMUSEBooksGA}{HTML}{FED582}
\definecolor{CMUSEBooksGradDiff}{HTML}{FDBA6B}
\definecolor{CMUSEBooksNPO}{HTML}{FECD7B}
\definecolor{CMUSEBooksNPOKLR}{HTML}{D83328}
\definecolor{CMUSEBooksRMU}{HTML}{E96B42}
\definecolor{CMUSEBooksSimNPO}{HTML}{D73027}
\definecolor{CMUSEBooksMASC}{HTML}{1A9850}
\node[font=\LARGE] at (3.125,9.970) {TOFU};
\draw[draw=AxisDark,line width=0.65pt] (0.000,4.550) rectangle (6.250,9.350);
\draw[draw=GridGrey,line width=0.45pt] (0.352,4.550) -- (0.352,9.350);
\node[anchor=north,font=\large] at (0.352,4.390) {0.0};
\draw[draw=GridGrey,line width=0.45pt] (1.914,4.550) -- (1.914,9.350);
\node[anchor=north,font=\large] at (1.914,4.390) {0.2};
\draw[draw=GridGrey,line width=0.45pt] (3.477,4.550) -- (3.477,9.350);
\node[anchor=north,font=\large] at (3.477,4.390) {0.4};
\draw[draw=GridGrey,line width=0.45pt] (5.039,4.550) -- (5.039,9.350);
\node[anchor=north,font=\large] at (5.039,4.390) {0.6};
\draw[draw=GridGrey,line width=0.45pt] (0.000,4.743) -- (6.250,4.743);
\node[anchor=east,font=\large] at (-0.160,4.743) {0.00};
\draw[draw=GridGrey,line width=0.45pt] (0.000,5.814) -- (6.250,5.814);
\node[anchor=east,font=\large] at (-0.160,5.814) {0.25};
\draw[draw=GridGrey,line width=0.45pt] (0.000,6.886) -- (6.250,6.886);
\node[anchor=east,font=\large] at (-0.160,6.886) {0.50};
\draw[draw=GridGrey,line width=0.45pt] (0.000,7.957) -- (6.250,7.957);
\node[anchor=east,font=\large] at (-0.160,7.957) {0.75};
\draw[draw=GridGrey,line width=0.45pt] (0.000,9.029) -- (6.250,9.029);
\node[anchor=east,font=\large] at (-0.160,9.029) {1.00};
\node[font=\Large] at (3.125,3.530) {Retain utility};
\node[rotate=90,font=\Large] at (-1.300,6.950) {Forget efficacy};
\draw[draw=AxisDark,line width=1.05pt] (5.141,7.791) -- (5.555,7.506);
\draw[draw=NeutralEdge,fill=NeutralFill,line width=0.65pt] (5.258,5.533) circle[radius=0.150];
\draw[draw=NeutralEdge,fill=NeutralFill,line width=0.65pt] (4.991,7.641) rectangle (5.291,7.941);
\draw[draw=AxisDark,fill=CTOFUGA,line width=0.65pt] (3.938,7.341) -- (3.805,7.074) -- (4.069,7.074) -- cycle;
\draw[draw=AxisDark,fill=CTOFUGradDiff,line width=0.65pt] (4.734,7.313) -- (4.602,7.580) -- (4.866,7.580) -- cycle;
\draw[draw=AxisDark,fill=CTOFUNPO,line width=0.65pt] (4.471,7.196) -- (4.561,7.196) -- (4.561,7.091) -- (4.666,7.091) -- (4.666,7.001) -- (4.561,7.001) -- (4.561,6.896) -- (4.471,6.896) -- (4.471,7.001) -- (4.366,7.001) -- (4.366,7.091) -- (4.471,7.091) -- cycle;
\draw[draw=AxisDark,line width=2.20pt,line cap=round] (4.248,6.968) -- (4.518,7.238);
\draw[draw=AxisDark,line width=2.20pt,line cap=round] (4.248,7.238) -- (4.518,6.968);
\draw[draw=CTOFUNPOKLR,line width=1.25pt,line cap=round] (4.248,6.968) -- (4.518,7.238);
\draw[draw=CTOFUNPOKLR,line width=1.25pt,line cap=round] (4.248,7.238) -- (4.518,6.968);
\draw[draw=AxisDark,fill=CTOFURMU,line width=0.65pt] (5.180,5.901) -- (5.330,5.751) -- (5.180,5.601) -- (5.030,5.751) -- cycle;
\draw[draw=AxisDark,fill=CTOFUSimNPO,line width=0.65pt] (5.138,6.829) -- (5.008,6.904) -- (4.878,6.829) -- (4.878,6.679) -- (5.008,6.604) -- (5.138,6.679) -- cycle;
\draw[draw=AxisDark,fill=CTOFUMASC,line width=0.80pt] (5.555,7.656) -- (5.517,7.558) -- (5.412,7.552) -- (5.493,7.486) -- (5.467,7.384) -- (5.555,7.441) -- (5.643,7.384) -- (5.616,7.486) -- (5.697,7.552) -- (5.593,7.558) -- cycle;
\node[font=\large] at (3.125,2.880) {Runtime (s)};
\definecolor{CBTOFU0}{HTML}{1A9850}
\fill[CBTOFU0] (1.125,2.240) rectangle (1.175,2.460);
\definecolor{CBTOFU1}{HTML}{229B52}
\fill[CBTOFU1] (1.175,2.240) rectangle (1.225,2.460);
\definecolor{CBTOFU2}{HTML}{2A9D54}
\fill[CBTOFU2] (1.225,2.240) rectangle (1.275,2.460);
\definecolor{CBTOFU3}{HTML}{32A056}
\fill[CBTOFU3] (1.275,2.240) rectangle (1.325,2.460);
\definecolor{CBTOFU4}{HTML}{3AA258}
\fill[CBTOFU4] (1.325,2.240) rectangle (1.375,2.460);
\definecolor{CBTOFU5}{HTML}{45A65B}
\fill[CBTOFU5] (1.375,2.240) rectangle (1.425,2.460);
\definecolor{CBTOFU6}{HTML}{4DA85D}
\fill[CBTOFU6] (1.425,2.240) rectangle (1.475,2.460);
\definecolor{CBTOFU7}{HTML}{55AB5F}
\fill[CBTOFU7] (1.475,2.240) rectangle (1.525,2.460);
\definecolor{CBTOFU8}{HTML}{5DAD61}
\fill[CBTOFU8] (1.525,2.240) rectangle (1.575,2.460);
\definecolor{CBTOFU9}{HTML}{65B063}
\fill[CBTOFU9] (1.575,2.240) rectangle (1.625,2.460);
\definecolor{CBTOFU10}{HTML}{70B366}
\fill[CBTOFU10] (1.625,2.240) rectangle (1.675,2.460);
\definecolor{CBTOFU11}{HTML}{78B668}
\fill[CBTOFU11] (1.675,2.240) rectangle (1.725,2.460);
\definecolor{CBTOFU12}{HTML}{80B86A}
\fill[CBTOFU12] (1.725,2.240) rectangle (1.775,2.460);
\definecolor{CBTOFU13}{HTML}{88BB6C}
\fill[CBTOFU13] (1.775,2.240) rectangle (1.825,2.460);
\definecolor{CBTOFU14}{HTML}{90BD6F}
\fill[CBTOFU14] (1.825,2.240) rectangle (1.875,2.460);
\definecolor{CBTOFU15}{HTML}{9BC171}
\fill[CBTOFU15] (1.875,2.240) rectangle (1.925,2.460);
\definecolor{CBTOFU16}{HTML}{A3C373}
\fill[CBTOFU16] (1.925,2.240) rectangle (1.975,2.460);
\definecolor{CBTOFU17}{HTML}{ABC675}
\fill[CBTOFU17] (1.975,2.240) rectangle (2.025,2.460);
\definecolor{CBTOFU18}{HTML}{B3C878}
\fill[CBTOFU18] (2.025,2.240) rectangle (2.075,2.460);
\definecolor{CBTOFU19}{HTML}{BBCB7A}
\fill[CBTOFU19] (2.075,2.240) rectangle (2.125,2.460);
\definecolor{CBTOFU20}{HTML}{C6CE7C}
\fill[CBTOFU20] (2.125,2.240) rectangle (2.175,2.460);
\definecolor{CBTOFU21}{HTML}{CED17F}
\fill[CBTOFU21] (2.175,2.240) rectangle (2.225,2.460);
\definecolor{CBTOFU22}{HTML}{D6D381}
\fill[CBTOFU22] (2.225,2.240) rectangle (2.275,2.460);
\definecolor{CBTOFU23}{HTML}{DED683}
\fill[CBTOFU23] (2.275,2.240) rectangle (2.325,2.460);
\definecolor{CBTOFU24}{HTML}{E6D885}
\fill[CBTOFU24] (2.325,2.240) rectangle (2.375,2.460);
\definecolor{CBTOFU25}{HTML}{F1DC88}
\fill[CBTOFU25] (2.375,2.240) rectangle (2.425,2.460);
\definecolor{CBTOFU26}{HTML}{F9DE8A}
\fill[CBTOFU26] (2.425,2.240) rectangle (2.475,2.460);
\definecolor{CBTOFU27}{HTML}{FEDF8B}
\fill[CBTOFU27] (2.475,2.240) rectangle (2.525,2.460);
\definecolor{CBTOFU28}{HTML}{FEDE89}
\fill[CBTOFU28] (2.525,2.240) rectangle (2.575,2.460);
\definecolor{CBTOFU29}{HTML}{FEDC88}
\fill[CBTOFU29] (2.575,2.240) rectangle (2.625,2.460);
\definecolor{CBTOFU30}{HTML}{FEDA86}
\fill[CBTOFU30] (2.625,2.240) rectangle (2.675,2.460);
\definecolor{CBTOFU31}{HTML}{FED884}
\fill[CBTOFU31] (2.675,2.240) rectangle (2.725,2.460);
\definecolor{CBTOFU32}{HTML}{FED683}
\fill[CBTOFU32] (2.725,2.240) rectangle (2.775,2.460);
\definecolor{CBTOFU33}{HTML}{FED481}
\fill[CBTOFU33] (2.775,2.240) rectangle (2.825,2.460);
\definecolor{CBTOFU34}{HTML}{FED280}
\fill[CBTOFU34] (2.825,2.240) rectangle (2.875,2.460);
\definecolor{CBTOFU35}{HTML}{FED07E}
\fill[CBTOFU35] (2.875,2.240) rectangle (2.925,2.460);
\definecolor{CBTOFU36}{HTML}{FECE7C}
\fill[CBTOFU36] (2.925,2.240) rectangle (2.975,2.460);
\definecolor{CBTOFU37}{HTML}{FECD7B}
\fill[CBTOFU37] (2.975,2.240) rectangle (3.025,2.460);
\definecolor{CBTOFU38}{HTML}{FECB79}
\fill[CBTOFU38] (3.025,2.240) rectangle (3.075,2.460);
\definecolor{CBTOFU39}{HTML}{FEC978}
\fill[CBTOFU39] (3.075,2.240) rectangle (3.125,2.460);
\definecolor{CBTOFU40}{HTML}{FDC776}
\fill[CBTOFU40] (3.125,2.240) rectangle (3.175,2.460);
\definecolor{CBTOFU41}{HTML}{FDC574}
\fill[CBTOFU41] (3.175,2.240) rectangle (3.225,2.460);
\definecolor{CBTOFU42}{HTML}{FDC373}
\fill[CBTOFU42] (3.225,2.240) rectangle (3.275,2.460);
\definecolor{CBTOFU43}{HTML}{FDC171}
\fill[CBTOFU43] (3.275,2.240) rectangle (3.325,2.460);
\definecolor{CBTOFU44}{HTML}{FDC070}
\fill[CBTOFU44] (3.325,2.240) rectangle (3.375,2.460);
\definecolor{CBTOFU45}{HTML}{FDBD6E}
\fill[CBTOFU45] (3.375,2.240) rectangle (3.425,2.460);
\definecolor{CBTOFU46}{HTML}{FDBC6C}
\fill[CBTOFU46] (3.425,2.240) rectangle (3.475,2.460);
\definecolor{CBTOFU47}{HTML}{FDBA6B}
\fill[CBTOFU47] (3.475,2.240) rectangle (3.525,2.460);
\definecolor{CBTOFU48}{HTML}{FDB869}
\fill[CBTOFU48] (3.525,2.240) rectangle (3.575,2.460);
\definecolor{CBTOFU49}{HTML}{FDB668}
\fill[CBTOFU49] (3.575,2.240) rectangle (3.625,2.460);
\definecolor{CBTOFU50}{HTML}{FDB466}
\fill[CBTOFU50] (3.625,2.240) rectangle (3.675,2.460);
\definecolor{CBTOFU51}{HTML}{FDB264}
\fill[CBTOFU51] (3.675,2.240) rectangle (3.725,2.460);
\definecolor{CBTOFU52}{HTML}{FDB063}
\fill[CBTOFU52] (3.725,2.240) rectangle (3.775,2.460);
\definecolor{CBTOFU53}{HTML}{FDAF61}
\fill[CBTOFU53] (3.775,2.240) rectangle (3.825,2.460);
\definecolor{CBTOFU54}{HTML}{FCAB60}
\fill[CBTOFU54] (3.825,2.240) rectangle (3.875,2.460);
\definecolor{CBTOFU55}{HTML}{FAA55D}
\fill[CBTOFU55] (3.875,2.240) rectangle (3.925,2.460);
\definecolor{CBTOFU56}{HTML}{F9A15B}
\fill[CBTOFU56] (3.925,2.240) rectangle (3.975,2.460);
\definecolor{CBTOFU57}{HTML}{F89C59}
\fill[CBTOFU57] (3.975,2.240) rectangle (4.025,2.460);
\definecolor{CBTOFU58}{HTML}{F69857}
\fill[CBTOFU58] (4.025,2.240) rectangle (4.075,2.460);
\definecolor{CBTOFU59}{HTML}{F59355}
\fill[CBTOFU59] (4.075,2.240) rectangle (4.125,2.460);
\definecolor{CBTOFU60}{HTML}{F38D52}
\fill[CBTOFU60] (4.125,2.240) rectangle (4.175,2.460);
\definecolor{CBTOFU61}{HTML}{F28950}
\fill[CBTOFU61] (4.175,2.240) rectangle (4.225,2.460);
\definecolor{CBTOFU62}{HTML}{F0844E}
\fill[CBTOFU62] (4.225,2.240) rectangle (4.275,2.460);
\definecolor{CBTOFU63}{HTML}{EF804C}
\fill[CBTOFU63] (4.275,2.240) rectangle (4.325,2.460);
\definecolor{CBTOFU64}{HTML}{EE7C4A}
\fill[CBTOFU64] (4.325,2.240) rectangle (4.375,2.460);
\definecolor{CBTOFU65}{HTML}{EC7647}
\fill[CBTOFU65] (4.375,2.240) rectangle (4.425,2.460);
\definecolor{CBTOFU66}{HTML}{EB7145}
\fill[CBTOFU66] (4.425,2.240) rectangle (4.475,2.460);
\definecolor{CBTOFU67}{HTML}{E96D43}
\fill[CBTOFU67] (4.475,2.240) rectangle (4.525,2.460);
\definecolor{CBTOFU68}{HTML}{E86841}
\fill[CBTOFU68] (4.525,2.240) rectangle (4.575,2.460);
\definecolor{CBTOFU69}{HTML}{E7643F}
\fill[CBTOFU69] (4.575,2.240) rectangle (4.625,2.460);
\definecolor{CBTOFU70}{HTML}{E55E3C}
\fill[CBTOFU70] (4.625,2.240) rectangle (4.675,2.460);
\definecolor{CBTOFU71}{HTML}{E45A3A}
\fill[CBTOFU71] (4.675,2.240) rectangle (4.725,2.460);
\definecolor{CBTOFU72}{HTML}{E25538}
\fill[CBTOFU72] (4.725,2.240) rectangle (4.775,2.460);
\definecolor{CBTOFU73}{HTML}{E15136}
\fill[CBTOFU73] (4.775,2.240) rectangle (4.825,2.460);
\definecolor{CBTOFU74}{HTML}{DF4C34}
\fill[CBTOFU74] (4.825,2.240) rectangle (4.875,2.460);
\definecolor{CBTOFU75}{HTML}{DE4631}
\fill[CBTOFU75] (4.875,2.240) rectangle (4.925,2.460);
\definecolor{CBTOFU76}{HTML}{DC422F}
\fill[CBTOFU76] (4.925,2.240) rectangle (4.975,2.460);
\definecolor{CBTOFU77}{HTML}{DB3D2D}
\fill[CBTOFU77] (4.975,2.240) rectangle (5.025,2.460);
\definecolor{CBTOFU78}{HTML}{DA392B}
\fill[CBTOFU78] (5.025,2.240) rectangle (5.075,2.460);
\definecolor{CBTOFU79}{HTML}{D83429}
\fill[CBTOFU79] (5.075,2.240) rectangle (5.125,2.460);
\draw[draw=AxisDark,line width=0.45pt] (1.125,2.240) rectangle (5.125,2.460);
\draw[draw=AxisDark,line width=0.45pt] (1.125,2.240) -- (1.125,2.160);
\node[anchor=north,font=\large] at (1.125,2.120) {88};
\draw[draw=AxisDark,line width=0.45pt] (3.125,2.240) -- (3.125,2.160);
\node[anchor=north,font=\large] at (3.125,2.120) {294};
\draw[draw=AxisDark,line width=0.45pt] (5.125,2.240) -- (5.125,2.160);
\node[anchor=north,font=\large] at (5.125,2.120) {983};
\node[font=\LARGE] at (10.775,9.970) {MUSE News};
\draw[draw=AxisDark,line width=0.65pt] (7.650,4.550) rectangle (13.900,9.350);
\draw[draw=GridGrey,line width=0.45pt] (8.002,4.550) -- (8.002,9.350);
\node[anchor=north,font=\large] at (8.002,4.390) {0.0};
\draw[draw=GridGrey,line width=0.45pt] (9.564,4.550) -- (9.564,9.350);
\node[anchor=north,font=\large] at (9.564,4.390) {0.2};
\draw[draw=GridGrey,line width=0.45pt] (11.127,4.550) -- (11.127,9.350);
\node[anchor=north,font=\large] at (11.127,4.390) {0.4};
\draw[draw=GridGrey,line width=0.45pt] (12.689,4.550) -- (12.689,9.350);
\node[anchor=north,font=\large] at (12.689,4.390) {0.6};
\draw[draw=GridGrey,line width=0.45pt] (7.650,4.743) -- (13.900,4.743);
\draw[draw=GridGrey,line width=0.45pt] (7.650,5.814) -- (13.900,5.814);
\draw[draw=GridGrey,line width=0.45pt] (7.650,6.886) -- (13.900,6.886);
\draw[draw=GridGrey,line width=0.45pt] (7.650,7.957) -- (13.900,7.957);
\draw[draw=GridGrey,line width=0.45pt] (7.650,9.029) -- (13.900,9.029);
\node[font=\Large] at (10.775,3.530) {Retain utility};
\draw[draw=AxisDark,line width=1.05pt] (8.002,9.029) -- (8.002,9.029) -- (9.809,8.590) -- (10.688,8.481) -- (12.323,7.897);
\draw[draw=NeutralEdge,fill=NeutralFill,line width=0.65pt] (12.291,6.378) circle[radius=0.150];
\draw[draw=NeutralEdge,fill=NeutralFill,line width=0.65pt] (12.173,7.747) rectangle (12.473,8.047);
\draw[draw=AxisDark,fill=CMUSENewsGA,line width=0.65pt] (8.002,9.179) -- (7.870,8.912) -- (8.134,8.912) -- cycle;
\draw[draw=AxisDark,fill=CMUSENewsGradDiff,line width=0.65pt] (10.688,8.331) -- (10.556,8.598) -- (10.819,8.598) -- cycle;
\draw[draw=AxisDark,fill=CMUSENewsNPO,line width=0.65pt] (7.957,9.179) -- (8.047,9.179) -- (8.047,9.074) -- (8.152,9.074) -- (8.152,8.984) -- (8.047,8.984) -- (8.047,8.879) -- (7.957,8.879) -- (7.957,8.984) -- (7.852,8.984) -- (7.852,9.074) -- (7.957,9.074) -- cycle;
\draw[draw=AxisDark,line width=2.20pt,line cap=round] (11.332,7.649) -- (11.602,7.919);
\draw[draw=AxisDark,line width=2.20pt,line cap=round] (11.332,7.919) -- (11.602,7.649);
\draw[draw=CMUSENewsNPOKLR,line width=1.25pt,line cap=round] (11.332,7.649) -- (11.602,7.919);
\draw[draw=CMUSENewsNPOKLR,line width=1.25pt,line cap=round] (11.332,7.919) -- (11.602,7.649);
\draw[draw=AxisDark,fill=CMUSENewsRMU,line width=0.65pt] (11.279,7.572) -- (11.429,7.422) -- (11.279,7.272) -- (11.129,7.422) -- cycle;
\draw[draw=AxisDark,fill=CMUSENewsSimNPO,line width=0.65pt] (11.024,7.950) -- (10.894,8.025) -- (10.764,7.950) -- (10.764,7.800) -- (10.894,7.725) -- (11.024,7.800) -- cycle;
\draw[draw=AxisDark,fill=CMUSENewsMASC,line width=0.80pt] (9.809,8.740) -- (9.771,8.642) -- (9.667,8.636) -- (9.748,8.570) -- (9.721,8.469) -- (9.809,8.525) -- (9.898,8.469) -- (9.871,8.570) -- (9.952,8.636) -- (9.847,8.642) -- cycle;
\node[font=\large] at (10.775,2.880) {Runtime (s)};
\definecolor{CBMUSENews0}{HTML}{1A9850}
\fill[CBMUSENews0] (8.775,2.240) rectangle (8.825,2.460);
\definecolor{CBMUSENews1}{HTML}{229B52}
\fill[CBMUSENews1] (8.825,2.240) rectangle (8.875,2.460);
\definecolor{CBMUSENews2}{HTML}{2A9D54}
\fill[CBMUSENews2] (8.875,2.240) rectangle (8.925,2.460);
\definecolor{CBMUSENews3}{HTML}{32A056}
\fill[CBMUSENews3] (8.925,2.240) rectangle (8.975,2.460);
\definecolor{CBMUSENews4}{HTML}{3AA258}
\fill[CBMUSENews4] (8.975,2.240) rectangle (9.025,2.460);
\definecolor{CBMUSENews5}{HTML}{45A65B}
\fill[CBMUSENews5] (9.025,2.240) rectangle (9.075,2.460);
\definecolor{CBMUSENews6}{HTML}{4DA85D}
\fill[CBMUSENews6] (9.075,2.240) rectangle (9.125,2.460);
\definecolor{CBMUSENews7}{HTML}{55AB5F}
\fill[CBMUSENews7] (9.125,2.240) rectangle (9.175,2.460);
\definecolor{CBMUSENews8}{HTML}{5DAD61}
\fill[CBMUSENews8] (9.175,2.240) rectangle (9.225,2.460);
\definecolor{CBMUSENews9}{HTML}{65B063}
\fill[CBMUSENews9] (9.225,2.240) rectangle (9.275,2.460);
\definecolor{CBMUSENews10}{HTML}{70B366}
\fill[CBMUSENews10] (9.275,2.240) rectangle (9.325,2.460);
\definecolor{CBMUSENews11}{HTML}{78B668}
\fill[CBMUSENews11] (9.325,2.240) rectangle (9.375,2.460);
\definecolor{CBMUSENews12}{HTML}{80B86A}
\fill[CBMUSENews12] (9.375,2.240) rectangle (9.425,2.460);
\definecolor{CBMUSENews13}{HTML}{88BB6C}
\fill[CBMUSENews13] (9.425,2.240) rectangle (9.475,2.460);
\definecolor{CBMUSENews14}{HTML}{90BD6F}
\fill[CBMUSENews14] (9.475,2.240) rectangle (9.525,2.460);
\definecolor{CBMUSENews15}{HTML}{9BC171}
\fill[CBMUSENews15] (9.525,2.240) rectangle (9.575,2.460);
\definecolor{CBMUSENews16}{HTML}{A3C373}
\fill[CBMUSENews16] (9.575,2.240) rectangle (9.625,2.460);
\definecolor{CBMUSENews17}{HTML}{ABC675}
\fill[CBMUSENews17] (9.625,2.240) rectangle (9.675,2.460);
\definecolor{CBMUSENews18}{HTML}{B3C878}
\fill[CBMUSENews18] (9.675,2.240) rectangle (9.725,2.460);
\definecolor{CBMUSENews19}{HTML}{BBCB7A}
\fill[CBMUSENews19] (9.725,2.240) rectangle (9.775,2.460);
\definecolor{CBMUSENews20}{HTML}{C6CE7C}
\fill[CBMUSENews20] (9.775,2.240) rectangle (9.825,2.460);
\definecolor{CBMUSENews21}{HTML}{CED17F}
\fill[CBMUSENews21] (9.825,2.240) rectangle (9.875,2.460);
\definecolor{CBMUSENews22}{HTML}{D6D381}
\fill[CBMUSENews22] (9.875,2.240) rectangle (9.925,2.460);
\definecolor{CBMUSENews23}{HTML}{DED683}
\fill[CBMUSENews23] (9.925,2.240) rectangle (9.975,2.460);
\definecolor{CBMUSENews24}{HTML}{E6D885}
\fill[CBMUSENews24] (9.975,2.240) rectangle (10.025,2.460);
\definecolor{CBMUSENews25}{HTML}{F1DC88}
\fill[CBMUSENews25] (10.025,2.240) rectangle (10.075,2.460);
\definecolor{CBMUSENews26}{HTML}{F9DE8A}
\fill[CBMUSENews26] (10.075,2.240) rectangle (10.125,2.460);
\definecolor{CBMUSENews27}{HTML}{FEDF8B}
\fill[CBMUSENews27] (10.125,2.240) rectangle (10.175,2.460);
\definecolor{CBMUSENews28}{HTML}{FEDE89}
\fill[CBMUSENews28] (10.175,2.240) rectangle (10.225,2.460);
\definecolor{CBMUSENews29}{HTML}{FEDC88}
\fill[CBMUSENews29] (10.225,2.240) rectangle (10.275,2.460);
\definecolor{CBMUSENews30}{HTML}{FEDA86}
\fill[CBMUSENews30] (10.275,2.240) rectangle (10.325,2.460);
\definecolor{CBMUSENews31}{HTML}{FED884}
\fill[CBMUSENews31] (10.325,2.240) rectangle (10.375,2.460);
\definecolor{CBMUSENews32}{HTML}{FED683}
\fill[CBMUSENews32] (10.375,2.240) rectangle (10.425,2.460);
\definecolor{CBMUSENews33}{HTML}{FED481}
\fill[CBMUSENews33] (10.425,2.240) rectangle (10.475,2.460);
\definecolor{CBMUSENews34}{HTML}{FED280}
\fill[CBMUSENews34] (10.475,2.240) rectangle (10.525,2.460);
\definecolor{CBMUSENews35}{HTML}{FED07E}
\fill[CBMUSENews35] (10.525,2.240) rectangle (10.575,2.460);
\definecolor{CBMUSENews36}{HTML}{FECE7C}
\fill[CBMUSENews36] (10.575,2.240) rectangle (10.625,2.460);
\definecolor{CBMUSENews37}{HTML}{FECD7B}
\fill[CBMUSENews37] (10.625,2.240) rectangle (10.675,2.460);
\definecolor{CBMUSENews38}{HTML}{FECB79}
\fill[CBMUSENews38] (10.675,2.240) rectangle (10.725,2.460);
\definecolor{CBMUSENews39}{HTML}{FEC978}
\fill[CBMUSENews39] (10.725,2.240) rectangle (10.775,2.460);
\definecolor{CBMUSENews40}{HTML}{FDC776}
\fill[CBMUSENews40] (10.775,2.240) rectangle (10.825,2.460);
\definecolor{CBMUSENews41}{HTML}{FDC574}
\fill[CBMUSENews41] (10.825,2.240) rectangle (10.875,2.460);
\definecolor{CBMUSENews42}{HTML}{FDC373}
\fill[CBMUSENews42] (10.875,2.240) rectangle (10.925,2.460);
\definecolor{CBMUSENews43}{HTML}{FDC171}
\fill[CBMUSENews43] (10.925,2.240) rectangle (10.975,2.460);
\definecolor{CBMUSENews44}{HTML}{FDC070}
\fill[CBMUSENews44] (10.975,2.240) rectangle (11.025,2.460);
\definecolor{CBMUSENews45}{HTML}{FDBD6E}
\fill[CBMUSENews45] (11.025,2.240) rectangle (11.075,2.460);
\definecolor{CBMUSENews46}{HTML}{FDBC6C}
\fill[CBMUSENews46] (11.075,2.240) rectangle (11.125,2.460);
\definecolor{CBMUSENews47}{HTML}{FDBA6B}
\fill[CBMUSENews47] (11.125,2.240) rectangle (11.175,2.460);
\definecolor{CBMUSENews48}{HTML}{FDB869}
\fill[CBMUSENews48] (11.175,2.240) rectangle (11.225,2.460);
\definecolor{CBMUSENews49}{HTML}{FDB668}
\fill[CBMUSENews49] (11.225,2.240) rectangle (11.275,2.460);
\definecolor{CBMUSENews50}{HTML}{FDB466}
\fill[CBMUSENews50] (11.275,2.240) rectangle (11.325,2.460);
\definecolor{CBMUSENews51}{HTML}{FDB264}
\fill[CBMUSENews51] (11.325,2.240) rectangle (11.375,2.460);
\definecolor{CBMUSENews52}{HTML}{FDB063}
\fill[CBMUSENews52] (11.375,2.240) rectangle (11.425,2.460);
\definecolor{CBMUSENews53}{HTML}{FDAF61}
\fill[CBMUSENews53] (11.425,2.240) rectangle (11.475,2.460);
\definecolor{CBMUSENews54}{HTML}{FCAB60}
\fill[CBMUSENews54] (11.475,2.240) rectangle (11.525,2.460);
\definecolor{CBMUSENews55}{HTML}{FAA55D}
\fill[CBMUSENews55] (11.525,2.240) rectangle (11.575,2.460);
\definecolor{CBMUSENews56}{HTML}{F9A15B}
\fill[CBMUSENews56] (11.575,2.240) rectangle (11.625,2.460);
\definecolor{CBMUSENews57}{HTML}{F89C59}
\fill[CBMUSENews57] (11.625,2.240) rectangle (11.675,2.460);
\definecolor{CBMUSENews58}{HTML}{F69857}
\fill[CBMUSENews58] (11.675,2.240) rectangle (11.725,2.460);
\definecolor{CBMUSENews59}{HTML}{F59355}
\fill[CBMUSENews59] (11.725,2.240) rectangle (11.775,2.460);
\definecolor{CBMUSENews60}{HTML}{F38D52}
\fill[CBMUSENews60] (11.775,2.240) rectangle (11.825,2.460);
\definecolor{CBMUSENews61}{HTML}{F28950}
\fill[CBMUSENews61] (11.825,2.240) rectangle (11.875,2.460);
\definecolor{CBMUSENews62}{HTML}{F0844E}
\fill[CBMUSENews62] (11.875,2.240) rectangle (11.925,2.460);
\definecolor{CBMUSENews63}{HTML}{EF804C}
\fill[CBMUSENews63] (11.925,2.240) rectangle (11.975,2.460);
\definecolor{CBMUSENews64}{HTML}{EE7C4A}
\fill[CBMUSENews64] (11.975,2.240) rectangle (12.025,2.460);
\definecolor{CBMUSENews65}{HTML}{EC7647}
\fill[CBMUSENews65] (12.025,2.240) rectangle (12.075,2.460);
\definecolor{CBMUSENews66}{HTML}{EB7145}
\fill[CBMUSENews66] (12.075,2.240) rectangle (12.125,2.460);
\definecolor{CBMUSENews67}{HTML}{E96D43}
\fill[CBMUSENews67] (12.125,2.240) rectangle (12.175,2.460);
\definecolor{CBMUSENews68}{HTML}{E86841}
\fill[CBMUSENews68] (12.175,2.240) rectangle (12.225,2.460);
\definecolor{CBMUSENews69}{HTML}{E7643F}
\fill[CBMUSENews69] (12.225,2.240) rectangle (12.275,2.460);
\definecolor{CBMUSENews70}{HTML}{E55E3C}
\fill[CBMUSENews70] (12.275,2.240) rectangle (12.325,2.460);
\definecolor{CBMUSENews71}{HTML}{E45A3A}
\fill[CBMUSENews71] (12.325,2.240) rectangle (12.375,2.460);
\definecolor{CBMUSENews72}{HTML}{E25538}
\fill[CBMUSENews72] (12.375,2.240) rectangle (12.425,2.460);
\definecolor{CBMUSENews73}{HTML}{E15136}
\fill[CBMUSENews73] (12.425,2.240) rectangle (12.475,2.460);
\definecolor{CBMUSENews74}{HTML}{DF4C34}
\fill[CBMUSENews74] (12.475,2.240) rectangle (12.525,2.460);
\definecolor{CBMUSENews75}{HTML}{DE4631}
\fill[CBMUSENews75] (12.525,2.240) rectangle (12.575,2.460);
\definecolor{CBMUSENews76}{HTML}{DC422F}
\fill[CBMUSENews76] (12.575,2.240) rectangle (12.625,2.460);
\definecolor{CBMUSENews77}{HTML}{DB3D2D}
\fill[CBMUSENews77] (12.625,2.240) rectangle (12.675,2.460);
\definecolor{CBMUSENews78}{HTML}{DA392B}
\fill[CBMUSENews78] (12.675,2.240) rectangle (12.725,2.460);
\definecolor{CBMUSENews79}{HTML}{D83429}
\fill[CBMUSENews79] (12.725,2.240) rectangle (12.775,2.460);
\draw[draw=AxisDark,line width=0.45pt] (8.775,2.240) rectangle (12.775,2.460);
\draw[draw=AxisDark,line width=0.45pt] (8.775,2.240) -- (8.775,2.160);
\node[anchor=north,font=\large] at (8.775,2.120) {139};
\draw[draw=AxisDark,line width=0.45pt] (10.775,2.240) -- (10.775,2.160);
\node[anchor=north,font=\large] at (10.775,2.120) {751};
\draw[draw=AxisDark,line width=0.45pt] (12.775,2.240) -- (12.775,2.160);
\node[anchor=north,font=\large] at (12.775,2.120) {4063};
\node[font=\LARGE] at (18.425,9.970) {MUSE Books};
\draw[draw=AxisDark,line width=0.65pt] (15.300,4.550) rectangle (21.550,9.350);
\draw[draw=GridGrey,line width=0.45pt] (15.652,4.550) -- (15.652,9.350);
\node[anchor=north,font=\large] at (15.652,4.390) {0.0};
\draw[draw=GridGrey,line width=0.45pt] (17.214,4.550) -- (17.214,9.350);
\node[anchor=north,font=\large] at (17.214,4.390) {0.2};
\draw[draw=GridGrey,line width=0.45pt] (18.777,4.550) -- (18.777,9.350);
\node[anchor=north,font=\large] at (18.777,4.390) {0.4};
\draw[draw=GridGrey,line width=0.45pt] (20.339,4.550) -- (20.339,9.350);
\node[anchor=north,font=\large] at (20.339,4.390) {0.6};
\draw[draw=GridGrey,line width=0.45pt] (15.300,4.743) -- (21.550,4.743);
\draw[draw=GridGrey,line width=0.45pt] (15.300,5.814) -- (21.550,5.814);
\draw[draw=GridGrey,line width=0.45pt] (15.300,6.886) -- (21.550,6.886);
\draw[draw=GridGrey,line width=0.45pt] (15.300,7.957) -- (21.550,7.957);
\draw[draw=GridGrey,line width=0.45pt] (15.300,9.029) -- (21.550,9.029);
\node[font=\Large] at (18.425,3.530) {Retain utility};
\draw[draw=AxisDark,line width=1.05pt] (19.385,9.029) -- (20.944,8.528) -- (21.022,8.070) -- (21.052,5.882);
\draw[draw=NeutralEdge,fill=NeutralFill,line width=0.65pt] (21.052,5.882) circle[radius=0.150];
\draw[draw=NeutralEdge,fill=NeutralFill,line width=0.65pt] (20.872,7.920) rectangle (21.172,8.220);
\draw[draw=AxisDark,fill=CMUSEBooksGA,line width=0.65pt] (15.652,9.179) -- (15.520,8.912) -- (15.784,8.912) -- cycle;
\draw[draw=AxisDark,fill=CMUSEBooksGradDiff,line width=0.65pt] (18.873,8.879) -- (18.741,9.146) -- (19.005,9.146) -- cycle;
\draw[draw=AxisDark,fill=CMUSEBooksNPO,line width=0.65pt] (15.607,9.179) -- (15.697,9.179) -- (15.697,9.074) -- (15.802,9.074) -- (15.802,8.984) -- (15.697,8.984) -- (15.697,8.879) -- (15.607,8.879) -- (15.607,8.984) -- (15.502,8.984) -- (15.502,9.074) -- (15.607,9.074) -- cycle;
\draw[draw=AxisDark,line width=2.20pt,line cap=round] (20.809,8.393) -- (21.079,8.663);
\draw[draw=AxisDark,line width=2.20pt,line cap=round] (20.809,8.663) -- (21.079,8.393);
\draw[draw=CMUSEBooksNPOKLR,line width=1.25pt,line cap=round] (20.809,8.393) -- (21.079,8.663);
\draw[draw=CMUSEBooksNPOKLR,line width=1.25pt,line cap=round] (20.809,8.663) -- (21.079,8.393);
\draw[draw=AxisDark,fill=CMUSEBooksRMU,line width=0.65pt] (20.398,8.462) -- (20.548,8.312) -- (20.398,8.162) -- (20.248,8.312) -- cycle;
\draw[draw=AxisDark,fill=CMUSEBooksSimNPO,line width=0.65pt] (19.515,9.104) -- (19.385,9.179) -- (19.255,9.104) -- (19.255,8.954) -- (19.385,8.879) -- (19.515,8.954) -- cycle;
\draw[draw=AxisDark,fill=CMUSEBooksMASC,line width=0.80pt] (20.753,8.497) -- (20.715,8.399) -- (20.610,8.393) -- (20.692,8.327) -- (20.665,8.226) -- (20.753,8.283) -- (20.841,8.226) -- (20.814,8.327) -- (20.896,8.393) -- (20.791,8.399) -- cycle;
\node[font=\large] at (18.425,2.880) {Runtime (s)};
\definecolor{CBMUSEBooks0}{HTML}{1A9850}
\fill[CBMUSEBooks0] (16.425,2.240) rectangle (16.475,2.460);
\definecolor{CBMUSEBooks1}{HTML}{229B52}
\fill[CBMUSEBooks1] (16.475,2.240) rectangle (16.525,2.460);
\definecolor{CBMUSEBooks2}{HTML}{2A9D54}
\fill[CBMUSEBooks2] (16.525,2.240) rectangle (16.575,2.460);
\definecolor{CBMUSEBooks3}{HTML}{32A056}
\fill[CBMUSEBooks3] (16.575,2.240) rectangle (16.625,2.460);
\definecolor{CBMUSEBooks4}{HTML}{3AA258}
\fill[CBMUSEBooks4] (16.625,2.240) rectangle (16.675,2.460);
\definecolor{CBMUSEBooks5}{HTML}{45A65B}
\fill[CBMUSEBooks5] (16.675,2.240) rectangle (16.725,2.460);
\definecolor{CBMUSEBooks6}{HTML}{4DA85D}
\fill[CBMUSEBooks6] (16.725,2.240) rectangle (16.775,2.460);
\definecolor{CBMUSEBooks7}{HTML}{55AB5F}
\fill[CBMUSEBooks7] (16.775,2.240) rectangle (16.825,2.460);
\definecolor{CBMUSEBooks8}{HTML}{5DAD61}
\fill[CBMUSEBooks8] (16.825,2.240) rectangle (16.875,2.460);
\definecolor{CBMUSEBooks9}{HTML}{65B063}
\fill[CBMUSEBooks9] (16.875,2.240) rectangle (16.925,2.460);
\definecolor{CBMUSEBooks10}{HTML}{70B366}
\fill[CBMUSEBooks10] (16.925,2.240) rectangle (16.975,2.460);
\definecolor{CBMUSEBooks11}{HTML}{78B668}
\fill[CBMUSEBooks11] (16.975,2.240) rectangle (17.025,2.460);
\definecolor{CBMUSEBooks12}{HTML}{80B86A}
\fill[CBMUSEBooks12] (17.025,2.240) rectangle (17.075,2.460);
\definecolor{CBMUSEBooks13}{HTML}{88BB6C}
\fill[CBMUSEBooks13] (17.075,2.240) rectangle (17.125,2.460);
\definecolor{CBMUSEBooks14}{HTML}{90BD6F}
\fill[CBMUSEBooks14] (17.125,2.240) rectangle (17.175,2.460);
\definecolor{CBMUSEBooks15}{HTML}{9BC171}
\fill[CBMUSEBooks15] (17.175,2.240) rectangle (17.225,2.460);
\definecolor{CBMUSEBooks16}{HTML}{A3C373}
\fill[CBMUSEBooks16] (17.225,2.240) rectangle (17.275,2.460);
\definecolor{CBMUSEBooks17}{HTML}{ABC675}
\fill[CBMUSEBooks17] (17.275,2.240) rectangle (17.325,2.460);
\definecolor{CBMUSEBooks18}{HTML}{B3C878}
\fill[CBMUSEBooks18] (17.325,2.240) rectangle (17.375,2.460);
\definecolor{CBMUSEBooks19}{HTML}{BBCB7A}
\fill[CBMUSEBooks19] (17.375,2.240) rectangle (17.425,2.460);
\definecolor{CBMUSEBooks20}{HTML}{C6CE7C}
\fill[CBMUSEBooks20] (17.425,2.240) rectangle (17.475,2.460);
\definecolor{CBMUSEBooks21}{HTML}{CED17F}
\fill[CBMUSEBooks21] (17.475,2.240) rectangle (17.525,2.460);
\definecolor{CBMUSEBooks22}{HTML}{D6D381}
\fill[CBMUSEBooks22] (17.525,2.240) rectangle (17.575,2.460);
\definecolor{CBMUSEBooks23}{HTML}{DED683}
\fill[CBMUSEBooks23] (17.575,2.240) rectangle (17.625,2.460);
\definecolor{CBMUSEBooks24}{HTML}{E6D885}
\fill[CBMUSEBooks24] (17.625,2.240) rectangle (17.675,2.460);
\definecolor{CBMUSEBooks25}{HTML}{F1DC88}
\fill[CBMUSEBooks25] (17.675,2.240) rectangle (17.725,2.460);
\definecolor{CBMUSEBooks26}{HTML}{F9DE8A}
\fill[CBMUSEBooks26] (17.725,2.240) rectangle (17.775,2.460);
\definecolor{CBMUSEBooks27}{HTML}{FEDF8B}
\fill[CBMUSEBooks27] (17.775,2.240) rectangle (17.825,2.460);
\definecolor{CBMUSEBooks28}{HTML}{FEDE89}
\fill[CBMUSEBooks28] (17.825,2.240) rectangle (17.875,2.460);
\definecolor{CBMUSEBooks29}{HTML}{FEDC88}
\fill[CBMUSEBooks29] (17.875,2.240) rectangle (17.925,2.460);
\definecolor{CBMUSEBooks30}{HTML}{FEDA86}
\fill[CBMUSEBooks30] (17.925,2.240) rectangle (17.975,2.460);
\definecolor{CBMUSEBooks31}{HTML}{FED884}
\fill[CBMUSEBooks31] (17.975,2.240) rectangle (18.025,2.460);
\definecolor{CBMUSEBooks32}{HTML}{FED683}
\fill[CBMUSEBooks32] (18.025,2.240) rectangle (18.075,2.460);
\definecolor{CBMUSEBooks33}{HTML}{FED481}
\fill[CBMUSEBooks33] (18.075,2.240) rectangle (18.125,2.460);
\definecolor{CBMUSEBooks34}{HTML}{FED280}
\fill[CBMUSEBooks34] (18.125,2.240) rectangle (18.175,2.460);
\definecolor{CBMUSEBooks35}{HTML}{FED07E}
\fill[CBMUSEBooks35] (18.175,2.240) rectangle (18.225,2.460);
\definecolor{CBMUSEBooks36}{HTML}{FECE7C}
\fill[CBMUSEBooks36] (18.225,2.240) rectangle (18.275,2.460);
\definecolor{CBMUSEBooks37}{HTML}{FECD7B}
\fill[CBMUSEBooks37] (18.275,2.240) rectangle (18.325,2.460);
\definecolor{CBMUSEBooks38}{HTML}{FECB79}
\fill[CBMUSEBooks38] (18.325,2.240) rectangle (18.375,2.460);
\definecolor{CBMUSEBooks39}{HTML}{FEC978}
\fill[CBMUSEBooks39] (18.375,2.240) rectangle (18.425,2.460);
\definecolor{CBMUSEBooks40}{HTML}{FDC776}
\fill[CBMUSEBooks40] (18.425,2.240) rectangle (18.475,2.460);
\definecolor{CBMUSEBooks41}{HTML}{FDC574}
\fill[CBMUSEBooks41] (18.475,2.240) rectangle (18.525,2.460);
\definecolor{CBMUSEBooks42}{HTML}{FDC373}
\fill[CBMUSEBooks42] (18.525,2.240) rectangle (18.575,2.460);
\definecolor{CBMUSEBooks43}{HTML}{FDC171}
\fill[CBMUSEBooks43] (18.575,2.240) rectangle (18.625,2.460);
\definecolor{CBMUSEBooks44}{HTML}{FDC070}
\fill[CBMUSEBooks44] (18.625,2.240) rectangle (18.675,2.460);
\definecolor{CBMUSEBooks45}{HTML}{FDBD6E}
\fill[CBMUSEBooks45] (18.675,2.240) rectangle (18.725,2.460);
\definecolor{CBMUSEBooks46}{HTML}{FDBC6C}
\fill[CBMUSEBooks46] (18.725,2.240) rectangle (18.775,2.460);
\definecolor{CBMUSEBooks47}{HTML}{FDBA6B}
\fill[CBMUSEBooks47] (18.775,2.240) rectangle (18.825,2.460);
\definecolor{CBMUSEBooks48}{HTML}{FDB869}
\fill[CBMUSEBooks48] (18.825,2.240) rectangle (18.875,2.460);
\definecolor{CBMUSEBooks49}{HTML}{FDB668}
\fill[CBMUSEBooks49] (18.875,2.240) rectangle (18.925,2.460);
\definecolor{CBMUSEBooks50}{HTML}{FDB466}
\fill[CBMUSEBooks50] (18.925,2.240) rectangle (18.975,2.460);
\definecolor{CBMUSEBooks51}{HTML}{FDB264}
\fill[CBMUSEBooks51] (18.975,2.240) rectangle (19.025,2.460);
\definecolor{CBMUSEBooks52}{HTML}{FDB063}
\fill[CBMUSEBooks52] (19.025,2.240) rectangle (19.075,2.460);
\definecolor{CBMUSEBooks53}{HTML}{FDAF61}
\fill[CBMUSEBooks53] (19.075,2.240) rectangle (19.125,2.460);
\definecolor{CBMUSEBooks54}{HTML}{FCAB60}
\fill[CBMUSEBooks54] (19.125,2.240) rectangle (19.175,2.460);
\definecolor{CBMUSEBooks55}{HTML}{FAA55D}
\fill[CBMUSEBooks55] (19.175,2.240) rectangle (19.225,2.460);
\definecolor{CBMUSEBooks56}{HTML}{F9A15B}
\fill[CBMUSEBooks56] (19.225,2.240) rectangle (19.275,2.460);
\definecolor{CBMUSEBooks57}{HTML}{F89C59}
\fill[CBMUSEBooks57] (19.275,2.240) rectangle (19.325,2.460);
\definecolor{CBMUSEBooks58}{HTML}{F69857}
\fill[CBMUSEBooks58] (19.325,2.240) rectangle (19.375,2.460);
\definecolor{CBMUSEBooks59}{HTML}{F59355}
\fill[CBMUSEBooks59] (19.375,2.240) rectangle (19.425,2.460);
\definecolor{CBMUSEBooks60}{HTML}{F38D52}
\fill[CBMUSEBooks60] (19.425,2.240) rectangle (19.475,2.460);
\definecolor{CBMUSEBooks61}{HTML}{F28950}
\fill[CBMUSEBooks61] (19.475,2.240) rectangle (19.525,2.460);
\definecolor{CBMUSEBooks62}{HTML}{F0844E}
\fill[CBMUSEBooks62] (19.525,2.240) rectangle (19.575,2.460);
\definecolor{CBMUSEBooks63}{HTML}{EF804C}
\fill[CBMUSEBooks63] (19.575,2.240) rectangle (19.625,2.460);
\definecolor{CBMUSEBooks64}{HTML}{EE7C4A}
\fill[CBMUSEBooks64] (19.625,2.240) rectangle (19.675,2.460);
\definecolor{CBMUSEBooks65}{HTML}{EC7647}
\fill[CBMUSEBooks65] (19.675,2.240) rectangle (19.725,2.460);
\definecolor{CBMUSEBooks66}{HTML}{EB7145}
\fill[CBMUSEBooks66] (19.725,2.240) rectangle (19.775,2.460);
\definecolor{CBMUSEBooks67}{HTML}{E96D43}
\fill[CBMUSEBooks67] (19.775,2.240) rectangle (19.825,2.460);
\definecolor{CBMUSEBooks68}{HTML}{E86841}
\fill[CBMUSEBooks68] (19.825,2.240) rectangle (19.875,2.460);
\definecolor{CBMUSEBooks69}{HTML}{E7643F}
\fill[CBMUSEBooks69] (19.875,2.240) rectangle (19.925,2.460);
\definecolor{CBMUSEBooks70}{HTML}{E55E3C}
\fill[CBMUSEBooks70] (19.925,2.240) rectangle (19.975,2.460);
\definecolor{CBMUSEBooks71}{HTML}{E45A3A}
\fill[CBMUSEBooks71] (19.975,2.240) rectangle (20.025,2.460);
\definecolor{CBMUSEBooks72}{HTML}{E25538}
\fill[CBMUSEBooks72] (20.025,2.240) rectangle (20.075,2.460);
\definecolor{CBMUSEBooks73}{HTML}{E15136}
\fill[CBMUSEBooks73] (20.075,2.240) rectangle (20.125,2.460);
\definecolor{CBMUSEBooks74}{HTML}{DF4C34}
\fill[CBMUSEBooks74] (20.125,2.240) rectangle (20.175,2.460);
\definecolor{CBMUSEBooks75}{HTML}{DE4631}
\fill[CBMUSEBooks75] (20.175,2.240) rectangle (20.225,2.460);
\definecolor{CBMUSEBooks76}{HTML}{DC422F}
\fill[CBMUSEBooks76] (20.225,2.240) rectangle (20.275,2.460);
\definecolor{CBMUSEBooks77}{HTML}{DB3D2D}
\fill[CBMUSEBooks77] (20.275,2.240) rectangle (20.325,2.460);
\definecolor{CBMUSEBooks78}{HTML}{DA392B}
\fill[CBMUSEBooks78] (20.325,2.240) rectangle (20.375,2.460);
\definecolor{CBMUSEBooks79}{HTML}{D83429}
\fill[CBMUSEBooks79] (20.375,2.240) rectangle (20.425,2.460);
\draw[draw=AxisDark,line width=0.45pt] (16.425,2.240) rectangle (20.425,2.460);
\draw[draw=AxisDark,line width=0.45pt] (16.425,2.240) -- (16.425,2.160);
\node[anchor=north,font=\large] at (16.425,2.120) {65};
\draw[draw=AxisDark,line width=0.45pt] (18.425,2.240) -- (18.425,2.160);
\node[anchor=north,font=\large] at (18.425,2.120) {415};
\draw[draw=AxisDark,line width=0.45pt] (20.425,2.240) -- (20.425,2.160);
\node[anchor=north,font=\large] at (20.425,2.120) {2648};
\node[anchor=center,align=center,font=\Large] at (10.775,0.620) {\makebox[17.40cm][c]{\makebox[3.25cm][c]{\tikz[baseline=-0.55ex,x=1.85cm,y=1.85cm]{\draw[draw=AxisDark,fill=NeutralFill,line width=0.45pt] (0,0) circle[radius=0.055];}\hspace{0.35em}Base}\makebox[3.25cm][c]{\tikz[baseline=-0.55ex,x=1.85cm,y=1.85cm]{\draw[draw=AxisDark,fill=NeutralFill,line width=0.45pt] (-0.055,-0.055) rectangle (0.055,0.055);}\hspace{0.35em}Retrain}\makebox[3.25cm][c]{\tikz[baseline=-0.55ex,x=1.85cm,y=1.85cm]{\draw[draw=AxisDark,fill=NeutralFill,line width=0.50pt] (0,0.070) -- (-0.062,-0.055) -- (0.062,-0.055) -- cycle;}\hspace{0.35em}GA}\makebox[3.25cm][c]{\tikz[baseline=-0.55ex,x=1.85cm,y=1.85cm]{\draw[draw=AxisDark,fill=NeutralFill,line width=0.50pt] (0,-0.070) -- (-0.062,0.055) -- (0.062,0.055) -- cycle;}\hspace{0.35em}GradDiff}\makebox[3.25cm][c]{\tikz[baseline=-0.55ex,x=1.85cm,y=1.85cm]{\draw[draw=AxisDark,line width=1.15pt,line cap=round] (-0.065,0) -- (0.065,0);\draw[draw=AxisDark,line width=1.15pt,line cap=round] (0,-0.065) -- (0,0.065);}\hspace{0.35em}NPO}}\\[0.24em]\makebox[17.40cm][c]{\makebox[3.25cm][c]{\tikz[baseline=-0.55ex,x=1.85cm,y=1.85cm]{\draw[draw=AxisDark,line width=1.15pt,line cap=round] (-0.060,-0.060) -- (0.060,0.060);\draw[draw=AxisDark,line width=1.15pt,line cap=round] (-0.060,0.060) -- (0.060,-0.060);}\hspace{0.35em}NPO+KLR}\makebox[3.25cm][c]{\tikz[baseline=-0.55ex,x=1.85cm,y=1.85cm]{\draw[draw=AxisDark,fill=NeutralFill,line width=0.50pt] (0,0.070) -- (0.070,0) -- (0,-0.070) -- (-0.070,0) -- cycle;}\hspace{0.35em}RMU}\makebox[3.25cm][c]{\tikz[baseline=-0.55ex,x=1.85cm,y=1.85cm]{\draw[draw=AxisDark,fill=NeutralFill,line width=0.50pt] (0.061,0.035) -- (0,0.070) -- (-0.061,0.035) -- (-0.061,-0.035) -- (0,-0.070) -- (0.061,-0.035) -- cycle;}\hspace{0.35em}SimNPO}\makebox[3.25cm][c]{\tikz[baseline=-0.55ex,x=1.85cm,y=1.85cm]{\draw[draw=AxisDark,fill=NeutralFill,line width=0.50pt] (0,0.090) -- (-0.026,0.036) -- (-0.086,0.028) -- (-0.042,-0.014) -- (-0.053,-0.073) -- (0,-0.045) -- (0.053,-0.073) -- (0.042,-0.014) -- (0.086,0.028) -- (0.026,0.036) -- cycle;}\hspace{0.35em}MASC}}};
\end{tikzpicture}

%% file: plots/tofu_masc_lr.pgf
\begin{tikzpicture}
\begin{axis}[
    width=0.96\linewidth,
    height=0.55\linewidth,
    xmin=-2, xmax=88,
    ymin=-0.03, ymax=1.04,
    xlabel={Optimizer step},
    ylabel={$\hat V_\rho$},
    axis lines=left,
    grid=major,
    grid style={draw=gray!25},
    tick align=outside,
    tick style={line width=0.9pt, black},
    axis line style={line width=1.1pt},
    xtick={0,20,40,60,80},
    ytick={0,0.5,1.0},
    yticklabels={0,0.5,1.0},
    xlabel style={font=\large},
    ylabel style={font=\large},
    tick label style={font=\normalsize},
    legend style={draw=none, fill=none, font=\normalsize, at={(0.97,0.97)}, anchor=north east},
    legend cell align={left},
]
\addplot[color=cyan!35!blue!45, mark=*, mark size=1.8pt, line width=1.4pt] coordinates {(0,0.998393) (5,0.998418) (10,0.998075) (15,0.994883) (20,0.979436) (25,0.909190) (30,0.770185) (35,0.631471) (40,0.486174) (45,0.281361) (50,0.073905) (55,0.000815) (60,0.000000) (65,0.000000) (70,0.000000) (75,0.000000) (80,0.000000) (85,0.000000)};
\addlegendentry{$\mathrm{lr}=5e-5$}
\addplot[color=blue!70!black, mark=*, mark size=1.8pt, line width=1.4pt] coordinates {(0,0.998393) (5,0.998075) (10,0.985165) (15,0.893631) (20,0.655836) (25,0.425522) (30,0.141612) (35,0.004098) (40,0.000000) (45,0.000000) (50,0.000000) (55,0.000000) (60,0.000000) (65,0.000000)};
\addlegendentry{$\mathrm{lr}=1e-4$}
\addplot[color=blue!35!black, mark=*, mark size=1.8pt, line width=1.4pt] coordinates {(0,0.998393) (5,0.985700) (10,0.741784) (15,0.328811) (20,0.000000) (25,0.000000) (30,0.000000) (35,0.000000) (40,0.000000) (45,0.000000) (50,0.000000) (55,0.000000)};
\addlegendentry{$\mathrm{lr}=2e-4$}
\end{axis}
\end{tikzpicture}

%% file: plots/scaling_unlearning.pgf
\begin{tikzpicture}[x=1cm,y=1cm]
\definecolor{ERougeOrange}{HTML}{D55E00}
\definecolor{PRougeBlue}{HTML}{0072B2}
\definecolor{MUCream}{HTML}{C8B66A}
\definecolor{GridGrey}{HTML}{D5D7DC}
\definecolor{AxisDark}{HTML}{252A31}
\path[use as bounding box] (0,-1.95) rectangle (14.05,7.15);
\draw[GridGrey,line width=0.45pt] (1,1) -- (7.1,1);
\node[anchor=east,font=\large] at (0.88,1) {0.2};
\draw[GridGrey,line width=0.45pt] (1,2.5642) -- (7.1,2.5642);
\node[anchor=east,font=\large] at (0.88,2.5642) {0.3};
\draw[GridGrey,line width=0.45pt] (1,3.6739) -- (7.1,3.6739);
\node[anchor=east,font=\large] at (0.88,3.6739) {0.4};
\draw[GridGrey,line width=0.45pt] (1,5.2381) -- (7.1,5.2381);
\node[anchor=east,font=\large] at (0.88,5.2381) {0.6};
\draw[GridGrey,line width=0.35pt,opacity=0.65] (1.2258,1) -- (1.2258,6.15);
\draw[GridGrey,line width=0.35pt,opacity=0.65] (3.5804,1) -- (3.5804,6.15);
\draw[GridGrey,line width=0.35pt,opacity=0.65] (5.0659,1) -- (5.0659,6.15);
\draw[GridGrey,line width=0.35pt,opacity=0.65] (6.8819,1) -- (6.8819,6.15);
\draw[ERougeOrange,line width=1.45pt] (1.1383,3.4833) -- (1.1627,3.4832) -- (1.1871,3.483) -- (1.2115,3.4828) -- (1.236,3.4827) -- (1.2604,3.4825) -- (1.2848,3.4824) -- (1.3092,3.4822) -- (1.3336,3.482) -- (1.358,3.4819) -- (1.3824,3.4817) -- (1.4068,3.4816) -- (1.4312,3.4814) -- (1.4556,3.4813) -- (1.48,3.4811) -- (1.5044,3.4809) -- (1.5288,3.4808) -- (1.5533,3.4806) -- (1.5777,3.4805) -- (1.6021,3.4803) -- (1.6265,3.4801) -- (1.6509,3.48) -- (1.6753,3.4798) -- (1.6997,3.4797) -- (1.7241,3.4795) -- (1.7485,3.4793) -- (1.7729,3.4792) -- (1.7973,3.479) -- (1.8217,3.4789) -- (1.8461,3.4787) -- (1.8706,3.4786) -- (1.895,3.4784) -- (1.9194,3.4782) -- (1.9438,3.4781) -- (1.9682,3.4779) -- (1.9926,3.4778) -- (2.017,3.4776) -- (2.0414,3.4774) -- (2.0658,3.4773) -- (2.0902,3.4771) -- (2.1146,3.477) -- (2.139,3.4768) -- (2.1634,3.4766) -- (2.1879,3.4765) -- (2.2123,3.4763) -- (2.2367,3.4762) -- (2.2611,3.476) -- (2.2855,3.4759) -- (2.3099,3.4757) -- (2.3343,3.4755) -- (2.3587,3.4754) -- (2.3831,3.4752) -- (2.4075,3.4751) -- (2.4319,3.4749) -- (2.4563,3.4747) -- (2.4808,3.4746) -- (2.5052,3.4744) -- (2.5296,3.4743) -- (2.554,3.4741) -- (2.5784,3.4739) -- (2.6028,3.4738) -- (2.6272,3.4736) -- (2.6516,3.4735) -- (2.676,3.4733) -- (2.7004,3.4732) -- (2.7248,3.473) -- (2.7492,3.4728) -- (2.7736,3.4727) -- (2.7981,3.4725) -- (2.8225,3.4724) -- (2.8469,3.4722) -- (2.8713,3.472) -- (2.8957,3.4719) -- (2.9201,3.4717) -- (2.9445,3.4716) -- (2.9689,3.4714) -- (2.9933,3.4712) -- (3.0177,3.4711) -- (3.0421,3.4709) -- (3.0665,3.4708) -- (3.0909,3.4706) -- (3.1154,3.4705) -- (3.1398,3.4703) -- (3.1642,3.4701) -- (3.1886,3.47) -- (3.213,3.4698) -- (3.2374,3.4697) -- (3.2618,3.4695) -- (3.2862,3.4693) -- (3.3106,3.4692) -- (3.335,3.469) -- (3.3594,3.4689) -- (3.3838,3.4687) -- (3.4082,3.4686) -- (3.4327,3.4684) -- (3.4571,3.4682) -- (3.4815,3.4681) -- (3.5059,3.4679) -- (3.5303,3.4678) -- (3.5547,3.4676) -- (3.5791,3.4674) -- (3.6035,3.4673) -- (3.6279,3.4671) -- (3.6523,3.467) -- (3.6767,3.4668) -- (3.7011,3.4666) -- (3.7256,3.4665) -- (3.75,3.4663) -- (3.7744,3.4662) -- (3.7988,3.466) -- (3.8232,3.4659) -- (3.8476,3.4657) -- (3.872,3.4655) -- (3.8964,3.4654) -- (3.9208,3.4652) -- (3.9452,3.4651) -- (3.9696,3.4649) -- (3.994,3.4647) -- (4.0184,3.4646) -- (4.0429,3.4644) -- (4.0673,3.4643) -- (4.0917,3.4641) -- (4.1161,3.4639) -- (4.1405,3.4638) -- (4.1649,3.4636) -- (4.1893,3.4635) -- (4.2137,3.4633) -- (4.2381,3.4632) -- (4.2625,3.463) -- (4.2869,3.4628) -- (4.3113,3.4627) -- (4.3357,3.4625) -- (4.3602,3.4624) -- (4.3846,3.4622) -- (4.409,3.462) -- (4.4334,3.4619) -- (4.4578,3.4617) -- (4.4822,3.4616) -- (4.5066,3.4614) -- (4.531,3.4612) -- (4.5554,3.4611) -- (4.5798,3.4609) -- (4.6042,3.4608) -- (4.6286,3.4606) -- (4.6531,3.4605) -- (4.6775,3.4603) -- (4.7019,3.4601) -- (4.7263,3.46) -- (4.7507,3.4598) -- (4.7751,3.4597) -- (4.7995,3.4595) -- (4.8239,3.4593) -- (4.8483,3.4592) -- (4.8727,3.459) -- (4.8971,3.4589) -- (4.9215,3.4587) -- (4.9459,3.4585) -- (4.9704,3.4584) -- (4.9948,3.4582) -- (5.0192,3.4581) -- (5.0436,3.4579) -- (5.068,3.4578) -- (5.0924,3.4576) -- (5.1168,3.4574) -- (5.1412,3.4573) -- (5.1656,3.4571) -- (5.19,3.457) -- (5.2144,3.4568) -- (5.2388,3.4566) -- (5.2632,3.4565) -- (5.2877,3.4563) -- (5.3121,3.4562) -- (5.3365,3.456) -- (5.3609,3.4558) -- (5.3853,3.4557) -- (5.4097,3.4555) -- (5.4341,3.4554) -- (5.4585,3.4552) -- (5.4829,3.4551) -- (5.5073,3.4549) -- (5.5317,3.4547) -- (5.5561,3.4546) -- (5.5805,3.4544) -- (5.605,3.4543) -- (5.6294,3.4541) -- (5.6538,3.4539) -- (5.6782,3.4538) -- (5.7026,3.4536) -- (5.727,3.4535) -- (5.7514,3.4533) -- (5.7758,3.4531) -- (5.8002,3.453) -- (5.8246,3.4528) -- (5.849,3.4527) -- (5.8734,3.4525) -- (5.8979,3.4524) -- (5.9223,3.4522) -- (5.9467,3.452) -- (5.9711,3.4519) -- (5.9955,3.4517) -- (6.0199,3.4516) -- (6.0443,3.4514) -- (6.0687,3.4512) -- (6.0931,3.4511) -- (6.1175,3.4509) -- (6.1419,3.4508) -- (6.1663,3.4506) -- (6.1907,3.4504) -- (6.2152,3.4503) -- (6.2396,3.4501) -- (6.264,3.45) -- (6.2884,3.4498) -- (6.3128,3.4497) -- (6.3372,3.4495) -- (6.3616,3.4493) -- (6.386,3.4492) -- (6.4104,3.449) -- (6.4348,3.4489) -- (6.4592,3.4487) -- (6.4836,3.4485) -- (6.508,3.4484) -- (6.5325,3.4482) -- (6.5569,3.4481) -- (6.5813,3.4479) -- (6.6057,3.4477) -- (6.6301,3.4476) -- (6.6545,3.4474) -- (6.6789,3.4473) -- (6.7033,3.4471) -- (6.7277,3.447) -- (6.7521,3.4468) -- (6.7765,3.4466) -- (6.8009,3.4465) -- (6.8253,3.4463) -- (6.8498,3.4462) -- (6.8742,3.446) -- (6.8986,3.4458) -- (6.923,3.4457) -- (6.9474,3.4455) -- (6.9718,3.4454);
\filldraw[fill=ERougeOrange,draw=ERougeOrange,line width=0.42pt] (1.2258,3.4667) circle[radius=0.068];
\filldraw[fill=ERougeOrange,draw=ERougeOrange,line width=0.42pt] (3.5804,3.4832) circle[radius=0.068];
\filldraw[fill=ERougeOrange,draw=ERougeOrange,line width=0.42pt] (5.0659,3.479) circle[radius=0.068];
\filldraw[fill=ERougeOrange,draw=ERougeOrange,line width=0.42pt] (6.8819,3.4249) circle[radius=0.068];
\draw[PRougeBlue,line width=1.45pt] (1.1383,2.9471) -- (1.1627,2.9479) -- (1.1871,2.9488) -- (1.2115,2.9496) -- (1.236,2.9504) -- (1.2604,2.9513) -- (1.2848,2.9521) -- (1.3092,2.9529) -- (1.3336,2.9538) -- (1.358,2.9546) -- (1.3824,2.9554) -- (1.4068,2.9563) -- (1.4312,2.9571) -- (1.4556,2.9579) -- (1.48,2.9588) -- (1.5044,2.9596) -- (1.5288,2.9604) -- (1.5533,2.9613) -- (1.5777,2.9621) -- (1.6021,2.9629) -- (1.6265,2.9638) -- (1.6509,2.9646) -- (1.6753,2.9654) -- (1.6997,2.9663) -- (1.7241,2.9671) -- (1.7485,2.9679) -- (1.7729,2.9688) -- (1.7973,2.9696) -- (1.8217,2.9705) -- (1.8461,2.9713) -- (1.8706,2.9721) -- (1.895,2.973) -- (1.9194,2.9738) -- (1.9438,2.9746) -- (1.9682,2.9755) -- (1.9926,2.9763) -- (2.017,2.9771) -- (2.0414,2.978) -- (2.0658,2.9788) -- (2.0902,2.9796) -- (2.1146,2.9805) -- (2.139,2.9813) -- (2.1634,2.9821) -- (2.1879,2.983) -- (2.2123,2.9838) -- (2.2367,2.9846) -- (2.2611,2.9855) -- (2.2855,2.9863) -- (2.3099,2.9871) -- (2.3343,2.988) -- (2.3587,2.9888) -- (2.3831,2.9896) -- (2.4075,2.9905) -- (2.4319,2.9913) -- (2.4563,2.9921) -- (2.4808,2.993) -- (2.5052,2.9938) -- (2.5296,2.9946) -- (2.554,2.9955) -- (2.5784,2.9963) -- (2.6028,2.9971) -- (2.6272,2.998) -- (2.6516,2.9988) -- (2.676,2.9996) -- (2.7004,3.0005) -- (2.7248,3.0013) -- (2.7492,3.0021) -- (2.7736,3.003) -- (2.7981,3.0038) -- (2.8225,3.0046) -- (2.8469,3.0055) -- (2.8713,3.0063) -- (2.8957,3.0071) -- (2.9201,3.008) -- (2.9445,3.0088) -- (2.9689,3.0096) -- (2.9933,3.0105) -- (3.0177,3.0113) -- (3.0421,3.0121) -- (3.0665,3.013) -- (3.0909,3.0138) -- (3.1154,3.0146) -- (3.1398,3.0155) -- (3.1642,3.0163) -- (3.1886,3.0171) -- (3.213,3.018) -- (3.2374,3.0188) -- (3.2618,3.0196) -- (3.2862,3.0205) -- (3.3106,3.0213) -- (3.335,3.0221) -- (3.3594,3.023) -- (3.3838,3.0238) -- (3.4082,3.0246) -- (3.4327,3.0255) -- (3.4571,3.0263) -- (3.4815,3.0271) -- (3.5059,3.028) -- (3.5303,3.0288) -- (3.5547,3.0296) -- (3.5791,3.0305) -- (3.6035,3.0313) -- (3.6279,3.0321) -- (3.6523,3.033) -- (3.6767,3.0338) -- (3.7011,3.0346) -- (3.7256,3.0355) -- (3.75,3.0363) -- (3.7744,3.0371) -- (3.7988,3.038) -- (3.8232,3.0388) -- (3.8476,3.0396) -- (3.872,3.0405) -- (3.8964,3.0413) -- (3.9208,3.0421) -- (3.9452,3.043) -- (3.9696,3.0438) -- (3.994,3.0446) -- (4.0184,3.0455) -- (4.0429,3.0463) -- (4.0673,3.0471) -- (4.0917,3.048) -- (4.1161,3.0488) -- (4.1405,3.0496) -- (4.1649,3.0505) -- (4.1893,3.0513) -- (4.2137,3.0521) -- (4.2381,3.053) -- (4.2625,3.0538) -- (4.2869,3.0546) -- (4.3113,3.0555) -- (4.3357,3.0563) -- (4.3602,3.0571) -- (4.3846,3.058) -- (4.409,3.0588) -- (4.4334,3.0597) -- (4.4578,3.0605) -- (4.4822,3.0613) -- (4.5066,3.0622) -- (4.531,3.063) -- (4.5554,3.0638) -- (4.5798,3.0647) -- (4.6042,3.0655) -- (4.6286,3.0663) -- (4.6531,3.0672) -- (4.6775,3.068) -- (4.7019,3.0688) -- (4.7263,3.0697) -- (4.7507,3.0705) -- (4.7751,3.0713) -- (4.7995,3.0722) -- (4.8239,3.073) -- (4.8483,3.0738) -- (4.8727,3.0747) -- (4.8971,3.0755) -- (4.9215,3.0763) -- (4.9459,3.0772) -- (4.9704,3.078) -- (4.9948,3.0788) -- (5.0192,3.0797) -- (5.0436,3.0805) -- (5.068,3.0813) -- (5.0924,3.0822) -- (5.1168,3.083) -- (5.1412,3.0838) -- (5.1656,3.0847) -- (5.19,3.0855) -- (5.2144,3.0863) -- (5.2388,3.0872) -- (5.2632,3.088) -- (5.2877,3.0888) -- (5.3121,3.0897) -- (5.3365,3.0905) -- (5.3609,3.0913) -- (5.3853,3.0922) -- (5.4097,3.093) -- (5.4341,3.0938) -- (5.4585,3.0947) -- (5.4829,3.0955) -- (5.5073,3.0963) -- (5.5317,3.0972) -- (5.5561,3.098) -- (5.5805,3.0988) -- (5.605,3.0997) -- (5.6294,3.1005) -- (5.6538,3.1013) -- (5.6782,3.1022) -- (5.7026,3.103) -- (5.727,3.1038) -- (5.7514,3.1047) -- (5.7758,3.1055) -- (5.8002,3.1063) -- (5.8246,3.1072) -- (5.849,3.108) -- (5.8734,3.1088) -- (5.8979,3.1097) -- (5.9223,3.1105) -- (5.9467,3.1113) -- (5.9711,3.1122) -- (5.9955,3.113) -- (6.0199,3.1138) -- (6.0443,3.1147) -- (6.0687,3.1155) -- (6.0931,3.1163) -- (6.1175,3.1172) -- (6.1419,3.118) -- (6.1663,3.1188) -- (6.1907,3.1197) -- (6.2152,3.1205) -- (6.2396,3.1213) -- (6.264,3.1222) -- (6.2884,3.123) -- (6.3128,3.1238) -- (6.3372,3.1247) -- (6.3616,3.1255) -- (6.386,3.1263) -- (6.4104,3.1272) -- (6.4348,3.128) -- (6.4592,3.1288) -- (6.4836,3.1297) -- (6.508,3.1305) -- (6.5325,3.1313) -- (6.5569,3.1322) -- (6.5813,3.133) -- (6.6057,3.1338) -- (6.6301,3.1347) -- (6.6545,3.1355) -- (6.6789,3.1363) -- (6.7033,3.1372) -- (6.7277,3.138) -- (6.7521,3.1388) -- (6.7765,3.1397) -- (6.8009,3.1405) -- (6.8253,3.1413) -- (6.8498,3.1422) -- (6.8742,3.143) -- (6.8986,3.1438) -- (6.923,3.1447) -- (6.9474,3.1455) -- (6.9718,3.1463);
\filldraw[fill=PRougeBlue,draw=PRougeBlue,line width=0.42pt] (1.1608,2.8273) rectangle (1.2908,2.9573);
\filldraw[fill=PRougeBlue,draw=PRougeBlue,line width=0.42pt] (3.5154,3.0064) rectangle (3.6454,3.1364);
\filldraw[fill=PRougeBlue,draw=PRougeBlue,line width=0.42pt] (5.0009,3.1219) rectangle (5.1309,3.2519);
\filldraw[fill=PRougeBlue,draw=PRougeBlue,line width=0.42pt] (6.8169,2.9895) rectangle (6.9469,3.1195);
\draw[MUCream,line width=1.45pt] (1.1383,3.8377) -- (1.1627,3.8443) -- (1.1871,3.8508) -- (1.2115,3.8574) -- (1.236,3.8639) -- (1.2604,3.8705) -- (1.2848,3.877) -- (1.3092,3.8836) -- (1.3336,3.8901) -- (1.358,3.8966) -- (1.3824,3.9032) -- (1.4068,3.9097) -- (1.4312,3.9163) -- (1.4556,3.9228) -- (1.48,3.9294) -- (1.5044,3.9359) -- (1.5288,3.9424) -- (1.5533,3.949) -- (1.5777,3.9555) -- (1.6021,3.9621) -- (1.6265,3.9686) -- (1.6509,3.9752) -- (1.6753,3.9817) -- (1.6997,3.9882) -- (1.7241,3.9948) -- (1.7485,4.0013) -- (1.7729,4.0079) -- (1.7973,4.0144) -- (1.8217,4.021) -- (1.8461,4.0275) -- (1.8706,4.0341) -- (1.895,4.0406) -- (1.9194,4.0471) -- (1.9438,4.0537) -- (1.9682,4.0602) -- (1.9926,4.0668) -- (2.017,4.0733) -- (2.0414,4.0799) -- (2.0658,4.0864) -- (2.0902,4.0929) -- (2.1146,4.0995) -- (2.139,4.106) -- (2.1634,4.1126) -- (2.1879,4.1191) -- (2.2123,4.1257) -- (2.2367,4.1322) -- (2.2611,4.1388) -- (2.2855,4.1453) -- (2.3099,4.1518) -- (2.3343,4.1584) -- (2.3587,4.1649) -- (2.3831,4.1715) -- (2.4075,4.178) -- (2.4319,4.1846) -- (2.4563,4.1911) -- (2.4808,4.1976) -- (2.5052,4.2042) -- (2.5296,4.2107) -- (2.554,4.2173) -- (2.5784,4.2238) -- (2.6028,4.2304) -- (2.6272,4.2369) -- (2.6516,4.2435) -- (2.676,4.25) -- (2.7004,4.2565) -- (2.7248,4.2631) -- (2.7492,4.2696) -- (2.7736,4.2762) -- (2.7981,4.2827) -- (2.8225,4.2893) -- (2.8469,4.2958) -- (2.8713,4.3023) -- (2.8957,4.3089) -- (2.9201,4.3154) -- (2.9445,4.322) -- (2.9689,4.3285) -- (2.9933,4.3351) -- (3.0177,4.3416) -- (3.0421,4.3481) -- (3.0665,4.3547) -- (3.0909,4.3612) -- (3.1154,4.3678) -- (3.1398,4.3743) -- (3.1642,4.3809) -- (3.1886,4.3874) -- (3.213,4.394) -- (3.2374,4.4005) -- (3.2618,4.407) -- (3.2862,4.4136) -- (3.3106,4.4201) -- (3.335,4.4267) -- (3.3594,4.4332) -- (3.3838,4.4398) -- (3.4082,4.4463) -- (3.4327,4.4528) -- (3.4571,4.4594) -- (3.4815,4.4659) -- (3.5059,4.4725) -- (3.5303,4.479) -- (3.5547,4.4856) -- (3.5791,4.4921) -- (3.6035,4.4987) -- (3.6279,4.5052) -- (3.6523,4.5117) -- (3.6767,4.5183) -- (3.7011,4.5248) -- (3.7256,4.5314) -- (3.75,4.5379) -- (3.7744,4.5445) -- (3.7988,4.551) -- (3.8232,4.5575) -- (3.8476,4.5641) -- (3.872,4.5706) -- (3.8964,4.5772) -- (3.9208,4.5837) -- (3.9452,4.5903) -- (3.9696,4.5968) -- (3.994,4.6034) -- (4.0184,4.6099) -- (4.0429,4.6164) -- (4.0673,4.623) -- (4.0917,4.6295) -- (4.1161,4.6361) -- (4.1405,4.6426) -- (4.1649,4.6492) -- (4.1893,4.6557) -- (4.2137,4.6622) -- (4.2381,4.6688) -- (4.2625,4.6753) -- (4.2869,4.6819) -- (4.3113,4.6884) -- (4.3357,4.695) -- (4.3602,4.7015) -- (4.3846,4.708) -- (4.409,4.7146) -- (4.4334,4.7211) -- (4.4578,4.7277) -- (4.4822,4.7342) -- (4.5066,4.7408) -- (4.531,4.7473) -- (4.5554,4.7539) -- (4.5798,4.7604) -- (4.6042,4.7669) -- (4.6286,4.7735) -- (4.6531,4.78) -- (4.6775,4.7866) -- (4.7019,4.7931) -- (4.7263,4.7997) -- (4.7507,4.8062) -- (4.7751,4.8127) -- (4.7995,4.8193) -- (4.8239,4.8258) -- (4.8483,4.8324) -- (4.8727,4.8389) -- (4.8971,4.8455) -- (4.9215,4.852) -- (4.9459,4.8586) -- (4.9704,4.8651) -- (4.9948,4.8716) -- (5.0192,4.8782) -- (5.0436,4.8847) -- (5.068,4.8913) -- (5.0924,4.8978) -- (5.1168,4.9044) -- (5.1412,4.9109) -- (5.1656,4.9174) -- (5.19,4.924) -- (5.2144,4.9305) -- (5.2388,4.9371) -- (5.2632,4.9436) -- (5.2877,4.9502) -- (5.3121,4.9567) -- (5.3365,4.9633) -- (5.3609,4.9698) -- (5.3853,4.9763) -- (5.4097,4.9829) -- (5.4341,4.9894) -- (5.4585,4.996) -- (5.4829,5.0025) -- (5.5073,5.0091) -- (5.5317,5.0156) -- (5.5561,5.0221) -- (5.5805,5.0287) -- (5.605,5.0352) -- (5.6294,5.0418) -- (5.6538,5.0483) -- (5.6782,5.0549) -- (5.7026,5.0614) -- (5.727,5.0679) -- (5.7514,5.0745) -- (5.7758,5.081) -- (5.8002,5.0876) -- (5.8246,5.0941) -- (5.849,5.1007) -- (5.8734,5.1072) -- (5.8979,5.1138) -- (5.9223,5.1203) -- (5.9467,5.1268) -- (5.9711,5.1334) -- (5.9955,5.1399) -- (6.0199,5.1465) -- (6.0443,5.153) -- (6.0687,5.1596) -- (6.0931,5.1661) -- (6.1175,5.1726) -- (6.1419,5.1792) -- (6.1663,5.1857) -- (6.1907,5.1923) -- (6.2152,5.1988) -- (6.2396,5.2054) -- (6.264,5.2119) -- (6.2884,5.2185) -- (6.3128,5.225) -- (6.3372,5.2315) -- (6.3616,5.2381) -- (6.386,5.2446) -- (6.4104,5.2512) -- (6.4348,5.2577) -- (6.4592,5.2643) -- (6.4836,5.2708) -- (6.508,5.2773) -- (6.5325,5.2839) -- (6.5569,5.2904) -- (6.5813,5.297) -- (6.6057,5.3035) -- (6.6301,5.3101) -- (6.6545,5.3166) -- (6.6789,5.3232) -- (6.7033,5.3297) -- (6.7277,5.3362) -- (6.7521,5.3428) -- (6.7765,5.3493) -- (6.8009,5.3559) -- (6.8253,5.3624) -- (6.8498,5.369) -- (6.8742,5.3755) -- (6.8986,5.382) -- (6.923,5.3886) -- (6.9474,5.3951) -- (6.9718,5.4017);
\filldraw[fill=MUCream,draw=MUCream,line width=0.42pt] (1.2258,3.9977) -- (1.3078,3.8457) -- (1.1438,3.8457) -- cycle;
\filldraw[fill=MUCream,draw=MUCream,line width=0.42pt] (3.5804,4.4721) -- (3.6624,4.3201) -- (3.4984,4.3201) -- cycle;
\filldraw[fill=MUCream,draw=MUCream,line width=0.42pt] (5.0659,4.9892) -- (5.1479,4.8372) -- (4.9839,4.8372) -- cycle;
\filldraw[fill=MUCream,draw=MUCream,line width=0.42pt] (6.8819,5.491) -- (6.9639,5.339) -- (6.7999,5.339) -- cycle;
\draw[AxisDark,line width=1.55pt] (1,1) -- (7.1,1);
\draw[AxisDark,line width=1.55pt] (1,1) -- (1,6.15);
\draw[AxisDark,line width=1.25pt] (1.2258,1) -- (1.2258,0.89);
\node[anchor=north,font=\large] at (1.2258,0.82) {0.5B};
\draw[AxisDark,line width=1.25pt] (3.5804,1) -- (3.5804,0.89);
\node[anchor=north,font=\large] at (3.5804,0.82) {1.5B};
\draw[AxisDark,line width=1.25pt] (5.0659,1) -- (5.0659,0.89);
\node[anchor=north,font=\large] at (5.0659,0.82) {3B};
\draw[AxisDark,line width=1.25pt] (6.8819,1) -- (6.8819,0.89);
\node[anchor=north,font=\large] at (6.8819,0.82) {7B};
\draw[AxisDark,line width=1.25pt] (1,1) -- (0.89,1);
\draw[AxisDark,line width=1.25pt] (1,2.5642) -- (0.89,2.5642);
\draw[AxisDark,line width=1.25pt] (1,3.6739) -- (0.89,3.6739);
\draw[AxisDark,line width=1.25pt] (1,5.2381) -- (0.89,5.2381);
\node[anchor=north,font=\large] at (4.05,0.38) {Model size};
\node[anchor=south,font=\Large] at (4.05,6.4) {SimNPO};
\draw[GridGrey,line width=0.45pt] (7.82,1) -- (13.92,1);
\draw[GridGrey,line width=0.45pt] (7.82,2.5642) -- (13.92,2.5642);
\draw[GridGrey,line width=0.45pt] (7.82,3.6739) -- (13.92,3.6739);
\draw[GridGrey,line width=0.45pt] (7.82,5.2381) -- (13.92,5.2381);
\draw[GridGrey,line width=0.35pt,opacity=0.65] (8.0458,1) -- (8.0458,6.15);
\draw[GridGrey,line width=0.35pt,opacity=0.65] (10.4004,1) -- (10.4004,6.15);
\draw[GridGrey,line width=0.35pt,opacity=0.65] (11.8859,1) -- (11.8859,6.15);
\draw[GridGrey,line width=0.35pt,opacity=0.65] (13.7019,1) -- (13.7019,6.15);
\draw[ERougeOrange,line width=1.45pt] (7.9583,3.5552) -- (7.9827,3.556) -- (8.0071,3.5568) -- (8.0315,3.5576) -- (8.056,3.5584) -- (8.0804,3.5592) -- (8.1048,3.56) -- (8.1292,3.5608) -- (8.1536,3.5616) -- (8.178,3.5624) -- (8.2024,3.5632) -- (8.2268,3.564) -- (8.2512,3.5648) -- (8.2756,3.5656) -- (8.3,3.5664) -- (8.3244,3.5672) -- (8.3488,3.5681) -- (8.3733,3.5689) -- (8.3977,3.5697) -- (8.4221,3.5705) -- (8.4465,3.5713) -- (8.4709,3.5721) -- (8.4953,3.5729) -- (8.5197,3.5737) -- (8.5441,3.5745) -- (8.5685,3.5753) -- (8.5929,3.5761) -- (8.6173,3.5769) -- (8.6417,3.5777) -- (8.6661,3.5785) -- (8.6906,3.5793) -- (8.715,3.5801) -- (8.7394,3.5809) -- (8.7638,3.5817) -- (8.7882,3.5825) -- (8.8126,3.5833) -- (8.837,3.5841) -- (8.8614,3.5849) -- (8.8858,3.5858) -- (8.9102,3.5866) -- (8.9346,3.5874) -- (8.959,3.5882) -- (8.9834,3.589) -- (9.0079,3.5898) -- (9.0323,3.5906) -- (9.0567,3.5914) -- (9.0811,3.5922) -- (9.1055,3.593) -- (9.1299,3.5938) -- (9.1543,3.5946) -- (9.1787,3.5954) -- (9.2031,3.5962) -- (9.2275,3.597) -- (9.2519,3.5978) -- (9.2763,3.5986) -- (9.3008,3.5994) -- (9.3252,3.6002) -- (9.3496,3.601) -- (9.374,3.6018) -- (9.3984,3.6026) -- (9.4228,3.6035) -- (9.4472,3.6043) -- (9.4716,3.6051) -- (9.496,3.6059) -- (9.5204,3.6067) -- (9.5448,3.6075) -- (9.5692,3.6083) -- (9.5936,3.6091) -- (9.6181,3.6099) -- (9.6425,3.6107) -- (9.6669,3.6115) -- (9.6913,3.6123) -- (9.7157,3.6131) -- (9.7401,3.6139) -- (9.7645,3.6147) -- (9.7889,3.6155) -- (9.8133,3.6163) -- (9.8377,3.6171) -- (9.8621,3.6179) -- (9.8865,3.6187) -- (9.9109,3.6195) -- (9.9354,3.6203) -- (9.9598,3.6212) -- (9.9842,3.622) -- (10.0086,3.6228) -- (10.033,3.6236) -- (10.0574,3.6244) -- (10.0818,3.6252) -- (10.1062,3.626) -- (10.1306,3.6268) -- (10.155,3.6276) -- (10.1794,3.6284) -- (10.2038,3.6292) -- (10.2282,3.63) -- (10.2527,3.6308) -- (10.2771,3.6316) -- (10.3015,3.6324) -- (10.3259,3.6332) -- (10.3503,3.634) -- (10.3747,3.6348) -- (10.3991,3.6356) -- (10.4235,3.6364) -- (10.4479,3.6372) -- (10.4723,3.638) -- (10.4967,3.6389) -- (10.5211,3.6397) -- (10.5456,3.6405) -- (10.57,3.6413) -- (10.5944,3.6421) -- (10.6188,3.6429) -- (10.6432,3.6437) -- (10.6676,3.6445) -- (10.692,3.6453) -- (10.7164,3.6461) -- (10.7408,3.6469) -- (10.7652,3.6477) -- (10.7896,3.6485) -- (10.814,3.6493) -- (10.8384,3.6501) -- (10.8629,3.6509) -- (10.8873,3.6517) -- (10.9117,3.6525) -- (10.9361,3.6533) -- (10.9605,3.6541) -- (10.9849,3.6549) -- (11.0093,3.6557) -- (11.0337,3.6566) -- (11.0581,3.6574) -- (11.0825,3.6582) -- (11.1069,3.659) -- (11.1313,3.6598) -- (11.1557,3.6606) -- (11.1802,3.6614) -- (11.2046,3.6622) -- (11.229,3.663) -- (11.2534,3.6638) -- (11.2778,3.6646) -- (11.3022,3.6654) -- (11.3266,3.6662) -- (11.351,3.667) -- (11.3754,3.6678) -- (11.3998,3.6686) -- (11.4242,3.6694) -- (11.4486,3.6702) -- (11.4731,3.671) -- (11.4975,3.6718) -- (11.5219,3.6726) -- (11.5463,3.6734) -- (11.5707,3.6743) -- (11.5951,3.6751) -- (11.6195,3.6759) -- (11.6439,3.6767) -- (11.6683,3.6775) -- (11.6927,3.6783) -- (11.7171,3.6791) -- (11.7415,3.6799) -- (11.7659,3.6807) -- (11.7904,3.6815) -- (11.8148,3.6823) -- (11.8392,3.6831) -- (11.8636,3.6839) -- (11.888,3.6847) -- (11.9124,3.6855) -- (11.9368,3.6863) -- (11.9612,3.6871) -- (11.9856,3.6879) -- (12.01,3.6887) -- (12.0344,3.6895) -- (12.0588,3.6903) -- (12.0832,3.6911) -- (12.1077,3.692) -- (12.1321,3.6928) -- (12.1565,3.6936) -- (12.1809,3.6944) -- (12.2053,3.6952) -- (12.2297,3.696) -- (12.2541,3.6968) -- (12.2785,3.6976) -- (12.3029,3.6984) -- (12.3273,3.6992) -- (12.3517,3.7) -- (12.3761,3.7008) -- (12.4005,3.7016) -- (12.425,3.7024) -- (12.4494,3.7032) -- (12.4738,3.704) -- (12.4982,3.7048) -- (12.5226,3.7056) -- (12.547,3.7064) -- (12.5714,3.7072) -- (12.5958,3.708) -- (12.6202,3.7088) -- (12.6446,3.7096) -- (12.669,3.7105) -- (12.6934,3.7113) -- (12.7179,3.7121) -- (12.7423,3.7129) -- (12.7667,3.7137) -- (12.7911,3.7145) -- (12.8155,3.7153) -- (12.8399,3.7161) -- (12.8643,3.7169) -- (12.8887,3.7177) -- (12.9131,3.7185) -- (12.9375,3.7193) -- (12.9619,3.7201) -- (12.9863,3.7209) -- (13.0107,3.7217) -- (13.0352,3.7225) -- (13.0596,3.7233) -- (13.084,3.7241) -- (13.1084,3.7249) -- (13.1328,3.7257) -- (13.1572,3.7265) -- (13.1816,3.7273) -- (13.206,3.7282) -- (13.2304,3.729) -- (13.2548,3.7298) -- (13.2792,3.7306) -- (13.3036,3.7314) -- (13.328,3.7322) -- (13.3525,3.733) -- (13.3769,3.7338) -- (13.4013,3.7346) -- (13.4257,3.7354) -- (13.4501,3.7362) -- (13.4745,3.737) -- (13.4989,3.7378) -- (13.5233,3.7386) -- (13.5477,3.7394) -- (13.5721,3.7402) -- (13.5965,3.741) -- (13.6209,3.7418) -- (13.6453,3.7426) -- (13.6698,3.7434) -- (13.6942,3.7442) -- (13.7186,3.745) -- (13.743,3.7459) -- (13.7674,3.7467) -- (13.7918,3.7475);
\filldraw[fill=ERougeOrange,draw=ERougeOrange,line width=0.42pt] (8.0458,3.5229) circle[radius=0.068];
\filldraw[fill=ERougeOrange,draw=ERougeOrange,line width=0.42pt] (10.4004,3.6226) circle[radius=0.068];
\filldraw[fill=ERougeOrange,draw=ERougeOrange,line width=0.42pt] (11.8859,3.818) circle[radius=0.068];
\filldraw[fill=ERougeOrange,draw=ERougeOrange,line width=0.42pt] (13.7019,3.6594) circle[radius=0.068];
\draw[PRougeBlue,line width=1.45pt] (7.9583,2.8628) -- (7.9827,2.861) -- (8.0071,2.8592) -- (8.0315,2.8574) -- (8.056,2.8556) -- (8.0804,2.8538) -- (8.1048,2.852) -- (8.1292,2.8502) -- (8.1536,2.8484) -- (8.178,2.8466) -- (8.2024,2.8448) -- (8.2268,2.843) -- (8.2512,2.8412) -- (8.2756,2.8394) -- (8.3,2.8375) -- (8.3244,2.8357) -- (8.3488,2.8339) -- (8.3733,2.8321) -- (8.3977,2.8303) -- (8.4221,2.8285) -- (8.4465,2.8267) -- (8.4709,2.8249) -- (8.4953,2.8231) -- (8.5197,2.8213) -- (8.5441,2.8195) -- (8.5685,2.8177) -- (8.5929,2.8159) -- (8.6173,2.814) -- (8.6417,2.8122) -- (8.6661,2.8104) -- (8.6906,2.8086) -- (8.715,2.8068) -- (8.7394,2.805) -- (8.7638,2.8032) -- (8.7882,2.8014) -- (8.8126,2.7996) -- (8.837,2.7978) -- (8.8614,2.796) -- (8.8858,2.7942) -- (8.9102,2.7924) -- (8.9346,2.7906) -- (8.959,2.7887) -- (8.9834,2.7869) -- (9.0079,2.7851) -- (9.0323,2.7833) -- (9.0567,2.7815) -- (9.0811,2.7797) -- (9.1055,2.7779) -- (9.1299,2.7761) -- (9.1543,2.7743) -- (9.1787,2.7725) -- (9.2031,2.7707) -- (9.2275,2.7689) -- (9.2519,2.7671) -- (9.2763,2.7652) -- (9.3008,2.7634) -- (9.3252,2.7616) -- (9.3496,2.7598) -- (9.374,2.758) -- (9.3984,2.7562) -- (9.4228,2.7544) -- (9.4472,2.7526) -- (9.4716,2.7508) -- (9.496,2.749) -- (9.5204,2.7472) -- (9.5448,2.7454) -- (9.5692,2.7436) -- (9.5936,2.7417) -- (9.6181,2.7399) -- (9.6425,2.7381) -- (9.6669,2.7363) -- (9.6913,2.7345) -- (9.7157,2.7327) -- (9.7401,2.7309) -- (9.7645,2.7291) -- (9.7889,2.7273) -- (9.8133,2.7255) -- (9.8377,2.7237) -- (9.8621,2.7219) -- (9.8865,2.7201) -- (9.9109,2.7183) -- (9.9354,2.7164) -- (9.9598,2.7146) -- (9.9842,2.7128) -- (10.0086,2.711) -- (10.033,2.7092) -- (10.0574,2.7074) -- (10.0818,2.7056) -- (10.1062,2.7038) -- (10.1306,2.702) -- (10.155,2.7002) -- (10.1794,2.6984) -- (10.2038,2.6966) -- (10.2282,2.6948) -- (10.2527,2.6929) -- (10.2771,2.6911) -- (10.3015,2.6893) -- (10.3259,2.6875) -- (10.3503,2.6857) -- (10.3747,2.6839) -- (10.3991,2.6821) -- (10.4235,2.6803) -- (10.4479,2.6785) -- (10.4723,2.6767) -- (10.4967,2.6749) -- (10.5211,2.6731) -- (10.5456,2.6713) -- (10.57,2.6694) -- (10.5944,2.6676) -- (10.6188,2.6658) -- (10.6432,2.664) -- (10.6676,2.6622) -- (10.692,2.6604) -- (10.7164,2.6586) -- (10.7408,2.6568) -- (10.7652,2.655) -- (10.7896,2.6532) -- (10.814,2.6514) -- (10.8384,2.6496) -- (10.8629,2.6478) -- (10.8873,2.646) -- (10.9117,2.6441) -- (10.9361,2.6423) -- (10.9605,2.6405) -- (10.9849,2.6387) -- (11.0093,2.6369) -- (11.0337,2.6351) -- (11.0581,2.6333) -- (11.0825,2.6315) -- (11.1069,2.6297) -- (11.1313,2.6279) -- (11.1557,2.6261) -- (11.1802,2.6243) -- (11.2046,2.6225) -- (11.229,2.6206) -- (11.2534,2.6188) -- (11.2778,2.617) -- (11.3022,2.6152) -- (11.3266,2.6134) -- (11.351,2.6116) -- (11.3754,2.6098) -- (11.3998,2.608) -- (11.4242,2.6062) -- (11.4486,2.6044) -- (11.4731,2.6026) -- (11.4975,2.6008) -- (11.5219,2.599) -- (11.5463,2.5971) -- (11.5707,2.5953) -- (11.5951,2.5935) -- (11.6195,2.5917) -- (11.6439,2.5899) -- (11.6683,2.5881) -- (11.6927,2.5863) -- (11.7171,2.5845) -- (11.7415,2.5827) -- (11.7659,2.5809) -- (11.7904,2.5791) -- (11.8148,2.5773) -- (11.8392,2.5755) -- (11.8636,2.5737) -- (11.888,2.5718) -- (11.9124,2.57) -- (11.9368,2.5682) -- (11.9612,2.5664) -- (11.9856,2.5646) -- (12.01,2.5628) -- (12.0344,2.561) -- (12.0588,2.5592) -- (12.0832,2.5574) -- (12.1077,2.5556) -- (12.1321,2.5538) -- (12.1565,2.552) -- (12.1809,2.5502) -- (12.2053,2.5483) -- (12.2297,2.5465) -- (12.2541,2.5447) -- (12.2785,2.5429) -- (12.3029,2.5411) -- (12.3273,2.5393) -- (12.3517,2.5375) -- (12.3761,2.5357) -- (12.4005,2.5339) -- (12.425,2.5321) -- (12.4494,2.5303) -- (12.4738,2.5285) -- (12.4982,2.5267) -- (12.5226,2.5249) -- (12.547,2.523) -- (12.5714,2.5212) -- (12.5958,2.5194) -- (12.6202,2.5176) -- (12.6446,2.5158) -- (12.669,2.514) -- (12.6934,2.5122) -- (12.7179,2.5104) -- (12.7423,2.5086) -- (12.7667,2.5068) -- (12.7911,2.505) -- (12.8155,2.5032) -- (12.8399,2.5014) -- (12.8643,2.4995) -- (12.8887,2.4977) -- (12.9131,2.4959) -- (12.9375,2.4941) -- (12.9619,2.4923) -- (12.9863,2.4905) -- (13.0107,2.4887) -- (13.0352,2.4869) -- (13.0596,2.4851) -- (13.084,2.4833) -- (13.1084,2.4815) -- (13.1328,2.4797) -- (13.1572,2.4779) -- (13.1816,2.476) -- (13.206,2.4742) -- (13.2304,2.4724) -- (13.2548,2.4706) -- (13.2792,2.4688) -- (13.3036,2.467) -- (13.328,2.4652) -- (13.3525,2.4634) -- (13.3769,2.4616) -- (13.4013,2.4598) -- (13.4257,2.458) -- (13.4501,2.4562) -- (13.4745,2.4544) -- (13.4989,2.4526) -- (13.5233,2.4507) -- (13.5477,2.4489) -- (13.5721,2.4471) -- (13.5965,2.4453) -- (13.6209,2.4435) -- (13.6453,2.4417) -- (13.6698,2.4399) -- (13.6942,2.4381) -- (13.7186,2.4363) -- (13.743,2.4345) -- (13.7674,2.4327) -- (13.7918,2.4309);
\filldraw[fill=PRougeBlue,draw=PRougeBlue,line width=0.42pt] (7.9808,2.6783) rectangle (8.1108,2.8083);
\filldraw[fill=PRougeBlue,draw=PRougeBlue,line width=0.42pt] (10.3354,2.632) rectangle (10.4654,2.762);
\filldraw[fill=PRougeBlue,draw=PRougeBlue,line width=0.42pt] (11.8209,2.8319) rectangle (11.9509,2.9619);
\filldraw[fill=PRougeBlue,draw=PRougeBlue,line width=0.42pt] (13.6369,2.1457) rectangle (13.7669,2.2757);
\draw[MUCream,line width=1.45pt] (7.9583,3.6414) -- (7.9827,3.6483) -- (8.0071,3.6552) -- (8.0315,3.6621) -- (8.056,3.669) -- (8.0804,3.6759) -- (8.1048,3.6828) -- (8.1292,3.6897) -- (8.1536,3.6966) -- (8.178,3.7035) -- (8.2024,3.7104) -- (8.2268,3.7173) -- (8.2512,3.7242) -- (8.2756,3.7311) -- (8.3,3.738) -- (8.3244,3.745) -- (8.3488,3.7519) -- (8.3733,3.7588) -- (8.3977,3.7657) -- (8.4221,3.7726) -- (8.4465,3.7795) -- (8.4709,3.7864) -- (8.4953,3.7933) -- (8.5197,3.8002) -- (8.5441,3.8071) -- (8.5685,3.814) -- (8.5929,3.8209) -- (8.6173,3.8278) -- (8.6417,3.8347) -- (8.6661,3.8416) -- (8.6906,3.8486) -- (8.715,3.8555) -- (8.7394,3.8624) -- (8.7638,3.8693) -- (8.7882,3.8762) -- (8.8126,3.8831) -- (8.837,3.89) -- (8.8614,3.8969) -- (8.8858,3.9038) -- (8.9102,3.9107) -- (8.9346,3.9176) -- (8.959,3.9245) -- (8.9834,3.9314) -- (9.0079,3.9383) -- (9.0323,3.9452) -- (9.0567,3.9521) -- (9.0811,3.9591) -- (9.1055,3.966) -- (9.1299,3.9729) -- (9.1543,3.9798) -- (9.1787,3.9867) -- (9.2031,3.9936) -- (9.2275,4.0005) -- (9.2519,4.0074) -- (9.2763,4.0143) -- (9.3008,4.0212) -- (9.3252,4.0281) -- (9.3496,4.035) -- (9.374,4.0419) -- (9.3984,4.0488) -- (9.4228,4.0557) -- (9.4472,4.0626) -- (9.4716,4.0696) -- (9.496,4.0765) -- (9.5204,4.0834) -- (9.5448,4.0903) -- (9.5692,4.0972) -- (9.5936,4.1041) -- (9.6181,4.111) -- (9.6425,4.1179) -- (9.6669,4.1248) -- (9.6913,4.1317) -- (9.7157,4.1386) -- (9.7401,4.1455) -- (9.7645,4.1524) -- (9.7889,4.1593) -- (9.8133,4.1662) -- (9.8377,4.1731) -- (9.8621,4.1801) -- (9.8865,4.187) -- (9.9109,4.1939) -- (9.9354,4.2008) -- (9.9598,4.2077) -- (9.9842,4.2146) -- (10.0086,4.2215) -- (10.033,4.2284) -- (10.0574,4.2353) -- (10.0818,4.2422) -- (10.1062,4.2491) -- (10.1306,4.256) -- (10.155,4.2629) -- (10.1794,4.2698) -- (10.2038,4.2767) -- (10.2282,4.2837) -- (10.2527,4.2906) -- (10.2771,4.2975) -- (10.3015,4.3044) -- (10.3259,4.3113) -- (10.3503,4.3182) -- (10.3747,4.3251) -- (10.3991,4.332) -- (10.4235,4.3389) -- (10.4479,4.3458) -- (10.4723,4.3527) -- (10.4967,4.3596) -- (10.5211,4.3665) -- (10.5456,4.3734) -- (10.57,4.3803) -- (10.5944,4.3872) -- (10.6188,4.3942) -- (10.6432,4.4011) -- (10.6676,4.408) -- (10.692,4.4149) -- (10.7164,4.4218) -- (10.7408,4.4287) -- (10.7652,4.4356) -- (10.7896,4.4425) -- (10.814,4.4494) -- (10.8384,4.4563) -- (10.8629,4.4632) -- (10.8873,4.4701) -- (10.9117,4.477) -- (10.9361,4.4839) -- (10.9605,4.4908) -- (10.9849,4.4977) -- (11.0093,4.5047) -- (11.0337,4.5116) -- (11.0581,4.5185) -- (11.0825,4.5254) -- (11.1069,4.5323) -- (11.1313,4.5392) -- (11.1557,4.5461) -- (11.1802,4.553) -- (11.2046,4.5599) -- (11.229,4.5668) -- (11.2534,4.5737) -- (11.2778,4.5806) -- (11.3022,4.5875) -- (11.3266,4.5944) -- (11.351,4.6013) -- (11.3754,4.6082) -- (11.3998,4.6152) -- (11.4242,4.6221) -- (11.4486,4.629) -- (11.4731,4.6359) -- (11.4975,4.6428) -- (11.5219,4.6497) -- (11.5463,4.6566) -- (11.5707,4.6635) -- (11.5951,4.6704) -- (11.6195,4.6773) -- (11.6439,4.6842) -- (11.6683,4.6911) -- (11.6927,4.698) -- (11.7171,4.7049) -- (11.7415,4.7118) -- (11.7659,4.7188) -- (11.7904,4.7257) -- (11.8148,4.7326) -- (11.8392,4.7395) -- (11.8636,4.7464) -- (11.888,4.7533) -- (11.9124,4.7602) -- (11.9368,4.7671) -- (11.9612,4.774) -- (11.9856,4.7809) -- (12.01,4.7878) -- (12.0344,4.7947) -- (12.0588,4.8016) -- (12.0832,4.8085) -- (12.1077,4.8154) -- (12.1321,4.8223) -- (12.1565,4.8293) -- (12.1809,4.8362) -- (12.2053,4.8431) -- (12.2297,4.85) -- (12.2541,4.8569) -- (12.2785,4.8638) -- (12.3029,4.8707) -- (12.3273,4.8776) -- (12.3517,4.8845) -- (12.3761,4.8914) -- (12.4005,4.8983) -- (12.425,4.9052) -- (12.4494,4.9121) -- (12.4738,4.919) -- (12.4982,4.9259) -- (12.5226,4.9328) -- (12.547,4.9398) -- (12.5714,4.9467) -- (12.5958,4.9536) -- (12.6202,4.9605) -- (12.6446,4.9674) -- (12.669,4.9743) -- (12.6934,4.9812) -- (12.7179,4.9881) -- (12.7423,4.995) -- (12.7667,5.0019) -- (12.7911,5.0088) -- (12.8155,5.0157) -- (12.8399,5.0226) -- (12.8643,5.0295) -- (12.8887,5.0364) -- (12.9131,5.0433) -- (12.9375,5.0503) -- (12.9619,5.0572) -- (12.9863,5.0641) -- (13.0107,5.071) -- (13.0352,5.0779) -- (13.0596,5.0848) -- (13.084,5.0917) -- (13.1084,5.0986) -- (13.1328,5.1055) -- (13.1572,5.1124) -- (13.1816,5.1193) -- (13.206,5.1262) -- (13.2304,5.1331) -- (13.2548,5.14) -- (13.2792,5.1469) -- (13.3036,5.1539) -- (13.328,5.1608) -- (13.3525,5.1677) -- (13.3769,5.1746) -- (13.4013,5.1815) -- (13.4257,5.1884) -- (13.4501,5.1953) -- (13.4745,5.2022) -- (13.4989,5.2091) -- (13.5233,5.216) -- (13.5477,5.2229) -- (13.5721,5.2298) -- (13.5965,5.2367) -- (13.6209,5.2436) -- (13.6453,5.2505) -- (13.6698,5.2574) -- (13.6942,5.2644) -- (13.7186,5.2713) -- (13.743,5.2782) -- (13.7674,5.2851) -- (13.7918,5.292);
\filldraw[fill=MUCream,draw=MUCream,line width=0.42pt] (8.0458,3.8575) -- (8.1278,3.7055) -- (7.9638,3.7055) -- cycle;
\filldraw[fill=MUCream,draw=MUCream,line width=0.42pt] (10.4004,4.2419) -- (10.4824,4.0899) -- (10.3184,4.0899) -- cycle;
\filldraw[fill=MUCream,draw=MUCream,line width=0.42pt] (11.8859,4.8074) -- (11.9679,4.6554) -- (11.8039,4.6554) -- cycle;
\filldraw[fill=MUCream,draw=MUCream,line width=0.42pt] (13.7019,5.4388) -- (13.7839,5.2868) -- (13.6199,5.2868) -- cycle;
\draw[AxisDark,line width=1.55pt] (7.82,1) -- (13.92,1);
\draw[AxisDark,line width=1.55pt] (7.82,1) -- (7.82,6.15);
\draw[AxisDark,line width=1.25pt] (8.0458,1) -- (8.0458,0.89);
\node[anchor=north,font=\large] at (8.0458,0.82) {0.5B};
\draw[AxisDark,line width=1.25pt] (10.4004,1) -- (10.4004,0.89);
\node[anchor=north,font=\large] at (10.4004,0.82) {1.5B};
\draw[AxisDark,line width=1.25pt] (11.8859,1) -- (11.8859,0.89);
\node[anchor=north,font=\large] at (11.8859,0.82) {3B};
\draw[AxisDark,line width=1.25pt] (13.7019,1) -- (13.7019,0.89);
\node[anchor=north,font=\large] at (13.7019,0.82) {7B};
\draw[AxisDark,line width=1.25pt] (7.82,1) -- (7.71,1);
\draw[AxisDark,line width=1.25pt] (7.82,2.5642) -- (7.71,2.5642);
\draw[AxisDark,line width=1.25pt] (7.82,3.6739) -- (7.71,3.6739);
\draw[AxisDark,line width=1.25pt] (7.82,5.2381) -- (7.71,5.2381);
\node[anchor=north,font=\large] at (10.87,0.38) {Model size};
\node[anchor=south,font=\Large] at (10.87,6.4) {MASC};
\node[rotate=90,anchor=south,font=\large] at (0.22,3.575) {Raw score};
\node[anchor=center,font=\large] at (7.46,-0.42) {\begin{tabular}{@{}c@{\hspace{0.86cm}}c@{\hspace{0.86cm}}c@{}}\textcolor{ERougeOrange}{\rule[0.45ex]{0.58cm}{1.45pt}}\;E-ROUGE & \textcolor{PRougeBlue}{\rule[0.45ex]{0.58cm}{1.45pt}}\;P-ROUGE & \textcolor{MUCream}{\rule[0.45ex]{0.58cm}{1.45pt}}\;MU on $\mathcal{D}_r$\end{tabular}};
\end{tikzpicture}